\documentclass[letterpaper,journal]{IEEEtran}
\usepackage{graphicx}
\usepackage{algorithm}
\usepackage{algpseudocode}
\usepackage[dvipsnames]{xcolor}
\usepackage{amsthm}
\usepackage{amsmath}
\usepackage{amssymb}
\usepackage{booktabs}
\usepackage{multirow}
\usepackage{multicol}
\usepackage[caption=false,font=footnotesize]{subfig}
\usepackage{adjustbox}
\usepackage{booktabs, nicematrix}
\usepackage[acronym]{glossaries}
\usepackage[bookmarks=true]{hyperref}
\usepackage[capitalize]{cleveref}
\usepackage{cite}
\usepackage{ifthen}
\newboolean{anonymous}
\setboolean{anonymous}{false}
\newcommand{\tup}[1]{\left( #1\right)}
\newcommand{\soc}{\mathrm{SoC}}

\newcommand{\TableCaptionSkip}{0pt}
\newcommand{\TableVerticalStretch}{1}

\theoremstyle{definition}
\newtheorem{definition}{Definition}[section]

\newtheorem{remark}{Remark}
\newtheorem{proposition}{Proposition}

\DeclareMathOperator*{\argmin}{arg\,min}
\DeclareMathOperator*{\argmax}{arg\,max}

\newacronym{acr:mapf}{MAPF}{Multi-Agent Path Finding}
\newacronym{acr:cbs}{CBS}{Conflict-Based Search}
\newacronym[plural=Mixed Integer Programs,shortplural=MIPs]{acr:mip}{MIP}{Mixed Integer Programming}
\newacronym{acr:bnb}{B\&B}{Branch and Bound}
\newacronym{acr:bfs}{BFS}{Best-First Search}
\newacronym{acr:dfs}{DFS}{Depth-First Search}
\newacronym{acr:id}{ID}{Iterative Deepening}
\newacronym{acr:lp}{LP}{Linear Program}
\newacronym{acr:ct}{CT}{Constraint Tree}
\newacronym{acr:mcd}{MC-DIVE}{Memory-Constrained DIVE}
\newacronym{acr:pibt}{PIBT}{Priority Inheritance with Backtracking}
\newacronym{acr:dive}{DIVE}{Dual-Informed Vertical Expansion}
\newacronym{acr:idcbs}{IDCBS}{Iterative Deepening CBS}

\usepackage[markup=underlined]{changes} %
\definechangesauthor[name=Jiarui, color=purple]{Jiarui}
\definechangesauthor[name=Willem, color=orange]{Willem}
\definechangesauthor[name=Meshal, color=magenta]{Meshal}

\title{Dual-Informed Vertical Expansion for \\ Multi-Objective Node Selection in Anytime Conflict-Based Search}

\ifthenelse{\boolean{anonymous}}{%
  \author{Anonymous Authors}%
}{
\author{
Willem van Osselaer, Jiarui Li, Meshal Alharbi, and Gioele Zardini
\thanks{The authors are with the Laboratory for Information and Decision Systems, Massachusetts Institute of Technology, Cambridge, MA, USA (e-mails: \{willemvo, jiarui01, meshal, gzardini\}@mit.edu).}
\thanks{This work was supported by Prof. Zardini's grant from the MIT Amazon Science Hub, hosted in the MIT Schwarzman College of Computing, and the MIT Maritime Consortium.
Van Osselaer was supported by the Air Force Office of Scientific Research under award number FA9550-25-C-B010 in the amount of \$135,030 through the National Defense Science \& Engineering Graduate (NDSEG) Fellowship Program.
}
}
}

\begin{document}

\maketitle

\begin{abstract}
Conflict-Based Search (CBS) is a leading exact algorithm for Multi-Agent Path Finding (MAPF), but its high-level node-selection rule is usually treated as a fixed implementation detail. 
Standard best-first selection is strong for minimizing expanded nodes and closing the optimality certificate, yet it can maintain a large frontier, interrupt parent-child expansion sequences, and provide no feasible incumbent until termination. 
This paper studies node selection as a first-class design choice for exact CBS. 
We introduce \gls{acr:dive}, a policy that is best-bound between dives and depth-oriented within a dive. 
\gls{acr:dive} starts each dive from the current best-bound frontier, follows promising children to exploit parent-child locality, and uses incumbent pruning to limit unproductive excursions.
We formalize CBS node selection through a branch-and-bound view, prove that the traversal policy can be changed without affecting exactness, and analyze the resulting trade-offs among expanded nodes, dive breaks, queue size, and primal-dual bound progress.
The analysis predicts three complementary extremes. 
Best-first search is node efficient, iterative deepening is memory efficient, and \gls{acr:dive} is dive efficient while retaining regular best-bound reanchoring. Experiments on standard MAPF benchmarks support this trade-off map. 
\gls{acr:dive} consistently reduces dive breaks, provides early incumbents with certified gaps, uses substantially less queue memory than best-first search, and benefits from warm starts and simple responsive variants in dense or memory-limited regimes.
\end{abstract}

\section{Introduction}

\IEEEPARstart{C}{oordinating} many robots through a shared environment is a core problem in robotics, with applications ranging from warehouse automation to airport surface operations and other graph-structured routing domains~\cite{MAPFBenchmarks_stern2019,airplaneTaxi_morris2016}. 
\Gls{acr:mapf} captures this problem through a compact abstraction. 
Each agent must move from a start vertex to a goal vertex, agents may not collide, and the objective is to minimize a global cost such as the sum of arrival times. 
The abstraction is simple, but optimal routing is computationally hard~\cite{NPHardProof_yu2023}. 
For robotic systems that require certificates of optimality, predictable resource use, or meaningful behavior under time limits, the internal search policy of the solver becomes as important as its final guarantee. 

\begin{figure}[tb]
\centering
\includegraphics[width=1\linewidth]{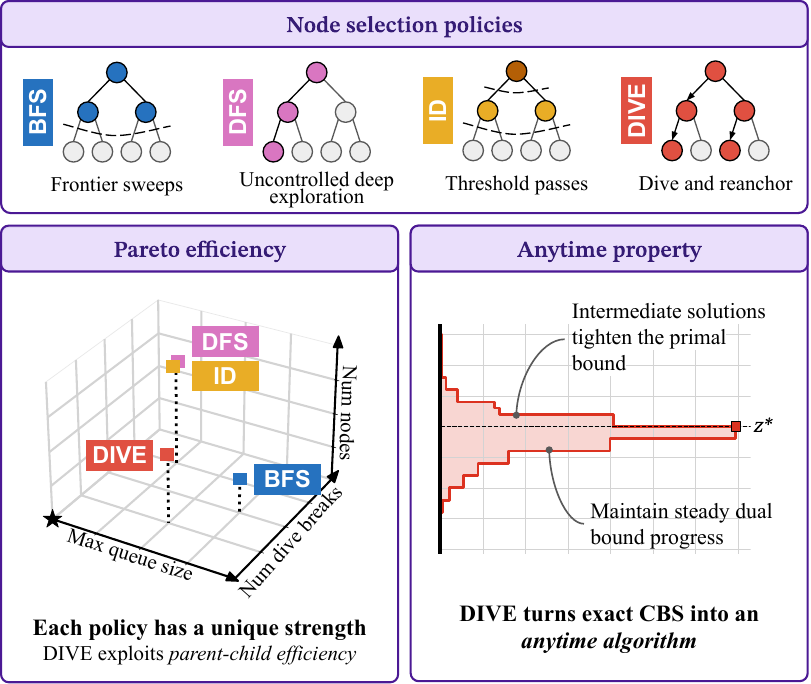}
\caption{Overview of the node-selection perspective developed in this paper. 
Top: \acrshort{acr:bfs} expands the best-bound frontier, \acrshort{acr:dfs} follows deep branches without reanchoring, \acrshort{acr:id} performs repeated threshold-limited passes, and DIVE combines depth-oriented dives with best-bound reanchoring. 
Bottom left: these policies occupy different regions of the multi-objective design space induced by expanded nodes, dive breaks, and queue size. 
Bottom right: by entering the primal zone in a controlled way while repeatedly returning to the best-bound frontier, DIVE can produce intermediate feasible solutions and maintain dual-bound progress, giving exact \acrshort{acr:cbs} an anytime mode.}
\label{fig:teaser}
\end{figure}

\Gls{acr:cbs} is one of the most widely used exact approaches for optimal \gls{acr:mapf}~\cite{OGCBS_sharon2012}, and decomposes the problem into a high-level search over a \gls{acr:ct}. 
Each node $n$ represents a relaxed subproblem with additional constraints on individual agents and a lower-bound key~$\ell(n)$. 
Expanding a node either produces a conflict-free solution or creates child subproblems that forbid one side of a selected conflict. 
The standard policy is to expand the open node with smallest~$\ell(n)$, and this best-first rule is so closely associated with \gls{acr:cbs} that it is often treated as part of the algorithm itself~\cite{OGCBS_sharon2012}.

In this paper, we take a different view.
We treat high-level node selection as a modular policy that shapes the behavior of exact \gls{acr:cbs} without changing the low-level search, branching rule, admissible heuristic, or pruning logic. 
The choice of node-selection policy controls several competing quantities. 
Specifically, it affects how many nodes are expanded, how quickly the dual bound moves toward the optimality certificate, how many open nodes must be stored, how often the search jumps away from a parent-child sequence, and whether feasible solutions appear before the algorithm has certified optimality.
These quantities trade off against one another rather than improving together, and the right balance is set by the robot's deployment context, including onboard memory limits, the real-time window available for replanning, and whether the system must act on a feasible plan before optimality is certified.
\Cref{fig:teaser} previews this multi-objective design space and the distinct operating point that \gls{acr:dive} targets.

The usual best-first policy is excellent for expanded-node count and dual-bound progress, but it has two important drawbacks: i) it can store a large frontier, and ii) it usually provides no incumbent solution until termination. 
It also jumps across the frontier, limiting opportunities to exploit the fact that a child \gls{acr:cbs} node differs from its parent by only one added constraint. 
\gls{acr:idcbs} addresses the frontier-size problem by using repeated depth-first passes under increasing thresholds~\cite{ID_boyarski2020}. 
While this approach greatly reduces explicit queue memory, it re-expands nodes across passes and still does not provide intermediate feasible solutions. 
Most other exact-CBS improvements refine the best-first key rather than changing this broader trade-off profile~\cite{CBSH_Felner2018,AdmissibleHeuisticImprovments_li2019}.

We propose \gls{acr:dive}, a high-level node-selection policy designed to occupy the missing middle ground. 
\gls{acr:dive} is best-bound between dives and depth-oriented within each dive. 
A new dive starts from the globally best-bound open node, and the dive then follows promising children while they can still improve the incumbent. 
When the dive terminates, the search returns to the best-bound frontier. 
This simple structure preserves regular progress toward the optimality certificate, while exposing feasible incumbents earlier, improving parent-child continuity, and keeping a smaller frontier than standard best-first search.

The paper also argues for a broader perspective on node selection in \gls{acr:cbs}. 
Exactness and traversal order should be separated formally. 
Policy quality should then be evaluated as a multi-objective trade-off across expanded nodes, parent-child continuity, memory, incumbent discovery, and certificate progress, rather than through the number of expanded nodes alone.
Because these objectives compete rather than improve together, the policies of interest form a Pareto front rather than admitting a single universally best rule, and \gls{acr:dive} is best understood as a previously missing point on that front.
This view becomes especially consequential when \gls{acr:mapf} serves as a routing subroutine inside a larger optimization or control system, where intermediate trade-offs can directly shape outer-loop decisions before optimality is certified \cite{CBSInLargerSystemExample_brown2020, gaber2026grand}.
It also motivates adaptive node selection, since the right balance between memory, incumbent quality, and certificate progress depends on both the instance and the surrounding operating context.

\paragraph*{Statement of contribution}
The paper makes four contributions. 
First, it formulates high-level \gls{acr:cbs} node selection as an explicit policy and proves, through a branch-and-bound abstraction, that changing this policy preserves exactness when standard pruning conditions are maintained. 
Second, it introduces \gls{acr:dive} and identifies the two mechanisms that make it robust, namely incumbent cutoffs and diversified best-bound restarts. 
Third, it develops a search-tree framework that compares best-first search, depth-first search, iterative deepening, and \gls{acr:dive} across expanded nodes, dive breaks, queue size, and primal-dual bound progress. 
Fourth, it evaluates these predictions on standard \gls{acr:mapf} benchmarks and ablations, showing that \gls{acr:dive} provides a distinct Pareto point with far fewer dive breaks, substantially smaller queues than best-first search, and early incumbents with certified gaps.

\paragraph*{Paper structure}
\Cref{sec:BackgroundAndRelatedWork} fixes the \gls{acr:mapf} and \gls{acr:cbs} notation and positions node selection relative to existing exact-\gls{acr:cbs} variants and \gls{acr:bnb}.
\Cref{sec:Dive} introduces node selection as a modular policy and defines \gls{acr:dive} together with its two governing mechanisms, incumbent cutoffs and diversified best-bound restarts.
\Cref{sec:FormalizedNodeSelection} develops the node-selection framework, i.e., it proves that traversal order is independent of exactness, defines the search-tree model and the five competing objectives, and introduces responsive node selection.
\Cref{sec:TechnicalPolicyAnalysis} uses this framework to compare \gls{acr:bfs}, \gls{acr:dfs}, \gls{acr:id}, and \gls{acr:dive}, locating the structural extreme that each policy occupies.
Finally, \Cref{sec:Experiments} validates the predicted trade-offs on standard \gls{acr:mapf} benchmarks, and \cref{sec:Conclusions} concludes.

\section{Background and Related Work}
\label{sec:BackgroundAndRelatedWork}

This section fixes the \gls{acr:mapf} notation used throughout the paper, reviews \gls{acr:cbs}, and places our node-selection focus relative to existing \gls{acr:cbs} variants and branch-and-bound methods.

\subsection{MAPF Formalization}
\label{sec:Background_MAPFFormalization}

We consider optimal \gls{acr:mapf} on a directed reflexive graph, where the reflexive edges represent wait actions. 
A \gls{acr:mapf} instance specifies a start and a goal vertex for each agent. 
The task is to compute one trajectory per agent such that agents do not collide and the sum of costs is minimized~\cite{MAPFBenchmarks_stern2019,li2025fico}.

\begin{definition}[MAPF instance]
A \emph{\gls{acr:mapf} instance} is a tuple~$\mathcal{I}=(G,A,\rho_s,\rho_g)$, where~$G=\tup{V,E}$ is a directed reflexive graph,~$A$ is a finite set of agents, and~$\rho_s,\rho_g:A\to V$ map each agent to its start and goal vertex, respectively.
\end{definition}

\begin{definition}[Trajectory and cost]
For an agent~$a\in A$, a \emph{trajectory} is a finite sequence~$\tau_a=\tup{v^a_0,\ldots,v^a_{T_a}}$, such that~$v^a_0=\rho_s(a)$,~$v^a_{T_a}=\rho_g(a)$, and~$(v^a_t,v^a_{t+1})\in E$ for all $t=0,\ldots,T_a-1$. 
After reaching its goal, the agent is assumed to remain there, so~$v^a_t:=v^a_{T_a}$ for all~$t>T_a$. The \emph{cost} of~$\tau_a$ is its arrival time~$c(\tau_a):=T_a.$
\end{definition}

\begin{definition}[Conflict]
Given two agents~$a,b\in A$ and time~$t\geq 0$, a \emph{vertex conflict} occurs if~$v^a_t=v^b_t$. 
An \emph{edge conflict} occurs if~$v^a_t=v^b_{t+1}$ and~$v^b_t=v^a_{t+1}$.
\end{definition}

\begin{definition}[MAPF solution]
\label{def:mapfSolution}
A \emph{\gls{acr:mapf} solution} is a set of trajectories~$\{\tau_a\}_{a\in A}$ with no vertex or edge conflicts. 
Throughout the paper, we consider the standard sum-of-costs objective
\begin{equation*}
    \soc\big(\{\tau_a\}_{a\in A}\big)=\sum_{a\in A} c(\tau_a).
\end{equation*}
\end{definition}

A set of individually valid trajectories that still contains conflicts will be referred to as a \emph{joint plan}. 

This is the relaxed object manipulated by \gls{acr:cbs} before outputting conflict-free optimal trajectories.

\subsection{Conflict-Based Search}

\gls{acr:cbs} solves \gls{acr:mapf} through a high-level search over a \gls{acr:ct}~\cite{OGCBS_sharon2012}. 
Each \gls{acr:ct} node~$n$ stores a set of agent-specific space-time constraints, one trajectory per agent satisfying those constraints, and a lower bound key~$\ell(n)$ on the best conflict-free objective attainable below that node.
The root node has no additional constraints.
In vanilla CBS,~$\ell(n)$ is the sum of the individually optimal path costs at~$n$.

To expand a \gls{acr:ct} node, a low-level search computes optimal single-agent trajectories consistent with that node's constraints, typically using A*. 
If the resulting joint plan is conflict-free, the node yields a feasible \gls{acr:mapf} solution. 
Otherwise, \gls{acr:cbs} selects one conflict and branches on it.
For a vertex conflict~$\tup{a,b,v,t}$, the two children respectively add constraints forbidding agent~$a$ or agent~$b$ from occupying vertex~$v$ at time~$t$. 
For an edge conflict, the two children analogously forbid one of the two conflicting traversals. 

A large body of work improves \gls{acr:cbs} through better conflict selection, bypassing, admissible high-level heuristics, finite-horizon variants, and bounded-suboptimal variants~\cite{CBSH_Felner2018,AdmissibleHeuisticImprovments_li2019, ICBS_boyarski2015,suboptCBS_barer2021,EECBS_li2020,ACCBS_li26,li2026certificate}. 
\gls{acr:cbs} has also been extended to settings that couple routing with other decisions, including configurable environments and task allocation~\cite{CMAPF_bellusci2020,taskCBS_zhou2026}. 
These developments change how nodes are evaluated, generated, or solved. 
The focus of this paper is orthogonal. 
We study which open high-level node should be expanded next.

\subsection{Node Selection in CBS}

\subsubsection{Best-first search and admissible high-level heuristics}
Standard optimal \gls{acr:cbs} uses \gls{acr:bfs}, selecting an open node with minimum lower-bound key~$\ell(n)$~\cite{OGCBS_sharon2012}. 
This choice is natural for exact search because it prioritizes nodes that can still improve the optimality certificate. 
It also supports the classical \gls{acr:cbs} proof, where the first conflict-free node selected by best-bound order is optimal \cite{OGCBS_sharon2012}.

Many exact-CBS improvements strengthen this best-first paradigm rather than replacing it. 
Admissible high-level heuristics tighten the key~$\ell(n)$ beyond the raw sum of individual path costs~\cite{CBSH_Felner2018}. 
Subsequent heuristics based on conflict structure have produced major practical gains~\cite{AdmissibleHeuisticImprovments_li2019}. 
These methods improve the quality of the best-first key, but the search remains fundamentally best-bound. 
As a result, they do not directly address the memory, parent-child locality, or incumbent-discovery issues studied here.

\subsubsection{Parent-child efficiency and iterative deepening}
A child \gls{acr:cbs} node differs from its parent by a single additional constraint, and only the newly constrained agent must be replanned. 
Importantly, this creates an opportunity for parent-child efficiency. 
If a child is processed immediately after its parent, implementation state from the parent can often be reused more effectively than when the search jumps to an unrelated node~\cite{ID_boyarski2020}. 
Node selection therefore affects runtime not only through the number of expanded nodes, but also through the continuity of parent-child expansion chains.

\gls{acr:idcbs} is the main exact-\gls{acr:cbs} alternative to standard best-first node selection~\cite{ID_boyarski2020}. 
It performs repeated depth-first passes under a nondecreasing cost threshold and restarts from the root when the threshold increases. 
This gives a small explicit frontier and preserves depth-first behavior within each pass. 
However, the policy discovers \emph{new} nodes in best-first order, performs repeated work across thresholds, and does not seek feasible incumbents before the threshold reaches them. 
Our analysis and experiments compare \gls{acr:dive} directly with this memory-oriented baseline.

\subsubsection{Suboptimal variants and learned node selection}
\label{sec:PreviousWork_SuboptimalVariants}
Node selection has also been studied in suboptimal \gls{acr:cbs} solvers. 
Greedy \gls{acr:cbs} selects nodes using non-admissible measures such as the number of conflicts, while bounded \gls{acr:cbs} and its descendants use focal search to explore promising nodes within a bounded suboptimality range~\cite{suboptCBS_barer2021,EECBS_li2020}. 
Learning-based node selection has been explored in bounded-suboptimal \gls{acr:cbs} as well~\cite{MLNodeSlctnEECBS_huang2021}. 
These methods are important, but they answer a different question. 
They trade exact best-bound ordering for speed or bounded-suboptimal performance. 
This paper asks how much freedom remains in node selection while preserving exact optimal \gls{acr:cbs}.

Fast suboptimal \gls{acr:mapf} methods are also relevant as sources of feasible warm starts. 
In our experiments, we use \gls{acr:pibt} to initialize an incumbent in dense regimes~\cite{OGPIBT_okumura2019}. 
The warm-start solution does not replace exact search. 
It only provides an initial primal bound that can activate valid incumbent pruning.

\subsection{Connection to Branch and Bound}
\gls{acr:cbs} has the same high-level structure as a \gls{acr:bnb} algorithm~\cite{B&BSurvey_lawler1966}. 
A node represents a relaxed subproblem, the node key is a lower bound, branching refines the relaxation, and an incumbent gives a primal bound. 
This connection is useful because node selection has long been treated as a major design dimension in branch-and-bound solvers \cite{SCIPThesis_achterberg2007}.

The contrast with \gls{acr:mip} is especially instructive. 
\gls{acr:mip} solvers routinely mix best-bound, depth-oriented, and responsive strategies, and they expose parameters that shift effort between finding feasible solutions and proving optimality~\cite{SCIPThesis_achterberg2007, 
gurobi_mipfocus_doc, cplex_mipemphasis_doc}. 
Hybrid best-first strategies with plunging have also been studied outside \gls{acr:mapf}, for example, in weighted constraint-satisfaction problems~\cite{HBFS_allouche2015}. 
\gls{acr:dive} adapts this general idea to exact \gls{acr:cbs}, where parent-child locality, incumbent discovery, and certificate progress have a \gls{acr:mapf}-specific interpretation.

\section{DIVE}
\label{sec:Dive}
We have identified two dominant high-level traversal regimes for \gls{acr:cbs}. 
\gls{acr:bfs} expands nodes in best-bound order, which is favorable for expanded-node count and dual-bound progress, but it tends to maintain a large frontier and provides no incumbent before termination. 
\gls{acr:id} greatly reduces the frontier size by using repeated depth-first passes, but it pays for this memory reduction through repeated work and still does not provide intermediate feasible solutions. 
\gls{acr:dive} is designed to occupy the missing middle ground: it preserves the best-bound frontier between dives while exploiting parent-child continuity and exposing feasible incumbents during the solve.

This section first introduces node selection as a modular component of \gls{acr:cbs}.
It then defines the \gls{acr:dive} policy and summarizes the mechanisms responsible for its trade-off profile.
The formal correctness argument and the technical comparison with existing policies are deferred to \cref{sec:FormalizedNodeSelection,sec:TechnicalPolicyAnalysis}.

\subsection{Node Selection as a Modular Policy}

\begin{algorithm}[tb]
\begin{minipage}{\linewidth}
\caption{\gls{acr:cbs} with node-selection policy $\pi$}
{\small{Blue lines collect information that is specific to DIVE.}}
\smallskip
\label{alg:CBS}
\begin{algorithmic}[1]
\State $\mathrm{OPEN} \gets \{n_0\}$, initialize queue with the root node
\State \textcolor{RoyalBlue}{$z^P \gets \infty$ or warm-start value \Comment{e.g. from \gls{acr:pibt} solution}}
\State $n^{\mathrm{inc}} \gets$ \textbf{none} \textcolor{RoyalBlue}{or a warm-start solution}
\State \textcolor{RoyalBlue}{$n^{\mathrm{last}}\gets\varnothing$}
\While{$\mathrm{OPEN}\ne\varnothing$}
    \State $n\gets \pi(\mathrm{OPEN}\textcolor{RoyalBlue}{,n^{\mathrm{last}},z^{\mathrm P}})$\footnotemark  \Comment{$\pi$: node selection policy.}

    \State remove $n$ from $\mathrm{OPEN}$
    \State \textcolor{RoyalBlue}{$n^{\mathrm{last}}\gets n$}
    \If{$n$ is infeasible \textcolor{RoyalBlue}{\textbf{or} $\ell(n)\ge z^{\mathrm P}$}}
        \State \textbf{continue}
    \EndIf
    \If{$n$ is conflict-free}
        \State \textcolor{RoyalBlue}{$z^{\mathrm P}\gets \ell(n)$}
        \State $n^{\mathrm{inc}}\gets n$
        \State \textbf{continue}
    \EndIf
    \State branch on a selected conflict \& generate children of $n$
    \State add each feasible child $c$ \textcolor{RoyalBlue}{with $\ell(c)<z^{\mathrm P}$} to $\mathrm{OPEN}$
\EndWhile
\State \Return $n^{\mathrm{inc}}$ \Comment{returning $\varnothing$ certifies infeasibility}
\end{algorithmic}
\end{minipage}
\end{algorithm}

A \gls{acr:cbs} solve maintains a set of open constraint-tree nodes, denoted by~$\mathrm{OPEN}$.
At each high-level iteration, the solver removes one node from~$\mathrm{OPEN}$, processes it, and either discards it or replaces it with its expanded children.
In standard optimal \gls{acr:cbs}, this node is selected by best-bound order: the next expanded node has minimum lower-bound key~$\ell(n)$ among the open nodes.
\gls{acr:idcbs} realizes the same best-bound discovery order indirectly, through repeated depth-first searches under increasing cost thresholds~\cite{ID_boyarski2020}.

We separate this traversal choice from the rest of the algorithm.
A high-level node-selection policy~$\pi$ maps the current observable solver state to a node in~$\mathrm{OPEN}$.
\Cref{alg:CBS} shows \gls{acr:cbs} with this policy made explicit. 
The low-level search, conflict selection, branching rule, admissible high-level heuristic, and pruning tests are unchanged; only the order in which open \gls{acr:ct} nodes are processed is delegated to $\pi$.

This relaxation is algorithmically safe under the usual finite-search assumption stated formally in \cref{sec:FormalizedNodeSelection_Correctness}: any policy~$\pi$ that always selects an open node changes only the order in which subproblems are processed. 
The choice of~$\pi$ is nevertheless consequential because it determines the traversal pattern of the \gls{acr:ct}, and therefore the trade-off among expanded nodes, parent-child continuity, frontier size, incumbent discovery, and dual-bound progress.

\begin{remark}[machine learning-driven node selection]
    Our framework of node selection as modular yet algorithmically safe makes it naturally suited for machine-learning driven approaches, particularly with cutoffs as in \cref{prop:DIVE_incumbent_cutoff} as a safety guard.
    While this paper remains focused on analyzing the node selection tradeoff space through its node count, diving, and queue size extremes, other methods may exploit the rest of the design space.
    Policies can be trained online on previous iterations of the same solve, offline on instances within the same distribution, or as part of a reinforcement learning scheme.
    Such approaches are well-studied in the \gls{acr:mip} context \cite{OGMLNodeSelection_he2014,RLNodeSelection_mattick2024,GNNNodeSelection_labassi2022}.
    In CBS, machine learning has been confined to conflict selection \cite{MLConflictSelection_huang2020} and suboptimal variants \cite{MLNodeSlctnEECBS_huang2021}, but detaching node selection from correctness expands this scope.
\end{remark}

\footnotetext{This signature is specific to DIVE. Policies may use any observable information as input.}

\subsection{\texorpdfstring{\gls{acr:dive}}{DIVE} Policy}
\label{sec:Dive_Algorithm}
\gls{acr:dive} is a high-level node-selection policy for optimal \gls{acr:cbs}.

\begin{definition}[Dive]
Given a node expansion sequence~$\tup{n_0,n_1,\ldots}$, a \emph{dive} is a maximal contiguous subsequence~$\tup{n_i,\ldots,n_j}$ such that~$n_{t+1}$ is a child of~$n_t$ for every~$t=i,\ldots,j-1$. 
The first node of the subsequence is referred to as the \emph{dive root}.
\end{definition}

\gls{acr:dive} is \emph{best-bound between dives} and \emph{depth-oriented within a dive}.
Each dive starts from the current best-bound node in the open node queue $\mathrm{OPEN}$, then repeatedly follows the best child in order to exploit parent-child locality. 
The dive is terminated as soon as continuing it becomes unproductive: Either the current node is conflict-free, no feasible child exists, or the node lower bound~$\ell(n)$ can no longer improve the incumbent solution.

\begin{definition}[Incumbent and primal bound]
At iteration~$t$, an \emph{incumbent} is a conflict-free solution found by the search whose objective value is smallest among all conflict-free solutions found up to that iteration. 
Its objective value $z^{\mathrm P}_t$ is used as a \emph{primal bound}.
\begin{equation*}
    z^{\mathrm P}_t =
    \begin{cases}
        \ell(n^{\mathrm{inc}}_t), & \text{if an incumbent } n^{\mathrm{inc}}_t \text{ exists},\\
        +\infty, & \text{otherwise.}
    \end{cases}
\end{equation*}
\end{definition}

\begin{figure*}[tb]
    \centering
    \includegraphics[width=0.85\linewidth]{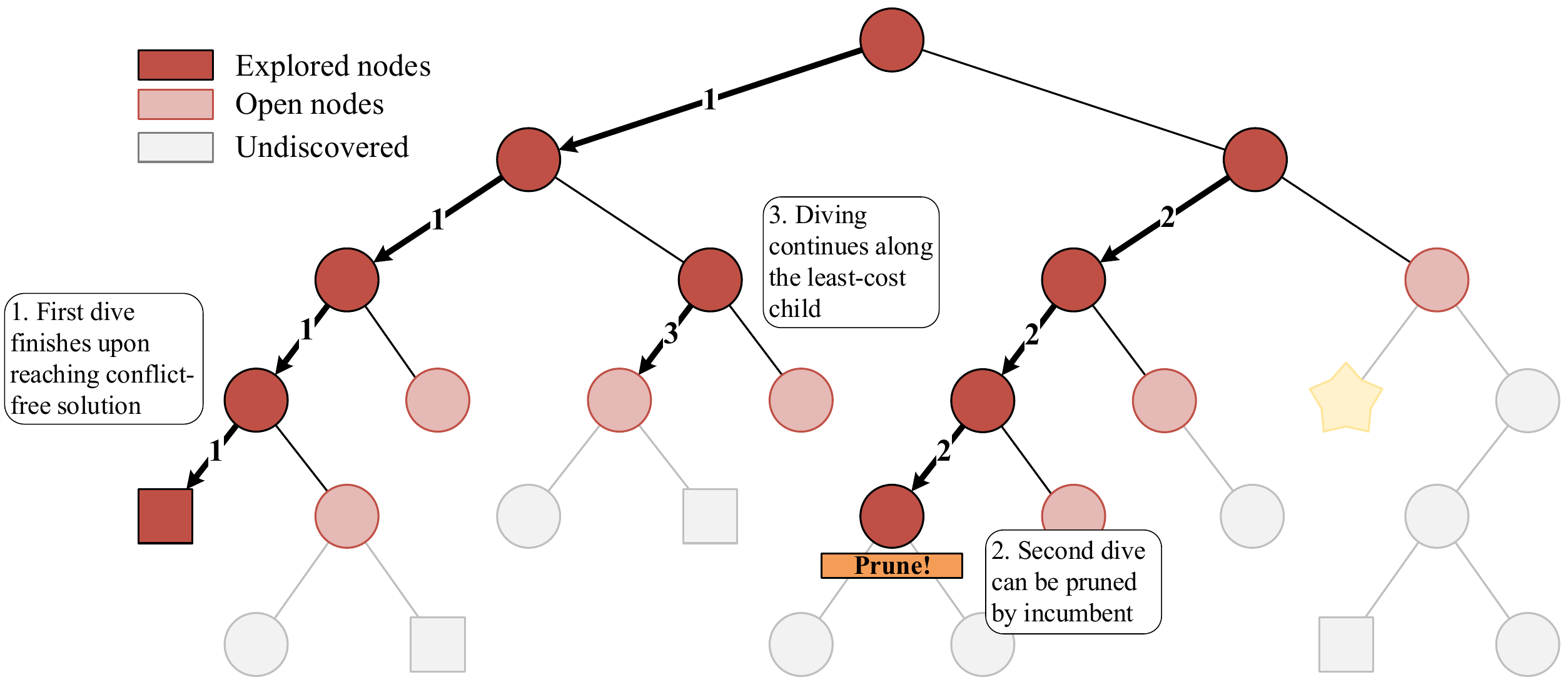}
    \caption{Example \gls{acr:dive} traversal on a \gls{acr:ct}. 
    Parents and left siblings have smaller lower-bound $\ell(n)$. 
    Squares denote conflict-free nodes, and the star denotes the best conflict-free node. 
    Edges with the same label are traversed within the same dive; labels indicate the order of the dives.}
    \label{fig:search_tree_diagram}
\end{figure*}

\begin{algorithm}[t]
\caption{DIVE node-selection policy}
\label{alg:DIVE}
\begin{algorithmic}[1]

\Function{$\pi_{\mathrm{DIVE}}$}{$\mathrm{OPEN}, n^{\mathrm{last}}, z^{\mathrm P}$}
    \If{$n^{\mathrm{last}}=\varnothing$}
        \State \Return $\pi_{\mathrm{BFS}}(\mathrm{OPEN})$
    \EndIf
    \State $\mathcal C \gets $ set of children of $n^{\mathrm{last}}$  %
    \If{$\mathcal C=\varnothing$}
        \State \Return $\pi_{\mathrm{BFS}}(\mathrm{OPEN})$
    \EndIf
    \State $c^{\min} \gets \argmin_{c\in\mathcal C}\ell(c)$ \Comment{with fixed tie-breaking}
    \If{$\ell(c^{\min}) < z^{\mathrm P}$}
        \State \Return $c^{\min}$ \Comment{continue the current dive}
    \EndIf
    \State \Return $\pi_{\mathrm{BFS}}(\mathrm{OPEN})$ \Comment{start a new dive}
\EndFunction
\Statex
\Function{$\pi_{\mathrm{BFS}}$}{$\mathrm{OPEN}$}
    \State \Return $\argmin_{n\in\mathrm{OPEN}}\ell(n)$ \Comment{with fixed tie-breaking}
\EndFunction

\end{algorithmic}
\end{algorithm}

\Cref{alg:DIVE} states the policy and \Cref{fig:search_tree_diagram} shows a graphical example. 
Let~$n^{\mathrm{last}}$ be the node expanded in the previous high-level iteration. 
If one of its children remains in~$\mathrm{OPEN}$ and the lowest-bound such child has~$\ell(c)<z^{\mathrm P}$, \gls{acr:dive} expands that child next. Otherwise, \gls{acr:dive} starts a new dive by selecting the best-bound node in $\mathrm{OPEN}$. 
Ties among equal-bound nodes are resolved by a fixed deterministic rule\footnote{In our implementation, deeper nodes are preferred first, followed by nodes whose selected conflict has larger cardinality. The definition of \gls{acr:dive} does not depend on this particular tie-breaking rule.}. 

This policy deliberately differs from both main baselines. 
Relative to \gls{acr:bfs}, \gls{acr:dive} is willing to leave best-bound order temporarily in order to obtain longer parent-child expansion chains and earlier feasible solutions. 
Relative to \gls{acr:id}, \gls{acr:dive} does not restart from the root after each threshold; it keeps the open frontier and reanchors each new dive by a global best-bound selection. 
Similar ideas outside \gls{acr:mapf} are known as Hybrid \gls{acr:bfs} or best-bound search with plunging~\cite{HBFS_allouche2015,SCIPThesis_achterberg2007}.

\subsection{Mechanisms and Trade-offs}
\label{sec:Dive_StructuralProperties}

\gls{acr:dive}'s behavior is governed by two mechanisms: incumbent cutoff and diversified search. Together, they allow \gls{acr:dive} to enter deeper parts of the \gls{acr:ct} without turning into plain \gls{acr:dfs}.

\subsubsection{Incumbent cutoff}
\gls{acr:dive} may expand nodes whose lower bound is larger than the optimal objective. 
Such nodes are not required for the final optimality certificate, but they can produce feasible incumbents early and extend parent-child expansion chains. 
The risk is that a depth-oriented policy may spend too long in regions that cannot improve the best solution already found. 
Incumbent pruning controls this risk.

\begin{proposition}[Incumbent cutoff]
\label{prop:DIVE_incumbent_cutoff}
Let $z^{\mathrm P}_t$ be the current incumbent after $t$ iterations. If an open node $n$ satisfies $\ell(n)\ge z^{\mathrm P}_t$, then no descendant of $n$ can improve the incumbent.
\end{proposition}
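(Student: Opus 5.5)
The plan is to reduce the statement to the defining property of the key $\ell$: it is a valid lower bound on every conflict-free solution reachable below a node. In the paper's description of \gls{acr:cbs}, each \gls{acr:ct} node stores a lower-bound key $\ell(n)$ on the best conflict-free objective attainable below $n$. I would make this precise as two facts.

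First, every descendant $m$ of $n$ carries a superset of $n$'s constraints. Hence every conflict-free \gls{acr:mapf} solution consistent with $m$'s constraints is also consistent with $n$'s constraints.

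Second, for any such solution $S$ we have $\soc(S)\ge \ell(n)$. In vanilla \gls{acr:cbs} this holds because $\ell(n)$ is the sum of individually optimal constrained path costs. Each agent's trajectory in $S$ is feasible for its single-agent constrained problem, so its cost is at least the optimal constrained cost; summing over agents gives the bound. With an admissible high-level heuristic, admissibility is precisely the statement $\ell(n)\le \soc(S)$ for every conflict-free $S$ below $n$.

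The main argument then follows. Let $m$ be any descendant of $n$, and suppose processing $m$ yields a conflict-free solution. By \cref{alg:CBS} its objective is $\ell(m)$, and for a conflict-free node $\ell(m)$ coincides with its $\soc$ (or at least is attained by a solution below $n$). By the second fact applied to $n$, any conflict-free solution $S$ found at or below $m$ satisfies
\[
\soc(S)\ \ge\ \ell(n)\ \ge\ z^{\mathrm P}_t .
\]
So $S$ is not strictly better than the incumbent. The primal bound is nonincreasing in $t$, so $z^{\mathrm P}_{t'}\le z^{\mathrm P}_t$ for all later iterations $t'$. The same inequality therefore shows that no descendant can improve any later incumbent either; this also justifies why pruning $n$ at a later time remains safe. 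A short induction on depth, using monotonicity $\ell(\text{child})\ge\ell(\text{parent})$, could serve as an alternative route. However, it is not needed, because admissibility at $n$ covers the whole subtree directly.

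The main obstacle is the second fact: making rigorous that $\ell(n)$ lower-bounds the whole subtree and not just $n$ itself. This requires that branching only adds constraints and never removes them. It also requires an admissibility statement for the high-level heuristic (e.g., CBSH-style heuristics) that the paper assumes rather than proves. I would state it explicitly as the standard admissibility assumption on $\ell$ and invoke it, handling the vanilla case by the direct per-agent argument above.
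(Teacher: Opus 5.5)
Your proposal is correct and follows essentially the same argument as the paper. Every feasible descendant of $n$ lies in the subproblem of $n$, so its objective is at least $\ell(n)\ge z^{\mathrm P}_t$ and cannot strictly improve the incumbent. Your additions are sound elaborations of what the paper takes as given: the per-agent justification that $\ell(n)$ is a valid lower bound (or admissibility when a heuristic is used), and the remark that $z^{\mathrm P}$ is nonincreasing so the cutoff stays valid later.
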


\begin{proof}
The value~$\ell(n)$ is a lower bound on every feasible solution in the subproblem represented by $n$. 
Every feasible descendant of $n$ is also feasible within that subproblem, and therefore has objective value at least $\ell(n)\ge z^{\mathrm P}_t$. 
Such a descendant cannot have objective value strictly smaller than the current incumbent.
\end{proof}

The cutoff is a valid branch-and-bound pruning test and is not unique to \gls{acr:dive}. 
It is central to \gls{acr:dive} because \gls{acr:dive} intentionally follows child nodes before the global best-bound frontier has been exhausted. 
In \cref{alg:DIVE}, if the best available child of the previously expanded node already satisfies $\ell(c)\ge z^{\mathrm P}_t$, then every sibling child has lower bound at least $z^{\mathrm P}_t$ as well, so the current dive is stopped.

\subsubsection{Diversified search}
Stopping a dive is useful only if the next dive is chosen carefully. 
\gls{acr:dive} reanchors each new dive at the global best-bound frontier rather than continuing near the branch that happened to be explored most recently.

\begin{definition}[Diversified diving policy]
A diving policy is \emph{$\sigma$-diversified} if every dive root is selected by a global rule $\sigma(\mathrm{OPEN})$ over the open set, with $\sigma$ independent of traversal history.
\gls{acr:dive} is best-bound-diversified, meaning that $\sigma(\mathrm{OPEN})\in\argmin_{n\in\mathrm{OPEN}}\ell(n)$ up to tie-breaking.
\end{definition}

Best-bound diversification separates \gls{acr:dive} from \gls{acr:dfs} with incumbent pruning.
A poor child choice in plain \gls{acr:dfs} can be followed by many additional depth-first choices in the same unproductive region.
In \gls{acr:dive}, the effect of such a choice is limited to one dive: once the dive stops, the next dive root is selected from the global best-bound frontier. 
This is the mechanism that preserves regular progress toward the optimality certificate while still allowing incumbent-seeking excursions.

\subsubsection{Frontier size and parent-child continuity}
\gls{acr:dive} keeps substantially more frontier information than \gls{acr:id}, because it does not discard the open nodes left behind by a dive.
In binary \gls{acr:cbs}, each internal node visited during a dive can leave at most one unchosen child in $\mathrm{OPEN}$. 
This is the price of reanchoring future dives at the best-bound frontier. 
Conversely, \gls{acr:dive} usually keeps a smaller frontier than \gls{acr:bfs}, because it does not expand the \gls{acr:ct} layer by layer before moving deeper.

The same traversal pattern improves parent-child continuity. 
Whenever \gls{acr:dive} continues a dive, the next high-level expansion is a child of the previous one. 
Implementations that exploit data between a parent and child can therefore process these consecutive expansions more efficiently than unrelated jumps across a wide frontier. 
The formal analysis in \cref{sec:TechnicalPolicyAnalysis} makes this tradeoff explicit: \gls{acr:bfs} is optimized for expanded-node count, \gls{acr:id} for queue size, and \gls{acr:dive} for dive continuity.

\subsection{Bounds, Anytime Behavior, and Warm Starts}
\label{sec:Dive_BoundBehavior}

\gls{acr:dive} can find feasible solutions before the optimality proof is complete. 
After an incumbent has been found, the solver can be interrupted and still return the best feasible solution found so far. 
This gives exact \gls{acr:cbs} an anytime mode: the returned solution is feasible at interruption time, and it becomes certified optimal only when the remaining open nodes can no longer contain a better solution.

\gls{acr:dive} does not obtain this anytime behavior by neglecting the lower bound. 
Since each new dive starts from the best-bound frontier, the search continues to raise the certificate side of the solve. 
The active dual bound can be written as
\begin{equation*}
    z^D_t = \min_{n\in \mathrm{OPEN}_t} \ell(n),
\end{equation*}

When an incumbent exists,~$z^{\mathrm D}_t\le z^*\le z^{\mathrm P}_t$. 
If $z^D_t= z^{\mathrm P}_t$, then the incumbent is certified optimal. 
For nonzero incumbent cost, the relative primal-dual gap is therefore
\begin{equation*}
    \mathrm{Gap}_t = \frac{z^{\mathrm P}_t-z^{\mathrm D}_t}{z^{\mathrm P}_t},
    \qquad 0<z^{\mathrm P}_t<+\infty .
\end{equation*}

The zero-cost degenerate case can be handled by reporting zero gap once $z^{\mathrm P}_t=z^{\mathrm D}_t=0$. 
The gap is not a runtime predictor, but it gives an instance-specific, scale-normalized certificate of how much improvement remains possible.

\gls{acr:dive} also interfaces naturally with warm starts.

\begin{definition}[Warm start]
A \emph{warm start} is any feasible \gls{acr:mapf} solution supplied before the \gls{acr:cbs} solve begins and used to initialize $n^{\mathrm{inc}}$ and $z^{\mathrm P}_t$.
\end{definition}

A warm start activates incumbent pruning immediately. 
This creates a simple interface between fast suboptimal planners and exact search: a heuristic planner can provide a feasible solution, while \gls{acr:dive} continues searching only for a proof of optimality or a better incumbent. 
The resulting solver remains exact because the warm start changes only the initial primal bound; it does not remove any node that could contain a solution better than that bound.

\section{Node-Selection Framework}
\label{sec:FormalizedNodeSelection}

\Cref{sec:Dive} made high-level node selection explicit in \gls{acr:cbs}. 
This section formalizes what that choice can and cannot change. 
The central point is simple: under standard \gls{acr:bnb} conditions, node selection does not affect exactness, but it strongly affects the route by which exactness is obtained.

We organize the discussion around three principles. 
First, node selection can be studied formally rather than only empirically. 
Second, it is inherently multi-objective: expanded nodes, parent-child continuity, memory, incumbent discovery, and dual-bound progress are distinct quantities. 
Third, the preferred trade-off is context dependent, so useful policies may dynamically respond to the state of the ongoing solve.
\Cref{sec:TechnicalPolicyAnalysis} then uses the notation introduced here to compare \gls{acr:bfs}, \gls{acr:dfs}, \gls{acr:id}, and \gls{acr:dive}.

\subsection{Correctness}
\label{sec:FormalizedNodeSelection_Correctness}

We first separate correctness from traversal order by abstracting high-level \gls{acr:cbs} as an instance of \gls{acr:bnb}. 
This abstraction serves to capture not just the standard scope of \gls{acr:cbs}, but also its many extensions, such as A-CBS and TA-CBS which capture map configuration and task assignment respectively in addition to agent routing~\cite{CMAPF_bellusci2020,taskCBS_zhou2026}. Other adaptations, such as suboptimal and finite-horizon variants \cite{ACCBS_li26,li2026certificate} are likewise captured under the same \gls{acr:bnb} scheme.
Consider a minimization problem with objective~$g$ and feasible set~$Q$ contained in a relaxation~$O$. 
A \gls{acr:bnb} node~$P\subseteq O$ represents a relaxed subproblem. 
Solving~$P$ either proves it infeasible or returns a relaxed optimizer~$s(P)$ together with a valid lower bound~$\ell(P)$ satisfying
\begin{equation}
    \ell(P) \leq g(q) \qquad \forall q\in P\cap Q .
    \label{eq:valid_lower_bound}
\end{equation}

If $s(P)\in Q$, the node is pruned because the relaxation optimum is already feasible and therefore no point in~$P\cap Q$ can improve on $s(P)$. 
If~$s(P)\notin Q$, branching creates children~$L$ and~$R$ satisfying
\begin{align}
    L,R &\subseteq P, \label{eq:bb_subset}\\
    s(P) &\notin L\cup R, \label{eq:bb_exclude}\\
    P\cap Q &\subseteq L\cup R. \label{eq:bb_cover}
\end{align}
The first condition makes children refinements of the parent, the second excludes the current infeasible relaxed optimizer, and the third preserves every feasible solution that could still be hidden inside the parent. 
The children need not be disjoint.

\begin{algorithm}[t]
\caption{Generic \gls{acr:bnb} with node-selection policy~$\pi$}
\label{alg:B&B}
\begin{algorithmic}[1]
\State $\mathrm{OPEN} \gets \{O\}$
\State $s^{\mathrm{inc}} \gets \emptyset$, \quad $z^{\mathrm P} \gets +\infty$
\While{$\mathrm{OPEN} \neq \emptyset$}
    \State $P \gets \pi(S)$, where $P\in\mathrm{OPEN}$
    \State Remove $P$ from $\mathrm{OPEN}$
    \State Solve $P$ to obtain status, lower bound $\ell(P)$, and relaxed optimizer $s(P)$ if one exists
    \If{$P$ is infeasible}
        \State \textbf{continue}
    \EndIf
    \If{$\ell(P) \geq z^{\mathrm P}$}
        \State \textbf{continue} \Comment{incumbent cutoff}
    \EndIf
    \If{$s(P)\in Q$}
        \If{$g(s(P)) < z^{\mathrm P}$}
            \State $s^{\mathrm{inc}} \gets s(P)$
            \State $z^{\mathrm P} \gets g(s(P))$
        \EndIf
        \State \textbf{continue}
    \EndIf
    \State Branch on a violation in $s(P)$ and create children $L,R$
    \State such that $L,R\subseteq P$, $s(P)\notin L\cup R$, and $P\cap Q\subseteq L\cup R$
    \State Add $L$ and $R$ to $\mathrm{OPEN}$
\EndWhile
\State \Return $s^{\mathrm{inc}}$
\end{algorithmic}
\end{algorithm}

\begin{proposition}[Node selection does not affect correctness]
\label{prop:bnb_nodesel}
Assume the search tree induced by the branching rule is finite, every node lower bound satisfies \eqref{eq:valid_lower_bound}, and every branch operation satisfies \eqref{eq:bb_subset}--\eqref{eq:bb_cover}. 
Then \cref{alg:B&B} returns an optimal solution, or correctly certifies infeasibility, for any policy that always selects a node from $\mathrm{OPEN}$ (completeness).
\end{proposition}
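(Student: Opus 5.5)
The argument has two parts: termination, and a loop invariant that holds for every policy $\pi$.

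\emph{Termination.} The search tree induced by the branching rule is finite by assumption. Each iteration permanently removes one node from $\mathrm{OPEN}$, and the only nodes ever added are children of removed nodes. Hence every node of the finite tree enters $\mathrm{OPEN}$ at most once. (If children can coincide as sets, I treat them as distinct tree nodes.) So the loop runs for at most as many iterations as the tree has nodes, and it stops with $\mathrm{OPEN}=\varnothing$ whatever $\pi$ chooses. The choice of node by $\pi$ never enters this count; only the requirement that it pick a member of $\mathrm{OPEN}$ is used.

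\emph{Invariant.} Let $q^*$ be any optimal solution, assuming $Q\ne\varnothing$ and that an optimum exists. I will show that at the start of every iteration one of the following holds:
\begin{itemize}
\item $z^{\mathrm P}\le g(q^*)$ and $s^{\mathrm{inc}}\in Q$ with $g(s^{\mathrm{inc}})=z^{\mathrm P}$, or
\item some $P\in\mathrm{OPEN}$ satisfies $q^*\in P$.
\end{itemize}
Initially $q^*\in O$, so the second condition holds. For the inductive step, fix an iteration and suppose the second condition holds via a node $P$. If $\pi$ selects some other node, $P$ stays in $\mathrm{OPEN}$. The only other change that can occur is an incumbent update, and that only lowers $z^{\mathrm P}$ while keeping $s^{\mathrm{inc}}\in Q$, so the invariant survives. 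If $\pi$ selects $P$ itself, I split on how $P$ is processed:
\begin{itemize}
\item \emph{Infeasible.} This cannot happen, because $q^*\in P\cap Q$.
\item \emph{Cut off.} Here $\ell(P)\ge z^{\mathrm P}$. By \eqref{eq:valid_lower_bound}, $g(q^*)\ge\ell(P)\ge z^{\mathrm P}$, so $z^{\mathrm P}\le g(q^*)$ and the first condition already holds. This is the reasoning of \cref{prop:DIVE_incumbent_cutoff}.
\item \emph{Relaxed optimizer is feasible.} Here $s(P)\in Q$. I need $g(s(P))\le g(q^*)$, which uses that $s(P)$ is a relaxed optimizer, i.e. $g(s(P))=\min_{P}g\le g(q^*)$. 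After the update, $z^{\mathrm P}\le g(s(P))\le g(q^*)$.
\item \emph{Branching.} Condition \eqref{eq:bb_cover} gives $q^*\in P\cap Q\subseteq L\cup R$, and both children are added to $\mathrm{OPEN}$.
\end{itemize}
Once the first condition holds, it persists: $z^{\mathrm P}$ only decreases, and every update sets $s^{\mathrm{inc}}$ to a feasible point with $g(s^{\mathrm{inc}})=z^{\mathrm P}$. At termination $\mathrm{OPEN}=\varnothing$, so the first condition must hold. Then $s^{\mathrm{inc}}$ is feasible with $g(s^{\mathrm{inc}})\le g(q^*)$, so it is optimal.

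\emph{Infeasibility.} If $Q=\varnothing$, then $s^{\mathrm{inc}}$ is only ever assigned values $s(P)\in Q$, so it stays $\emptyset$. Returning $\emptyset$ is therefore a correct infeasibility certificate. Conversely, when $Q\neq\varnothing$ the invariant shows a solution is returned, so the certificate is never issued wrongly.

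\emph{Main obstacle.} The delicate step is the feasible-relaxed-optimizer case. Condition \eqref{eq:valid_lower_bound} alone does not force $g(s(P))\le g(q^*)$. I will make explicit the assumption, already implicit in the paper's justification of this pruning step, that $s(P)$ minimizes $g$ over $P$, equivalently $g(s(P))=\ell(P)$, as in CBS. With that assumption the case goes through as written above. A further subtlety is existence of an optimum. The finite tree together with \eqref{eq:bb_cover} implies that $Q$ is covered by leaves whose feasible parts are attained by relaxed optimizers, but it is simplest to assume outright that an optimum exists.
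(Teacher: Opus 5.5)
Your proof is correct and follows the paper's route. The paper maintains the invariant that every feasible solution strictly improving the current incumbent lies in some open node. It checks this through the same four cases: infeasible, cutoff, feasible relaxed optimizer (where it likewise uses that $s(P)$ is the optimum of the relaxation over $P$), and branching via \eqref{eq:bb_cover}. It then concludes by finiteness of the tree. Your invariant is the paper's specialized to a single optimal $q^*$, and your inductive step never uses optimality of $q^*$. Applying it to every $q\in Q$ therefore recovers the paper's set version and removes your extra assumption that an optimum exists: infeasibility then follows because an empty incumbent at termination forces $Q=\varnothing$.
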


\begin{proof}[Proof sketch]
Node selection changes only the order in which open subproblems are processed.
Incumbent pruning is valid by \eqref{eq:valid_lower_bound}. 
Infeasible nodes contain no feasible solution. 
Feasible relaxed optimizers can be fathomed because they are already optimal within their subproblem. 
Branching preserves all feasible solutions by \eqref{eq:bb_cover}. 
Thus, after each iteration, every feasible solution that could still improve the incumbent remains covered by the open nodes. 
Since the induced tree is finite, every open node is eventually processed or validly pruned. 
At termination no improving feasible solution remains; hence the incumbent is optimal, and if no incumbent exists the instance is infeasible. 
\cref{Appendix} gives the full invariant proof.
\end{proof}

\begin{remark}[Finite \gls{acr:mapf} search space]
\cref{prop:bnb_nodesel} is stated for a finite induced search tree, which is also the setting used for the structural analysis in \cref{sec:TechnicalPolicyAnalysis}.
This assumption is not a claim that every unconstrained \gls{acr:mapf} formulation has a finite high-level tree. 
Rather, it isolates the node-selection question from separate completeness issues such as horizons and objective bounds. 
Once a finite incumbent value~$B$ is available in sum-of-costs \gls{acr:mapf}, any strictly improving solution has total cost less than~$B$, so the relevant bounded part of the search is finite. 
Practical solvers can also impose explicit horizons or rely on standard CBS completeness conditions; these choices are orthogonal to the node-selection order.
\end{remark}

\gls{acr:cbs} satisfies the above abstraction directly.

\begin{proposition}[\gls{acr:cbs} instantiates \gls{acr:bnb}]
    \label{prop:cbs_instance_bnb}
The high-level search of optimal \gls{acr:cbs} is an instance of the \gls{acr:bnb} scheme in~\cref{alg:B&B}.
\end{proposition}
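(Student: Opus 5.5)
The plan is to exhibit an explicit dictionary between the objects of optimal \gls{acr:cbs} and those of \cref{alg:B&B}, and then to verify \eqref{eq:valid_lower_bound} and \eqref{eq:bb_subset}--\eqref{eq:bb_cover} one by one.

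\textbf{The dictionary.}
\begin{itemize}
  \item Let $O$ be the set of all joint plans, i.e.\ tuples $\{\tau_a\}_{a\in A}$ of individually valid trajectories.
  \item Let $Q\subseteq O$ be the set of conflict-free joint plans, i.e.\ \gls{acr:mapf} solutions in the sense of \cref{def:mapfSolution}.
  \item Let $g=\soc$.
  \item A \gls{acr:ct} node $n$ with constraint set $C_n$ is identified with the subproblem $P_n=\{\{\tau_a\}\in O : \tau_a \text{ satisfies } C_n \text{ for all } a\}$. The root, having no constraints, gives $P_{n_0}=O$.
  \item Solving $P_n$ means running the low-level search for each agent. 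This returns either infeasibility (some agent has no constrained path) or $s(P_n)$, the tuple of individually optimal constrained paths.
\end{itemize}
Because the constraints are per-agent, $P_n$ is a Cartesian product over agents and $\soc$ is separable. Hence $s(P_n)$ minimizes $g$ over all of $P_n$, and $\ell(n)=g(s(P_n))\le g(q)$ for every $q\in P_n\supseteq P_n\cap Q$. This is exactly \eqref{eq:valid_lower_bound}.

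If a strengthened admissible heuristic is used (CBSH-style), I would cite its admissibility, namely that it never exceeds the cost of the best conflict-free solution below $n$. That gives \eqref{eq:valid_lower_bound} for the raised key. The relaxed optimizer is unchanged in this case. The fathoming step when $s(P)\in Q$ then needs $\ell(P)=g(s(P))$ for conflict-free nodes, which holds because such heuristics return $0$ on conflict-free plans. I will state this as an assumption on the heuristic.

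\textbf{Branching conditions.} Let $s(P_n)$ contain a vertex conflict $(a,b,v,t)$, and define the children by $C_L=C_n\cup\{(a,v,t)\}$ and $C_R=C_n\cup\{(b,v,t)\}$.
\begin{itemize}
  \item \eqref{eq:bb_subset} holds because adding constraints shrinks the set.
  \item \eqref{eq:bb_exclude} holds because $s(P_n)$ places $a$ at $v$ at time $t$, violating $C_L$, and likewise places $b$ there, violating $C_R$.
  \item \eqref{eq:bb_cover} holds because any $q\in P_n\cap Q$ is conflict-free. So $a$ and $b$ cannot both occupy $v$ at time $t$, and $q$ satisfies at least one of the new constraints, placing it in $L$ or $R$.
\end{itemize}
Edge conflicts are handled identically, with the two traversal constraints in place of the vertex constraints.

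\textbf{Matching the control flow.} The conflict-free case of \cref{alg:CBS} sets $z^{\mathrm P}\gets\ell(n)=g(s(P_n))$ and records the incumbent, which corresponds to the branch $s(P)\in Q$ of \cref{alg:B&B}. The cutoff guard $\ell(n)\ge z^{\mathrm P}$ in \cref{alg:CBS} ensures that any conflict-free node reaching the update is strictly improving, so the extra test $g(s(P))<z^{\mathrm P}$ in \cref{alg:B&B} is automatically satisfied. Two further differences are harmless:
\begin{itemize}
  \item \cref{alg:CBS} filters children with $\ell(c)\ge z^{\mathrm P}$ at insertion, whereas \cref{alg:B&B} filters at selection. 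This only discards nodes that \cref{alg:B&B} would later cut off anyway.
  \item \gls{acr:cbs} solves a child when generating it rather than when selecting it. This is a timing convention that does not change which nodes are fathomed.
\end{itemize}

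\textbf{Main obstacle.} The subtle step is \eqref{eq:bb_cover} together with the semantics of goal-waiting. Trajectories are extended after arrival, with $v^a_t:=v^a_{T_a}$, so a constraint at time $t>T_a$ can forbid an agent from resting at its goal. The low-level search must then account for this, for example by refusing to terminate before the last constraint at the goal. If it does not, the returned path is not actually optimal in $P_n$, and \eqref{eq:valid_lower_bound} fails. I would handle this by defining the constrained single-agent problem precisely: feasible trajectories are those whose extended sequence satisfies $C_n$, and the low-level search is assumed to be exact for that problem, as standard \gls{acr:cbs} implementations ensure. Finiteness of the induced tree is inherited from the hypotheses of \cref{prop:bnb_nodesel} and the preceding remark, so it does not need to be reproved here.
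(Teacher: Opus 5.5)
Your proposal is correct and follows essentially the same route as the paper's proof sketch. It uses the same choice of $O$ (individually valid joint plans) and $Q$ (conflict-free plans), obtains \eqref{eq:valid_lower_bound} from the cost of individually optimal constrained paths, and checks \eqref{eq:bb_subset}--\eqref{eq:bb_cover} via the fact that every conflict-free plan avoids at least one side of the selected conflict; your separability, heuristic, control-flow and goal-waiting remarks are refinements the paper leaves implicit. One small correction to your ``main obstacle'' paragraph: a low-level search that ignores post-arrival goal constraints solves a relaxation of $P_n$, so \eqref{eq:valid_lower_bound} would still hold, and what actually breaks is that the returned plan need not lie in $P_n$, so a child can reproduce its parent's plan and branching never makes progress, which your extended-semantics definition of $P_n$ correctly rules out.
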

\begin{proof}[Proof sketch]
Let $Q$ be the set of conflict-free joint plans for the \gls{acr:mapf} instance, and let $O$ be the relaxation that keeps individual path feasibility but drops inter-agent collision constraints. 
A \gls{acr:ct} node $P$ is defined by a set of agent-specific space-time constraints. 
Solving $P$ computes an individually optimal path for each agent subject to those constraints; the resulting node key $\ell(P)$ is a lower bound on every conflict-free joint plan satisfying the same constraints.

If the relaxed joint plan is conflict-free, it is feasible for the original \gls{acr:mapf} instance and is optimal within $P\cap Q$.
Otherwise, \gls{acr:cbs} branches on a selected conflict. 
For a vertex conflict, one child forbids the first agent from occupying the conflicted vertex at the conflicted time, and the other child forbids the second agent from doing so. 
Edge conflicts are handled analogously. 
These children are subsets of the parent, exclude the current conflicting joint plan, and preserve every conflict-free joint plan in the parent because any conflict-free plan must avoid at least one side of the selected conflict. 
Thus \eqref{eq:bb_subset}--\eqref{eq:bb_cover} hold.
\end{proof}

\begin{proposition}[Exactness of \gls{acr:dive}]
\label{prop:DIVE_exact}
\gls{acr:dive} is an exact high-level node-selection policy for optimal \gls{acr:cbs}. 
With valid incumbent pruning, it returns an optimal solution whenever one exists in the finite induced search tree.
\end{proposition}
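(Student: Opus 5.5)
The plan is to obtain \cref{prop:DIVE_exact} as a corollary of \cref{prop:bnb_nodesel} and \cref{prop:cbs_instance_bnb}. First, \cref{prop:cbs_instance_bnb} places the high-level search of optimal \gls{acr:cbs} inside the \gls{acr:bnb} scheme of \cref{alg:B&B}, so the lower-bound condition \eqref{eq:valid_lower_bound} and the branching conditions \eqref{eq:bb_subset}--\eqref{eq:bb_cover} hold for every \gls{acr:ct} node. Second, I will check that $\pi_{\mathrm{DIVE}}$ is an admissible policy in the sense of \cref{prop:bnb_nodesel}, i.e., that it always returns a node currently in $\mathrm{OPEN}$. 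Under the finite-tree assumption stated in the proposition, \cref{prop:bnb_nodesel} then yields optimality or a correct infeasibility certificate.

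For the second step, I will go through the branches of \cref{alg:DIVE}. Whenever it calls $\pi_{\mathrm{BFS}}(\mathrm{OPEN})$, the result is an argmin over the nonempty set $\mathrm{OPEN}$, which is nonempty because of the while-loop guard, so the selected node lies in $\mathrm{OPEN}$. The remaining branch returns $c^{\min}$, a child of $n^{\mathrm{last}}$. I will read $\mathcal C$ as the set of children of $n^{\mathrm{last}}$ that are still in $\mathrm{OPEN}$, as the prose after \cref{alg:DIVE} states. This is the delicate point, because children that were never added (infeasible or cut off by $\ell(c)\ge z^{\mathrm P}$) or that were already removed must not count. With this reading, $c^{\min}\in\mathrm{OPEN}$, and the policy is a valid selection rule.

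Next, I will reconcile the pruning in \cref{alg:CBS} with \cref{alg:B&B}. \cref{alg:CBS} also filters children at insertion time with $\ell(c)<z^{\mathrm P}$, and it sets $z^{\mathrm P}\gets\ell(n)$ when $n$ is conflict-free. For the first point, \cref{prop:DIVE_incumbent_cutoff} shows that discarding a child with $\ell(c)\ge z^{\mathrm P}$ at insertion is equivalent to inserting it and cutting it off when it is selected. Since $z^{\mathrm P}$ only decreases, the same test still fails later. For the second point, because the node passed the earlier test $\ell(n)<z^{\mathrm P}$, the update strictly improves the incumbent, which matches the guarded update in \cref{alg:B&B}. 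Here I use that, for a conflict-free CBS node, $\ell(n)$ equals the sum-of-costs of its plan. This holds for vanilla CBS, and with admissible high-level heuristics the heuristic vanishes at conflict-free nodes.

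Finally, I will handle warm starts. An initial incumbent with finite $z^{\mathrm P}$ only adds cutoffs that are valid by \cref{prop:DIVE_incumbent_cutoff}, so the invariant in the proof of \cref{prop:bnb_nodesel} still holds: every feasible solution strictly better than the incumbent remains covered by $\mathrm{OPEN}$. If no node improves on the warm start, the warm start is returned and is optimal. The main obstacle I expect is the bookkeeping in the third paragraph, namely showing that the insertion-time filter and the $\mathcal C\subseteq\mathrm{OPEN}$ interpretation keep \cref{alg:CBS} faithful to \cref{alg:B&B}. Once that is settled, the result follows directly from \cref{prop:bnb_nodesel}.
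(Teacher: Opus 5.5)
Your proposal is correct and follows the same route as the paper. The paper notes that \gls{acr:dive} only changes which open node is selected and that its pruning is the valid incumbent cutoff of \cref{prop:DIVE_incumbent_cutoff}, and then invokes \cref{prop:bnb_nodesel,prop:cbs_instance_bnb}. Your extra bookkeeping does not change that argument; it fills in details the paper leaves implicit: reading $\mathcal C$ as the children still in $\mathrm{OPEN}$, the insertion-time filter, $\ell(n)$ equalling the plan cost at conflict-free nodes, and warm starts, which the paper treats separately in \cref{prop:DIVE_warmstart}.
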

\begin{proof}
\gls{acr:dive} only changes which node in $\mathrm{OPEN}$ is selected next.
Its pruning test is the incumbent cutoff of \cref{prop:DIVE_incumbent_cutoff}, which is exactly the valid lower-bound pruning test used in \cref{alg:B&B}. 
The result follows from \cref{prop:bnb_nodesel,prop:cbs_instance_bnb}.
\end{proof}

Importantly, enhancing \gls{acr:dive} with warm starts preserves its correctness.

\begin{proposition}[Warm-start monotonicity]
\label{prop:DIVE_warmstart}
Initializing \gls{acr:dive} with any feasible warm start preserves exactness and can only shrink the subset of nodes that survive incumbent pruning.
\end{proposition}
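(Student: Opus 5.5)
Two claims need proof: exactness is preserved, and the set of nodes surviving incumbent pruning can only shrink. I will handle them separately.

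\emph{Exactness.} Let $w$ denote the warm-start solution, with cost $B=\soc(w)$. Initializing $n^{\mathrm{inc}}\gets w$ and $z^{\mathrm P}_0\gets B$ amounts to running \cref{alg:B&B} from a state in which the incumbent is already a feasible point of $Q$. The invariant behind \cref{prop:bnb_nodesel} is that every feasible solution with value strictly below $z^{\mathrm P}$ lies in some open node. This invariant holds at initialization, since the root $O$ contains all of $Q$.

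Every subsequent step is one of the steps analysed in that proof:
\begin{itemize}
\item cutoff tests, valid by \cref{prop:DIVE_incumbent_cutoff} because $B$ is the value of a genuinely feasible solution;
\item fathoming of infeasible or conflict-free nodes;
\item branching, which preserves feasible solutions by \eqref{eq:bb_cover}.
\end{itemize}
So the invariant is maintained along every trajectory. The induced tree is finite (by the paper's assumption, or by the finite-incumbent remark, which now applies from iteration $0$), so $\mathrm{OPEN}$ eventually empties. At that point no feasible solution has value below $z^{\mathrm P}$, so the returned incumbent is optimal. It may be $w$ itself, if no improvement was found.

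Two small points need checking. First, infeasibility cannot be misreported, because a warm start exists only when the instance is feasible. Second, \cref{alg:CBS} updates $z^{\mathrm P}\gets\ell(n)$ without first comparing against the incumbent. This is harmless, because a node reaching that line has already passed the test $\ell(n)<z^{\mathrm P}$. The conclusion then follows, as in \cref{prop:DIVE_exact}, by invoking \cref{prop:bnb_nodesel,prop:cbs_instance_bnb}.

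\emph{Monotonicity.} I would make "survive incumbent pruning" precise in a trajectory-independent way. Call a node of the finite induced tree \emph{surviving} relative to an initial primal bound $z_0$ if it and all its ancestors have $\ell<z_0$ and are not fathomed as conflict-free ancestors. The claim becomes: for $z_0=B\le+\infty$, the surviving set $S(B)$ is contained in $S(+\infty)$. This holds because the predicate $\ell(\cdot)<z_0$ is monotone in $z_0$. Moreover, during either run the primal bound satisfies $z^{\mathrm P}_t\le z_0$, so at every iteration the actual cutoff threshold is at most the initial one.

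\emph{Main obstacle.} The hard part is that \gls{acr:dive} is history-dependent. Under a warm start, dives terminate at different points, so the two runs expand nodes in different orders and discover incumbents at different times. A naive pathwise claim such as "every node expanded with the warm start is expanded without it" can therefore fail.

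My first approach is to state and prove the claim at the level of the static surviving set, i.e. the nodes that satisfy the pruning test with respect to the initial bound, which is what the proposition's phrasing suggests. I would add a remark that, pathwise, the threshold in force at any time is at most $B$. Hence any node pruned by the cold-start run at a threshold $\ge B$ is also pruned in the warm run, which gives the dominance for all nodes whose lower bound is at least $B$.
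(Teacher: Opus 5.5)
Your exactness argument is correct and follows the paper's proof, with the details filled in. The paper says only that the warm start changes the initial primal bound, that anything the resulting cutoff removes has $\ell(n)$ at least the value of a known feasible solution (so \cref{prop:DIVE_incumbent_cutoff} applies), and that the certificate is otherwise unchanged. You additionally check the appendix invariant $F_t\subseteq U_t$ at the warm-started initialization, and your remarks on the unconditional update $z^{\mathrm P}\gets\ell(n)$ and on infeasibility are right. The paper gives no separate argument for the monotonicity clause beyond ``the initial cutoff is lower''. So your static inclusion $S(B)\subseteq S(+\infty)$ proves as much as the paper does.

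What you should correct is the ``main obstacle''. For DIVE as defined, the pathwise inclusion does not fail, and a short induction proves it. Call $n$ low if $\ell(n)<B$ and high otherwise. By \eqref{eq:tree_monotone_bound}, high nodes have only high descendants, so every conflict-free high node has value at least $B$. Match the $k$-th low node processed by the cold run with the $k$-th node processed by the warm run, and carry two invariants: the low part of the cold $\mathrm{OPEN}$ equals the warm $\mathrm{OPEN}$, and $\min\{z^{\mathrm P}_{\mathrm{cold}},B\}=z^{\mathrm P}_{\mathrm{warm}}$. The induction then runs as follows.
\begin{itemize}
\item For a low key, $\ell<z^{\mathrm P}_{\mathrm{cold}}$ holds iff $\ell<z^{\mathrm P}_{\mathrm{warm}}$. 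Hence both runs keep and prune the same low nodes, and a low $c^{\min}$ is the same node in both.
\item Where the warm dive stops, the current node has no open low child in either run. The cold dive therefore either stops too or continues into a high child.
\item Such an excursion stays in the high region and leaves the low part of $\mathrm{OPEN}$ untouched. It can only lower $z^{\mathrm P}_{\mathrm{cold}}$ to values $\ge B$, and it ends in a best-bound restart.
\item That restart selects the same node as the warm restart, because low keys are strictly below high keys and tie-breaking is fixed.
\item Incumbents of value below $B$ arise only at low nodes, hence at matched steps.
\end{itemize}
So the warm processing sequence is exactly the cold one with the high nodes deleted. The only possible exception is the root when $\ell(n_0)=B=z^*$, which both runs process anyway. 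This gives pathwise inclusion of processed nodes, a queue that is never larger at matched moments (so smaller $Q^{\max}$), and the same dive roots below $B$. That is exactly the behavior the paper reports in \cref{sec:Experiments_WarmStart}: warm starts ``cut off dives earlier but maintain the same starting nodes''. Your caution would be warranted for policies whose decisions read the whole queue, such as the $|\mathrm{OPEN}_t|$ test in MC-DIVE, where high nodes in the cold queue can change low-region choices. It does not apply to DIVE itself.
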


\begin{proof}
A warm start changes only the initial incumbent and primal bound.
Any node pruned by the resulting cutoff has lower bound at least the value of a known feasible solution, so by \cref{prop:DIVE_incumbent_cutoff} it cannot contain a strictly better solution. The branch-and-bound certificate is otherwise unchanged.
\end{proof}

\begin{remark}[Relation to \gls{acr:mip}]
\cref{prop:cbs_instance_bnb} places \gls{acr:cbs} in the same abstract family as classical \gls{acr:bnb} algorithms, including \glspl{acr:mip} solved by linear-programming relaxations and branching on fractional variables~\cite{B&BSurvey_lawler1966}.
We use this connection only as vocabulary and motivation for node selection; the analysis and experiments in this paper remain specific to \gls{acr:cbs}.
\end{remark}

\begin{remark}[Relation to suboptimal \gls{acr:cbs} variants]
    \cref{prop:cbs_instance_bnb,prop:bnb_nodesel} imply that the suboptimal CBS variants discussed in \cref{sec:PreviousWork_SuboptimalVariants} can be adopted to become exact variants by enforcing exact low-level search and restricting their non-admissible node selection to the high-level search.
    Primal zone exploration can be controlled by incumbent pruning following \cref{prop:DIVE_incumbent_cutoff}.
\end{remark}

\subsection{Search Tree}
\label{sec:FormalizedNodeSelection_SearchTree}

For the rest of the paper, fix a \gls{acr:mapf} instance $\mathcal I$ and a deterministic branching policy $\beta$. 
Together they induce a conceptual search tree $\mathbf T^{\mathcal I,\beta}$; different node-selection policies traverse different subsets of this same tree and in different orders. 
We usually omit the superscript when the instance and branching policy are clear.

\begin{definition}[Search tree]
    \label{def:searchTree}
    A \emph{search tree} is a tuple~$\mathbf{T} = \tup{\mathcal T, C, \ell, f}$, where $\mathcal T$ is a finite set of nodes, $C:\mathcal{T} \rightarrow 2^\mathcal{T}$ maps each node to its children,~$\ell:\mathcal T\to\mathbb R\cup\{+\infty\}$ is a valid lower-bound key, and $f:\mathcal T\to\{\top,\bot\}$ indicates whether the relaxed solution stored at the node is conflict-free.
    The tree is rooted, binary, and lower bounds are monotone along parent-child edges:
    \begin{equation}
    \ell(m)\geq \ell(n) \qquad \forall n\in\mathcal T,\; m\in C(n).
    \label{eq:tree_monotone_bound}
\end{equation}
\end{definition}

A node with $f(n)=\top$ is a feasible \gls{acr:mapf} solution. 
Low-level infeasible nodes are terminal and can be represented with $C(n)=\emptyset$. 
We assume throughout the technical analysis that the node key $\ell(n)$ is nondecreasing along every root-to-leaf path.
The monotonicity assumption in \eqref{eq:tree_monotone_bound} holds for the standard sum-of-costs key in vanilla CBS, since adding constraints cannot decrease the optimal relaxed cost.
The results apply to any keys satisfying the same monotonicity condition, as when admissible high-level heuristics are used.

\subsection{Tree Traversal}

Let~$n_t$ denote the node processed at iteration~$t$, with~$n_0$ as the root.
Let~$E_t\subseteq\mathcal T$ be the set of processed nodes after~$t$ iterations and let~$\mathrm{OPEN}_t$ be the open set immediately before iteration $t$.
\begin{align*}
    E_{t+1} &= E_t\cup\{n_t\},\\
    \mathrm{OPEN}_{t+1} &= (\mathrm{OPEN}_t\setminus\{n_t\})\cup C(n_t) .
\end{align*}
For a low-level infeasible node or a conflict-free node,~$C(n_t)=\emptyset$.
Note that a valid pruning rule such as \cref{prop:DIVE_incumbent_cutoff} may allow some of $C(n_t)$ to be discarded without adding to $\mathrm{OPEN}_t$.

A node-selection policy chooses the next processed node from the current open set.

\begin{definition}[Node-selection policy]
    A \emph{node-selection policy}~$\pi$ is any rule such that, for every solver state~$S_t$ with~$\mathrm{OPEN}\neq \emptyset$,~$    \pi(S_t)\in \mathrm{OPEN}_t$.
    The state $S_t$ may include the open set, the previously processed node, the incumbent value, queue size, tie-breaking data, or information learned from previous instances. 
    The unexpanded portions of $\mathbf T$ are not assumed to be known.
\end{definition}

In \Cref{fig:search_tree_diagram}, solid red nodes are processed nodes, light red nodes are open nodes, and gray nodes are undiscovered nodes in~$\mathcal T$. 
Squares indicate nodes with~$f(n)=\top$, and the star marks the best such node.

\subsection{Solve Completion Criteria}
\label{sec:FormalizedNodeSelection_SolveCompletionCriteria}

Assume for this subsection that the instance is feasible. 
The infeasible case is certified by exhausting $\mathrm{OPEN}$ without finding an incumbent. Let
\begin{equation}
    z^* := \min_{n\in\mathcal T: f(n)=\top} \ell(n)
    \label{eq:zstar_tree}
\end{equation}
be the optimal objective value represented in the tree. We partition $\mathcal T$ into three zones:
\begin{align*}
N^D&:=\{n\in \mathcal T \mid \ell(n)<z^*\} \text{ (dual zone)}, \\
N^O&:=\{n\in \mathcal T \mid \ell(n)=z^*\} \text{ (optimal zone)}, \\
N^P&:=\{n\in \mathcal T \mid \ell(n)>z^*\} \text{ (primal zone)}.
\end{align*}
The names describe their roles in the solve. 
Dual-zone nodes must be ruled out because their lower bounds are still strictly better than the optimum. 
At least one feasible optimal-zone node must be found. 
Primal-zone nodes are not needed for the final certificate, but they can provide incumbent solutions before optimality is proved.

An exact solve completed in $T$ iterations must satisfy two conditions. 
First, it must have found an optimal feasible node:
\begin{equation}
    \left|\{n\in E_T\cap N^O \mid f(n)=\top\}\right|\geq 1 .
    \label{eq:primal_crit}
\end{equation}
Second, every dual-zone node must have been resolved:
\begin{equation}
    N^D\subseteq E_T .
    \label{eq:dual_crit}
\end{equation}
Condition \eqref{eq:dual_crit} is necessary because a node with $\ell(n)<z^*$ cannot be eliminated by an incumbent cutoff before it is processed. 
Once \eqref{eq:primal_crit} holds, the incumbent value is $z^*$. 
Once \eqref{eq:dual_crit} also holds, no open node can certify a lower value than $z^*$, so the incumbent is optimal.

We extend our definition of $z^\mathrm P_t$ and $z^\mathrm D_t$ from \cref{sec:Dive} using the new notation. Define the primal bound and open-frontier lower bound at iteration $t$ by
\begin{align*}
    z^{\mathrm P}_t &:= \min_{n\in E_t: f(n)=\top}\ell(n),\\
    \underline z_t &:= \min_{n\in\mathrm{OPEN}_t}\ell(n),
\end{align*}
with the minimum over an empty set interpreted as $+\infty$. 
The active dual bound is
\begin{equation}
    z^{\mathrm D}_t := \min\{z^{\mathrm P}_t,\underline z_t\}.
    \label{eq:active_dual_bound}
\end{equation}
For a feasible incumbent,~$z^{\mathrm D}_t\leq z^*\leq z^{\mathrm P}_t$. 
The solve is certified when $z^{\mathrm P}_t=z^{\mathrm D}_t$, which occurs exactly when the incumbent is no worse than every open-node lower bound. 
Controlled visits to $N^P$ are useful because they can reduce $z^{\mathrm P}_t$ before \eqref{eq:dual_crit} has been completed.

\subsection{Node-Selection Objectives}
\label{sec:FormalizedNodeSelection_NodeSelectionObjectives}

Correctness fixes the destination of an exact solve, and node selection determines the route. 
We use five route-dependent quantities throughout the paper.

\paragraph{Expanded nodes}
The total number of processed nodes~$|E_T|$ is the most direct high-level measure of search effort. 
It is not a complete runtime model, because different processed nodes can have different low-level costs, but it captures how much of the high-level tree is examined.

\paragraph{Dive breaks}
Processing a child immediately after its parent can be substantially cheaper than jumping to an unrelated node, because the child differs from its parent by only one added constraint. 
Boyarski et al. introduced several ways to exploit this parent-child efficiency in \gls{acr:cbs}, and their results show that the dive continuity, rather than the raw number of expanded nodes alone, is an important indicator of implementation runtime~\cite{ID_boyarski2020}.
We therefore track the number of times the traversal starts or restarts a parent-child chain.

\begin{definition}[Dive break count]
\label{def:dive_break}
For an expansion sequence~$\tup{n_0,\ldots,n_T}$, the \emph{dive-break count} is
\begin{equation*}
    B_T := 1 + \left|\{t\in\{0,\ldots,T-1\}: n_{t+1}\notin C(n_t)\}\right| .
\end{equation*}
The initial $1$ counts the start of the first dive at the root. 
Lower $B_T$ means that the same processed-node set is covered by longer parent-child chains.
\end{definition}

Expanded nodes and dive breaks capture two limiting runtime regimes. 
If parent-child locality provides little benefit, runtime is driven mainly by $|E_T|$. 
If processing a child after its parent is nearly free, runtime is driven mainly by $B_T$.

\paragraph{Maximum queue size}
A major practical cost of \gls{acr:bnb} is the memory required to maintain the queue of open subproblems,~$\mathrm{OPEN}$. 
We measure it by
\begin{equation*}
    Q^{\max}:=\max_t |\mathrm{OPEN}_t| .
\end{equation*}
Queue size can also affect runtime through node-management overhead, especially when the frontier is frequently reordered or redistributed.

\paragraph{Primal-bound progress (intermediate solutions)}
A policy that enters $N^P$ may find feasible but not-yet-certified solutions. 
Such incumbents are valuable when the solve is interrupted, when a planner is embedded in a real-time system, or when users care about improving feasible solutions before the proof is complete. 
Relevant measurements include time to first incumbent, incumbent quality at a fixed budget, and the trajectory of~$z^{\mathrm P}_t$.

\paragraph{Dual bound progress (certificate of optimality)}
The dual bound measures certificate progress. 
Once an incumbent exists, the primal-dual gap
\begin{equation*}
    \mathrm{Gap}_t := \frac{z^{\mathrm P}_t-z^{\mathrm D}_t}{z^{\mathrm P}_t}
\end{equation*}
for $z^{\mathrm P}_t>0$ bounds the maximum relative improvement still possible. 
A policy can therefore be useful even before termination if it provides a good incumbent and a tight gap.

\subsection{Responsive Node Selection}
\label{sec:FormalizedNodeSelection_Feedback}

The preferred objective trade-off depends on the solve context. 
A memory-constrained robot may prefer a small open set. 
An offline benchmark may prioritize expanded-node count. 
An interactive planner may value an early incumbent and a meaningful gap certificate. 
A \emph{static} policy fixes this trade-off in advance. 
A \emph{responsive} policy lets the node-selection rule depend on monitored solve behavior while keeping all correctness-critical pruning tests unchanged.

We introduce \gls{acr:mcd} as a minimal example.
It follows \gls{acr:dive} while the frontier is below a soft memory threshold and switches to depth-first selection once the threshold is reached:
\begin{equation}
\pi_{\mathrm{MC\mbox{-}\gls{acr:dive}}}(S_t) :=
\begin{cases}
\pi_{\mathrm{\gls{acr:dive}}}(S_t), & |\mathrm{OPEN}_t| < Q_{\max},\\
\pi_{\mathrm{DFS}}(S_t), & |\mathrm{OPEN}_t| \geq Q_{\max}.
\end{cases}
\label{eq:mcdive_policy}
\end{equation}

The threshold is soft because processing one node can still add children before the next policy decision.
Exactness is unaffected: \gls{acr:mcd} still selects a node from $\mathrm{OPEN}$ at every iteration and relies on the same valid pruning rules as \gls{acr:dive}.

This example is intentionally simple. 
Its purpose is to show how node selection can form a feedback loop with the solve itself: use diversified \gls{acr:dive} when memory allows, then become more depth-oriented when the queue grows too large.
More elaborate responsive hybrids could adjust continuously between best-bound and depth-oriented behavior, or learn when long dives are likely to be productive on a given instance family. 
The next section focuses on the static policies that form the basic tradeoff extremes; \cref{sec:Experiments} then evaluates \gls{acr:mcd} as a proof of concept for responsiveness.

\begin{remark}[Responsive node selection in \gls{acr:mip}]
Responsive node selection is already commonplace in \gls{acr:mip}.
Solvers such as SCIP respond to memory constraints with more depth-focused search similarly to \gls{acr:mcd} \cite{SCIPThesis_achterberg2007}.
\end{remark}

\section{Node-Selection Trade-off Analysis}
\label{sec:TechnicalPolicyAnalysis}

This section compares \gls{acr:bfs}, \gls{acr:dfs}, \gls{acr:id}, and \gls{acr:dive} using the framework of \cref{sec:FormalizedNodeSelection}.
We will not identify a universally dominant policy. 
Instead, the analysis makes explicit which objective each policy is structurally designed to favor: \gls{acr:bfs} is strongest for expanded-node count and frontier-bound progress, \gls{acr:id} is strongest for queue size, and \gls{acr:dive} is strongest for dive continuity while still returning to the best-bound frontier between dives, occupying a unique position in the multi-objective design space.

All statements are with respect to the finite induced search tree of \cref{sec:FormalizedNodeSelection}. 
We assume no external warm start unless stated otherwise, and all policies use the same branching rule, lower-bound key, incumbent cutoff, and deterministic tie-breaking convention. 
Two assumptions are used below. 
The \emph{singleton optimal-zone assumption},~$|N^O|=1$, is used when comparing the nodes or dive breaks that are unavoidable for exactness. 
A separate \emph{perfect-tree} model is used only to obtain closed-form queue and dive-break counts.

\begin{definition}[Perfect search tree]
\label{def:perfectTree}
A \emph{perfect search tree} of depth~$d$ is a search tree (\cref{def:searchTree}) in which the root has depth 0, every leaf has depth~$d$, and every non-leaf node has exactly two children.
Its lower-bound key is strictly depth-consistent: for any nodes $m,n\in\mathcal T$, 
\begin{equation*}
    D(n)<D(m) \quad\Rightarrow\quad \ell(n)<\ell(m).
\end{equation*}
When analyzing traversal counts on a perfect tree, we consider the full-resolution case in which termination requires resolving every node of the tree.
This can be viewed as a stylized instance where the only feasible solution is encountered at the end of the final layer, so all nodes remain relevant to the certificate before termination.
Perfect-tree statements below describe the traversal of this complete depth-$d$ tree; they isolate structural queue and dive effects from instance-specific feasible-node placement.
\end{definition}

Note that the perfect-tree model is deliberately stylized. 
It should not be read as a generative model for every \gls{acr:cbs} tree. 
Its purpose is to make the policy trade-offs transparent in a controlled setting, while the experiments in \cref{sec:Experiments} test whether the same qualitative behavior appears on practical \gls{acr:mapf} instances.

\subsection{Best-First Search}

\gls{acr:bfs} always selects an open node with minimum lower-bound key:
\begin{equation*}
\pi_{\mathrm{BFS}}(S_t)\in\argmin_{n\in \mathrm{OPEN}_t} \ell(n).
\end{equation*}
Thus, before termination, \gls{acr:bfs} processes nodes in nondecreasing lower-bound order. 
It never needs to enter the primal zone:
\begin{equation*}
    E_T^{\mathrm{BFS}}\cap N^P=\emptyset .
\end{equation*}
This is the source of its node efficiency, but also the reason it provides no incumbent before the final iteration.

\begin{proposition}[\gls{acr:bfs} expanded-node optimality]
    \label{lem:BFS_NumNodes_SearchTree}
    Assume $|N^O|=1$. 
    Among exact policies on the same induced search tree, no policy expands fewer nodes than \gls{acr:bfs}.
\end{proposition}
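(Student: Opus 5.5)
The plan is a sandwich argument. First I show a lower bound that holds for every exact policy, and then show that \gls{acr:bfs} attains it.

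\emph{Lower bound.} By the completion criteria \eqref{eq:primal_crit} and \eqref{eq:dual_crit}, any exact policy terminating at iteration $T$ satisfies $N^D\subseteq E_T$ and processes at least one conflict-free node of $N^O$. Under the singleton assumption $|N^O|=1$, this node is the unique $n^*\in N^O$. Hence $n^*$ must be conflict-free, since $z^*$ in \eqref{eq:zstar_tree} is attained by a conflict-free node of value $z^*$. It follows that every exact policy has $N^D\cup\{n^*\}\subseteq E_T$, so $|E_T|\ge |N^D|+1$.

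Before using this bound I want to justify \eqref{eq:dual_crit} a little more carefully. If some $n\in N^D$ were never processed, then either it stayed open at termination, which is impossible since termination requires $\mathrm{OPEN}=\varnothing$, or it was pruned by the cutoff. The cutoff needs $\ell(n)\ge z^{\mathrm P}\ge z^*$, which contradicts $\ell(n)<z^*$. A third possibility is that $n$ was never generated. That can only happen if some ancestor of $n$ was pruned or was itself conflict-free. By monotonicity \eqref{eq:tree_monotone_bound}, every ancestor $m$ of $n$ has $\ell(m)\le\ell(n)<z^*$, so $m$ cannot be cut off. It also cannot be conflict-free, because a conflict-free node with $\ell(m)<z^*$ would contradict the definition of $z^*$. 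Hence every ancestor is expanded, and $n$ is generated. This reachability step is the part I expect to need the most care. It should be stated in terms of the conceptual tree $\mathbf T$, where "resolved" means processed. Nodes that are infeasible at the low level are terminal leaves, and the argument also covers them, since they are processed.

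\emph{Upper bound for \gls{acr:bfs}.} I will show $E_T^{\mathrm{BFS}}=N^D\cup\{n^*\}$. I already noted that $E_T^{\mathrm{BFS}}\cap N^P=\emptyset$, which follows because \gls{acr:bfs} processes nodes in nondecreasing key order. The argument for that fact goes as follows. While the incumbent is absent, $\mathrm{OPEN}$ always contains an ancestor of $n^*$, or $n^*$ itself, by the generation argument above. Every such node has key at most $z^*$, so the minimum over $\mathrm{OPEN}$ is at most $z^*$ and no primal-zone node is selected. By the same argument, while any dual-zone node remains unprocessed, some open node has key below $z^*$, so $n^*$ is not selected before all of $N^D$ is processed. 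Once $n^*$ is processed it becomes the incumbent, with $z^{\mathrm P}=z^*$. Every remaining open node then lies in $N^P$, because $N^O=\{n^*\}$ and $N^D$ has been exhausted. Each of these satisfies $\ell\ge z^*=z^{\mathrm P}$, so each is discarded by the cutoff.

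One subtlety concerns whether discarded open nodes count as "expanded". \Cref{alg:CBS} removes them from $\mathrm{OPEN}$ and skips them, and line-level pruning at insertion ($\ell(c)<z^{\mathrm P}$) cannot prevent them, because they were inserted before the incumbent existed. I plan to handle this by counting only nodes that are solved and branched, which matches the semantics of $E_T$. Alternatively, \gls{acr:bfs} can terminate as soon as the selected node is conflict-free, since $z^{\mathrm D}=z^{\mathrm P}$ at that point. I will state this convention explicitly. With either convention, $|E_T^{\mathrm{BFS}}|=|N^D|+1$, and combining this with the lower bound proves the proposition.
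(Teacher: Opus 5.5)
Your proposal is correct and follows the same route as the paper. Every exact policy needs at least $|N^D|+1$ processed nodes by \eqref{eq:primal_crit}--\eqref{eq:dual_crit} under $|N^O|=1$, and \gls{acr:bfs} attains this because it processes all of $N^D$ before the unique optimal-zone node. Your reachability justification of \eqref{eq:dual_crit} and your handling of open nodes discarded by the cutoff fill in details the paper leaves implicit. One correction: the counting convention you want is ``processed and not discarded by the cutoff'' rather than ``solved and branched,'' because $n^*$ is conflict-free and is never branched.
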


\begin{proof}
Every exact policy must process all nodes in~$N^D$ by \eqref{eq:dual_crit} and must process at least one feasible node in~$N^O$ by \eqref{eq:primal_crit}.
Under~$|N^O|=1$, this lower bound is $|N^D|+1$ processed nodes. 
Since \gls{acr:bfs} processes all nodes with~$\ell(n)<z^*$ before any node with~$\ell(n)=z^*$, and then processes the unique optimal-zone node, it attains this lower bound.

\end{proof}

\gls{acr:bfs} also gives the strongest possible frontier-bound progress at any fixed processed-node budget. 
Let $z_t^{\mathrm D,\pi}$ denote the open-frontier lower bound \eqref{eq:active_dual_bound} under policy $\pi$.

\begin{proposition}[\gls{acr:bfs} frontier-bound dominance]
    \label{lem:BFS_DualBound_SearchTree}
For any policy $\pi$ and any iteration count $t$,
\begin{equation*}
    z_t^{\mathrm D, \pi}\leq z_t^{\mathrm D, \mathrm{BFS}}.
\end{equation*}
Consequently, \gls{acr:bfs} maximizes the dual bound whenever the active dual bound of \eqref{eq:active_dual_bound} is determined by the open frontier.
\end{proposition}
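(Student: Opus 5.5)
The plan is a counting argument on the induced search tree $\mathbf T$. It uses two facts: the monotonicity of keys \eqref{eq:tree_monotone_bound}, and the fact that under any policy the processed set $E_t$ is closed under ancestors, because a node can only enter $\mathrm{OPEN}$ as a child of a processed node. Throughout, $z_t^{\mathrm D,\pi}$ is read as the open-frontier quantity $\underline z_t$ from \cref{sec:FormalizedNodeSelection_SolveCompletionCriteria}, evaluated under policy $\pi$.

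\emph{Step 1 (any policy).} Fix $\pi$ and $t$, and let $v:=\underline z^{\pi}_t=\min_{n\in\mathrm{OPEN}^\pi_t}\ell(n)$. I will show that every node $m\in\mathcal T$ with $\ell(m)<v$ lies in $E^\pi_t$.
\begin{itemize}
\item Suppose some $m$ with $\ell(m)<v$ is not in $E^\pi_t$.
\item Walk down the root-to-$m$ path and take the first node $u$ on it that is not in $E^\pi_t$. Since $m$ itself is unprocessed, such a $u$ exists.
\item If $u$ is the root, then $t=0$ and $u\in\mathrm{OPEN}$. Otherwise the parent of $u$ is processed, so $u$ was generated and, absent pruning, is still open.
\item Monotonicity gives $\ell(u)\le\ell(m)<v$, which contradicts the minimality of $v$.
\end{itemize}
Hence $|\{m:\ell(m)<v\}|\le |E^\pi_t| = t$.

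\emph{Step 2 (BFS).} Since keys are monotone, BFS processes nodes in nondecreasing key order. Suppose, for contradiction, that $w:=\underline z^{\mathrm{BFS}}_t<v$.
\begin{itemize}
\item The $t$ nodes processed by BFS all have key at most $w$. Otherwise BFS would have selected a node of key greater than $w$ while a node of key at most $w$ was available: either the open node attaining $w$ or one of its ancestors, which has key $\le w$ by monotonicity.
\item The open node attaining $w$ is not processed.
\end{itemize}
So at least $t+1$ distinct nodes have key $\le w<v$, contradicting Step 1. Therefore $\underline z^{\pi}_t\le\underline z^{\mathrm{BFS}}_t$. The "consequently" clause follows immediately, since whenever $z^{\mathrm D}_t=\underline z_t$ in \eqref{eq:active_dual_bound} the inequality transfers directly to the active dual bound.

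\emph{Main obstacle: pruning and termination.} The step I expect to be delicate is the claim in Step 1 that the unprocessed node $u$ is actually in $\mathrm{OPEN}^\pi_t$. Under $\pi$, incumbent cutoffs may discard generated children, and after termination the frontier is empty, with minimum $+\infty$.
\begin{itemize}
\item A discarded node has $\ell(u)\ge z^{\mathrm P,\pi}_t$. In that case $\ell(m)\ge z^{\mathrm P,\pi}_t$, so $v$ can exceed $z^{\mathrm P,\pi}_t$ only through pruning.
\item I would handle this by proving the statement for the unpruned frontier, meaning the ancestor-closed boundary of $E_t^\pi$, and then noting that pruning only removes nodes with key at least $z^{\mathrm P}_t$.
\item This is exactly why the conclusion is stated for the regime in which the active dual bound is determined by the open frontier. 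Before termination, BFS itself performs no pruning, since it never enters $N^P$.
\end{itemize}
Iterations $t$ past BFS termination are treated with the convention $+\infty$, under which the inequality holds trivially.
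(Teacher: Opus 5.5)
Your argument is correct and is essentially the paper's proof. The paper fixes a threshold $\lambda$ and uses the unprocessed-ancestor argument to show that any policy whose frontier bound is at least $\lambda$ has already processed every node with key below $\lambda$; it then observes that BFS exhausts exactly those nodes first, which is your Step~1 and Step~2 with $\lambda=v$, phrased as a count of at most $t$ versus at least $t+1$ nodes. Your explicit handling of incumbent pruning (discarded nodes have key at least $z^{\mathrm P}_t$, so the argument goes through for $\min\{z^{\mathrm P}_t,\underline z_t\}$) and of iterations after termination makes precise what the paper's sketch leaves implicit.
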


\begin{proof}
Fix any threshold $\lambda$. 
If a policy has $z_t^{\mathrm D,\pi}\geq\lambda$, then every node with key below $\lambda$ must already have been processed; otherwise such a node, or an unprocessed ancestor of it with key below $\lambda$, would still be open. 
\gls{acr:bfs} never processes a node with key at least $\lambda$ while a node with key below $\lambda$ is open.
Therefore, whenever any policy can exhaust all nodes below $\lambda$ within $t$ iterations, \gls{acr:bfs} can also do so within $t$ iterations. 
Since this holds for every $\lambda$, the claimed dominance follows.
\end{proof}

The same best-bound behavior is costly for memory.
On a perfect tree, \gls{acr:bfs} sweeps the tree layer by layer.

\begin{proposition}[\gls{acr:bfs} queue size on a perfect tree]
    \label{lem:BFS_MaxQueueSize_PerfectTree}
On a perfect tree of depth $d$, \gls{acr:bfs} reaches maximum queue size of~$2^d$.
\end{proposition}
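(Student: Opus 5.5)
The plan is to exploit the strict depth-consistency of the key on a perfect tree, which forces \gls{acr:bfs} to process nodes in order of depth. The first step is a layer-ordering claim: whenever \gls{acr:bfs} selects a node $n$ at depth $k$, no open node has depth smaller than $k$. Suppose some open node $m$ had $D(m)<k$. Then $\ell(m)<\ell(n)$ by \cref{def:perfectTree}, contradicting $n\in\argmin_{n'\in\mathrm{OPEN}_t}\ell(n')$. It follows that the expansion sequence consists of all nodes of depth $0$, then all nodes of depth $1$, and so on up to depth $d$, with tie-breaking acting only within a layer.

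Next we track $|\mathrm{OPEN}_t|$ using the update rule $\mathrm{OPEN}_{t+1}=(\mathrm{OPEN}_t\setminus\{n_t\})\cup C(n_t)$. We assume the full-resolution convention, so no cutoff removes nodes before termination. Processing an internal node changes the queue size by $-1+2=+1$, and processing a leaf changes it by $-1$. Since the first $2^d-1$ processed nodes are exactly the internal nodes (depths $0,\dots,d-1$), the size starts at $1$ and increases by one at each of these steps. Just after all internal nodes are processed, the size is therefore $1+(2^d-1)=2^d$. At that moment the open set is exactly the full leaf layer, which gives a direct check of the count.

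From then on only leaves are processed, so the size decreases monotonically to $0$. Before this point the size was strictly increasing, so the maximum is $2^d$. This gives $Q^{\max}=2^d$, attained when the last depth-$(d-1)$ node is expanded. It equals the total number of leaves, which is an upper bound for any policy in this bookkeeping. That remark is optional and not needed for the statement.

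The main subtlety is the termination convention rather than the arithmetic. If a feasible node were processed before the last layer is reached, the search could stop earlier, and incumbent pruning could also discard children. I would handle this by invoking the full-resolution clause of \cref{def:perfectTree} explicitly: every node is processed, children are always added, and the only feasible solution lies at the end of the final layer. Under that clause the $\pm1$ accounting above applies without exception. I would also state that the measurement point counts $\mathrm{OPEN}$ immediately before an iteration, which matches the definition of $Q^{\max}$.
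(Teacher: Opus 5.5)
Your proposal is correct and follows the paper's argument. Strict depth-consistency forces a layer-by-layer sweep, so all $2^d$ leaves are open immediately before the first leaf is processed, and afterwards the queue can only shrink because leaves generate no children; your explicit $+1/-1$ accounting also shows that every earlier queue is strictly smaller, a step the paper's two-line proof leaves implicit.
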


\begin{proof}
Immediately before the first leaf is processed, all $2^d$ leaves are open and no leaf has yet been removed. 
No later queue can be larger, because leaves do not generate children.
\end{proof}

Best-first traversal also produces many short dives, limiting opportunities to exploit parent-child efficiency.

\begin{proposition}[\gls{acr:bfs} dive breaks on a perfect tree]
\label{lem:BFS_NumDives_PerfectTree}
On a perfect tree of depth~$d\geq 1$, \gls{acr:bfs} requires between~$2^{d+1}-d-1$ and $2^{d+1}-2$ dive breaks.
The lower value is attained when tie-breaking makes the first node of each new layer a child of the last processed node in the preceding layer.
\end{proposition}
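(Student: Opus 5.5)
The plan is to count non-child transitions directly. This works because strict depth-consistency of the key forces \gls{acr:bfs} to process the perfect tree layer by layer. I will then split the $T=2^{d+1}-2$ consecutive transitions of the expansion sequence into \emph{within-layer} and \emph{between-layer} transitions, and bound each class separately inside the formula of \cref{def:dive_break}.

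\textbf{Step 1 (layer order).} Under the full-resolution convention of \cref{def:perfectTree}, all $2^{d+1}-1$ nodes are processed, so the sequence is $(n_0,\dots,n_T)$ with $T=2^{d+1}-2$. Suppose a node of depth $k+1$ were selected while a node $m$ of depth $\le k$ is still unprocessed. Then $m$, or an unprocessed ancestor of $m$, is open. By \eqref{eq:tree_monotone_bound} and strict depth-consistency that open node has strictly smaller key, contradicting the $\argmin$ rule of $\pi_{\mathrm{BFS}}$. An induction on $k$ then shows that the sequence consists of layer $0$, then layer $1$, and so on up to layer $d$, each layer appearing as a contiguous block.

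\textbf{Step 2 (within-layer transitions).} Two consecutive nodes of the same depth are never parent and child, so every within-layer transition is a break. Layer $k$ contributes $2^k-1$ such transitions. Summing over $k=0,\dots,d$ gives $\sum_{k=0}^{d}(2^k-1)=2^{d+1}-d-2$ forced breaks, independent of tie-breaking.

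\textbf{Step 3 (between-layer transitions).} There are exactly $d$ transitions from the last node of layer $k$ to the first node of layer $k+1$. The one from layer $0$ to layer $1$ is never a break, since layer $0$ is the root and every depth-$1$ node is its child. Each of the remaining $d-1$ transitions, for $k=1,\dots,d-1$, is a break or not depending on whether the first node of layer $k+1$ happens to be a child of the last node of layer $k$. This is determined by the within-layer ordering, which is set by tie-breaking among the layer's keys. So the number $x$ of between-layer breaks satisfies $0\le x\le d-1$.

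\textbf{Step 4 (combine).} By \cref{def:dive_break}, $B_T=1+(2^{d+1}-d-2)+x$. This lies between $2^{d+1}-d-1$ (at $x=0$) and $2^{d+1}-2$ (at $x=d-1$). The lower value is attained exactly when every between-layer transition for $k\ge 1$ continues a parent-child chain, which is the stated tie-breaking condition. The upper value is attained, for example, when the orderings of consecutive layers never place a child of the last node first.

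I expect the main obstacle to be Step 3: keeping the indexing honest. It is easy to overcount and get an upper bound of $2^{d+1}-1$ by forgetting that the root-to-layer-$1$ transition is always a child transition. The layer-order argument in Step 1 also needs the ancestor observation to be spelled out carefully.
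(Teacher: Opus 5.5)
Your proposal is correct and follows essentially the same argument as the paper. Breadth-first search sweeps the perfect tree layer by layer, so parent-child continuations can occur only at the $d$ layer transitions, and the root-to-depth-one transition is always one; this pins the count between $2^{d+1}-d-1$ and $2^{d+1}-2$. The differences are minor: you count breaks directly rather than subtracting continuations from the node count, and you justify the layer-by-layer order with the open-ancestor argument, which the paper simply asserts.
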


\begin{proof}
The perfect tree contains~$2^{d+1}-1$ nodes. 
By \cref{def:dive_break}, the dive-break count equals the number of processed nodes minus the number of consecutive parent-child continuations. 
In a layer-by-layer traversal, parent-child continuations can occur only during transitions between consecutive layers.
There are exactly~$d$ such transitions, so at most~$d$ parent-child continuations are possible, yielding the lower value. 
At least one continuation is unavoidable, from the root to the first depth-one node, yielding the upper value.

\end{proof}

Thus, \gls{acr:bfs} is the policy optimized for the dual side of exact search, as it minimizes unavoidable work under a singleton optimal zone and maximizes frontier-bound progress. 
Its weaknesses are equally structural, as it can require an exponentially large frontier on broad trees, creates many dive breaks, and does not produce intermediate incumbents before termination.

\subsection{Depth-First Search}
DFS imposes the exact opposite tradeoffs.
DFS selects the node n with the highest depth D(n):
\begin{equation*}
    \pi_{\mathrm{DFS}}(S_t)\in\argmax_{n\in \mathrm{OPEN}_t} D(n)
\end{equation*}
This policy is useful as an analytical baseline and as a component of \gls{acr:id}, \gls{acr:dive}, and \gls{acr:mcd}.
By prioritizing depth, it produces long dives and very small frontiers, which is attractive for exploiting parent-child efficiency. 

\begin{proposition}[\gls{acr:dfs} dive breaks]
\label{lem:DFS_NumDives_AnyTree}
For any rooted binary tree that is fully traversed, \gls{acr:dfs} attains the minimum possible dive-break count. 
The count is the number of leaves of the traversed tree, which is $2^d$ on a perfect tree of depth $d$.
\end{proposition}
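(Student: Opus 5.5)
The plan is to prove a lower bound that holds for every traversal order and then show that \gls{acr:dfs} meets it exactly.

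\emph{Lower bound.} Fix a traversed rooted binary tree with node set $E_T$ and expansion sequence $\tup{n_0,\ldots,n_T}$. By \cref{def:dive_break}, the sequence is cut into maximal dives, and $B_T$ is the number of these pieces. Each dive is a contiguous parent-child chain, so its nodes form a downward path of the tree. Every leaf has no children, so a dive that reaches a leaf must end there. Two distinct leaves can never lie on the same downward path. Hence every leaf is the endpoint of a distinct dive, which gives $B_T\ge$ the number of leaves of the traversed tree, for any policy.

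\emph{Matching upper bound.} Next I show \gls{acr:dfs} attains this count, meaning every dive break in its order occurs immediately after a leaf. Take any non-leaf $n_t$. Its children are added to $\mathrm{OPEN}$ and have depth $D(n_t)+1$. The claim to establish is that no open node is deeper than this. I would prove the invariant that, under \gls{acr:dfs}, every open node has depth at most one more than the last processed node. The invariant is proved by induction on $t$. The depth-first (LIFO) structure means the open set only ever contains children of ancestors of the current node, each at depth at most $D(n_t)$, together with the new children. So the argmax of depth is a child of $n_t$ and the dive continues. Tie-breaking among equal-depth open nodes is the delicate point. If $\pi_{\mathrm{DFS}}$ could pick a sibling subtree's node of equal depth instead of a child, the argument breaks. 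I would resolve this by noting that siblings of ancestors have depth at most $D(n_t)$, strictly less than the children's depth, so ties cannot involve non-children. This reduces the count to breaks only after nodes with no children, i.e.\ leaves. The final leaf ends the sequence without a break. Together with the initial $1$, this gives $B_T=$ the number of leaves.

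\emph{Perfect tree.} On a perfect tree of depth $d$ (\cref{def:perfectTree}), the leaves are exactly the $2^d$ depth-$d$ nodes, so $B_T=2^d$.

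\emph{Main obstacle.} I expect the depth invariant to be the hardest step, together with interpreting ``fully traversed'' as \gls{acr:dfs} processing exactly the given tree. I would first state the invariant as: $\mathrm{OPEN}_t$ consists of unprocessed siblings of nodes on the root path of $n_{t-1}$, plus the children of $n_{t-1}$. Both DFS facts above follow from it directly.
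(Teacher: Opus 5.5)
Your proposal is correct and follows the paper's argument. The lower bound comes from each parent-child chain containing at most one leaf, and the matching count comes from \gls{acr:dfs} extending every dive until a leaf, so distinct dives end at distinct leaves; the only difference is that you justify the continuation step through the open-set invariant (open nodes other than the children of $n_t$ have depth at most $D(n_t)$, which rules out ties with those children), whereas the paper simply asserts it.
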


\begin{proof}
A parent-child chain can contain at most one leaf, so any traversal needs at least one dive per leaf. 
\gls{acr:dfs} continues the current chain until a leaf is reached, and distinct dives terminate at distinct leaves.
It therefore uses exactly one dive per leaf.
\end{proof}

The corresponding memory behavior is the opposite of \gls{acr:bfs}.

\begin{proposition}[\gls{acr:dfs} queue size]
\label{lem:DFS_MaxQueueSize_BinaryTree}
In any binary tree with maximum depth $d$, \gls{acr:dfs} has maximum queue size at most $d+1$.
\end{proposition}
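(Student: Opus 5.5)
The plan is to prove an invariant on the structure of $\mathrm{OPEN}$ under \gls{acr:dfs}. The invariant is: at every iteration, the open set contains at most one node at each depth $k\ge 1$, plus possibly the root at depth $0$ (only before the first expansion). Because depths range over $\{0,1,\ldots,d\}$, this immediately gives $|\mathrm{OPEN}_t|\le d+1$.

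\textbf{Approach.} Rather than the loose per-depth invariant, I would establish a stronger, path-based one. At each iteration $t$, let $n_{t-1}$ be the last processed node and let $\gamma_t$ be the root-to-$n_{t-1}$ path. The claim is that every open node is a child of some node on $\gamma_t$ that is not itself on $\gamma_t$. In other words, open nodes are the unexplored siblings hanging off the current DFS stack. Two consequences follow. First, each node on $\gamma_t$ contributes at most one pending child, since the tree is binary and one child is either on the path or has already been processed. Second, the node $n_{t-1}$ itself contributes its (at most two) fresh children.

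\textbf{Steps.}
\begin{enumerate}
\item Verify the base case: $\mathrm{OPEN}_0=\{n_0\}$, so $1\le d+1$.
\item For the inductive step, observe that \gls{acr:dfs} picks a deepest open node $m$. If $n_{t-1}$ produced children, $m$ is one of them, because they are strictly deeper than every other open node; these are siblings hanging off ancestors on $\gamma_t$, which sit at depth at most $D(n_{t-1})$. If $n_{t-1}$ produced no children, $m$ is the deepest pending sibling, and the new path is the prefix of $\gamma_t$ up to $m$'s parent, extended by $m$.
\item In both cases, check that the new open set again consists of at most one pending child per node on the new path, with all pending nodes lying at distinct depths.
\item Count. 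Pending siblings occupy distinct depths in $\{1,\ldots,D(n_t)\}$, and processing $n_t$ at depth $D(n_t)\le d-1$ adds at most $2$ children at depth $D(n_t)+1$. Before the children are added, the set has at most $D(n_t)$ nodes. Immediately after expansion it has at most $D(n_t)-1+2\le d$ nodes, where the $-1$ accounts for $n_t$ being removed; this is combined with $n_t$'s own sibling accounting. Together with the root case, the bound $d+1$ follows. Pruned children (\cref{prop:DIVE_incumbent_cutoff}) only shrink $\mathrm{OPEN}$, so pruning does not affect the bound.
\end{enumerate}

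\textbf{Main obstacle.} The delicate step is the bookkeeping in step 4: making sure the two children of the just-expanded node do not collide in count with the sibling-per-depth accounting. The invariant has to be phrased in terms of the open set \emph{after} expansion. My first attempt would be to define it as ``at most one open node at each depth $1,\ldots,k-1$ and at most two at depth $k$, where $k$ is the depth of the deepest open node.'' That gives the bound $(k-1)+2=k+1\le d+1$ directly. I would then check that this invariant is preserved whether the selected node has children or not. Ties among equally deep nodes in $\argmax D(n)$ need a brief remark: at the deepest level there are at most two such nodes, and they are siblings, so the choice between them does not matter.
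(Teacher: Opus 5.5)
Your proposal is correct and follows the paper's argument: under DFS the open set is at most one pending sibling per depth along the current root-to-node path plus the continuation children, and your final invariant (at most one open node at each depth $1,\ldots,k-1$ and at most two at the deepest open depth $k$, giving $(k-1)+2=k+1\le d+1$) states this precisely. Build the proof on that final invariant rather than on the opening ``one node per depth'' claim or the step-4 count $D(n_t)-1+2\le d$, because both are off by one: the deepest level can hold both fresh children, and after $n_t$ is removed up to $D(n_t)$ siblings (including $n_t$'s own) can remain, so the correct count is $D(n_t)+2\le d+1$, which is attained on a perfect tree.
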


\begin{proof}
After processing a node on the current root-to-node path, the open set contains at most one unprocessed sibling branch for each depth on that path, plus possibly the next child to continue the path. 
Hence $|\mathrm{OPEN}_t|\leq k+1$ when the current path has depth $k$, and $k\leq d$.
\end{proof}

The advantages of \gls{acr:dfs} are therefore clear: it has excellent dive continuity and a small frontier. 
Its drawback is also clear. 
Without best-bound reanchoring or incumbent cutoffs, it can spend arbitrarily much effort in $N^P$ before resolving nodes in $N^D$, so it provides weak certificate progress and unreliable node count.

\subsection{Iterative Deepening}

\gls{acr:id} uses depth-first passes under increasing lower-bound thresholds. 
Each pass has the memory profile of \gls{acr:dfs}, and the sequence of newly discovered nodes follows best-first order when thresholds and tie-breaking match \gls{acr:bfs}. 
Its cost is repeated work, as shallow nodes are reprocessed in multiple passes.

\begin{proposition}[\gls{acr:id} queue size]
    \label{lem:ID_MaxQueueSize_BinaryTree}
In a binary tree with maximum depth $d$, \gls{acr:id} has maximum queue size at most $d+1$.
\end{proposition}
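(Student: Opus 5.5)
The plan is to reduce the statement to the \gls{acr:dfs} queue bound of \cref{lem:DFS_MaxQueueSize_BinaryTree}, applied separately to each pass. The approach is to view \gls{acr:id} as a sequence of independent passes indexed by thresholds $\lambda_1<\lambda_2<\cdots$. At the start of each pass the open set is reset to the root alone, and nothing from earlier passes is retained. The only thing carried between passes is the scalar threshold (and possibly the incumbent value). So $Q^{\max}$ for \gls{acr:id} is the maximum over passes of the peak queue size within a single pass. It then suffices to bound the queue within one pass by $d+1$.

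Within a fixed pass with threshold $\lambda$, the traversal is exactly \gls{acr:dfs} on the pruned subtree $\mathcal T_\lambda$. This subtree contains the nodes all of whose ancestors (and possibly the node itself, depending on whether pruning happens at generation or at selection) have key at most $\lambda$. Equivalently, children with key above $\lambda$ are either never inserted into $\mathrm{OPEN}$ or are discarded upon selection without generating children. In either convention, $\mathcal T_\lambda$ is a rooted binary subtree of $\mathbf T$, since it is closed under taking parents. Its maximum depth is at most $d$ because depth is inherited from $\mathbf T$. Applying \cref{lem:DFS_MaxQueueSize_BinaryTree} to $\mathcal T_\lambda$ gives $|\mathrm{OPEN}_t|\le d+1$ throughout the pass.

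The step needing care is the convention for threshold-exceeding children. Under \cref{alg:CBS}-style insertion, a child with $\ell(c)>\lambda$ might be placed in $\mathrm{OPEN}$ and only discarded when popped. It then still occupies a queue slot, and it sits at depth at most $d$. I would handle this by re-running the argument of \cref{lem:DFS_MaxQueueSize_BinaryTree} directly on $\mathbf T$: a leaf-like node that is popped and discarded generates no children. The invariant of at most one pending sibling per depth on the current path, plus one continuation child, therefore still holds, and the count stays at most $k+1\le d+1$. Incumbent cutoffs only remove nodes, so they cannot increase the count. This completes the bound.
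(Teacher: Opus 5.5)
Your proposal is correct and follows the paper's proof. Each threshold pass restarts from the root and is a depth-first traversal of a parent-closed subtree of depth at most $d$, so \cref{lem:DFS_MaxQueueSize_BinaryTree} bounds every pass, and hence the maximum over passes, by $d+1$. Your extra discussion of threshold-exceeding children is harmless but not needed: under the insert-then-discard convention those children are simply leaves of that same parent-closed subtree, and the \gls{acr:dfs} bound already covers them.
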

\begin{proof}
Each threshold pass is a depth-first traversal of a subtree with depth at most $d$.
By \cref{lem:DFS_MaxQueueSize_BinaryTree}, every pass uses queue size at most $d+1$, and the maximum over passes satisfies the same bound.
\end{proof}

\begin{proposition}[\gls{acr:id} dive breaks on a perfect tree]
\label{lem:ID_NumDives_PerfectTree}
On a perfect tree of depth $d$, \gls{acr:id} uses $2^{d+1}-1$ dive breaks.
\end{proposition}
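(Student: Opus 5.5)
The plan is to decompose the \gls{acr:id} expansion sequence on the perfect tree into its threshold passes. I would count the dive breaks inside each pass using \cref{lem:DFS_NumDives_AnyTree}, then add the breaks created at the boundaries between passes.

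First I fix the pass structure. By strict depth consistency in \cref{def:perfectTree}, every depth-$k$ key is smaller than every depth-$(k+1)$ key. Under the stylized full-resolution convention, I take the threshold sequence to advance layer by layer. Pass $k\in\{0,\ldots,d\}$ then processes exactly the nodes of depth at most $k$, in depth-first order from the root. The final pass $k=d$ is the one that resolves the whole tree. This step is the main obstacle and must be stated explicitly. If keys within one layer differ and the threshold is raised to the next distinct key value, as in IDA*, one layer can be split over several passes and the count changes. So I would state the result under the convention that one threshold increment admits one new layer, e.g.\ keys constant per depth. That is the same idealization under which \gls{acr:id} ``discovers new nodes in best-first order''.

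Second, I count breaks within a pass. Pass $k$ is a full \gls{acr:dfs} traversal of the perfect subtree of depth $k$. By \cref{lem:DFS_NumDives_AnyTree}, it consists of exactly $2^k$ maximal parent-child chains, one per depth-$k$ leaf. Third, I handle the pass boundaries. Pass $k-1$ ends at a node of depth $k-1$, and pass $k$ begins at the root. The root is not a child of any node, so $n_{t+1}\notin C(n_t)$ at every boundary. Hence no chain continues across passes, and the first chain of each pass starts a new dive. The very first pass contributes the initial $1$ in \cref{def:dive_break}.

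Combining these, the dive-break count is the sum of the per-pass counts:
\begin{equation*}
B_T=\sum_{k=0}^{d}2^k=2^{d+1}-1 .
\end{equation*}
As a sanity check, this equals the number of nodes in the tree. The count is independent of tie-breaking, since every \gls{acr:dfs} order of a perfect subtree has one chain per leaf.
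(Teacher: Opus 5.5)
Your proof is correct and follows the paper's argument. You treat each threshold pass as a DFS traversal of the depth-$k$ perfect subtree, apply the DFS dive-break lemma to get $2^k$ breaks per pass, and sum $\sum_{k=0}^{d}2^k=2^{d+1}-1$; your explicit check that every pass boundary is a break (the root is nobody's child) and your one-layer-per-threshold convention are details the paper leaves implicit in the phrase ``the pass with depth limit $k$''.
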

\begin{proof}
The pass with depth limit $k$ performs a depth-first traversal of a perfect tree of depth $k$, which uses $2^k$ dive breaks by \cref{lem:DFS_NumDives_AnyTree}. 
Summing over $k=0,\ldots,d$ gives
\begin{equation*}
    \sum_{k=0}^{d}2^k=2^{d+1}-1 .
\end{equation*}
\end{proof}

This shows why \gls{acr:idcbs} is not simply a dive-efficiency improvement over \gls{acr:bfs}.
The long depth-first chains in each individual threshold pass are balanced by the additional dive breaks in repetitive passes.
In the perfect-tree model, \gls{acr:id} uses $d$ more dive breaks than the most favorable \gls{acr:bfs} tie-breaking. 
Its primary structural advantage is memory, not parent-child continuity over the complete solve.

\begin{remark}[Memory accounting for iterative deepening]
The queue-size comparison for \gls{acr:id} should be interpreted with care.
Unlike one-pass policies such as \gls{acr:bfs}, \gls{acr:dfs}, and \gls{acr:dive}, \gls{acr:id} does not maintain a persistent frontier across the entire solve.
Instead, it repeatedly discards the current depth-first frontier and restarts from the root with a larger cost threshold. 
Thus, part of the information about which regions have already been exhausted is stored implicitly in the threshold rather than explicitly in \textsc{OPEN}. 
This is precisely why \gls{acr:id} can achieve a very small explicit queue, but it is also why the same nodes may be re-expanded across successive passes. 
Consequently, the linear queue-size bound for \gls{acr:id} should be read as a bound on explicit frontier memory, not as a claim that \gls{acr:id} stores all information needed for a one-pass traversal of the required subtree \(N_D \cup N_O\).
\end{remark}

\subsection{\texorpdfstring{\gls{acr:dive}}{DIVE}}
\label{sec:TechnicalPolicyAnalysis_DIVE}

\gls{acr:dive} combines best-bound reanchoring with depth-oriented continuation. 
Let
\begin{equation*}
    C_t^{\mathrm P}(n_{t-1}) := \{c\in C(n_{t-1})\cap\mathrm{OPEN}_t : \ell(c)<z_t^{\mathrm P}\}
\end{equation*}
be the open children of the previously processed node that can still improve the incumbent. 
\gls{acr:dive} selects
\begin{equation*}
\pi_{\mathrm{\gls{acr:dive}}}(S_t)=
\begin{cases}
\displaystyle \argmin_{c\in C_t^{\mathrm P}(n_{t-1})}\ell(c), & C_t^{\mathrm P}(n_{t-1})\neq\emptyset, \\
\pi_{\mathrm{BFS}}(S_t), & C_t^{\mathrm P}(n_{t-1})=\emptyset .
\end{cases}
\end{equation*}

Thus a dive continues only while an open child of the previous node remains capable of improving the incumbent. 
Otherwise, the next dive restarts from the best-bound frontier.

For a processed node set $E$, define its induced leaf set
\begin{equation*}
    \mathcal L(E):=\{n\in E : C(n)\cap E=\emptyset\} .
\end{equation*}

\begin{proposition}[\gls{acr:dive} number of dive breaks on any tree]
\label{lem:DIVE_NumDives_AnyTree}
For the rooted tree induced by the nodes processed by \gls{acr:dive}, \gls{acr:dive} attains the minimum possible dive-break count:~$|\mathcal L(E_T^{\mathrm{\gls{acr:dive}}})|$.
\end{proposition}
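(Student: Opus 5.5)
The plan is to prove matching lower and upper bounds on the dive-break count, both in terms of the induced leaf set $\mathcal L(E)$ with $E:=E_T^{\mathrm{DIVE}}$. By \cref{def:dive_break}, $B_T$ is exactly the number of dives in the expansion sequence, that is, the number of maximal parent-child chains. So the claim reduces to two statements. First, any traversal of the node set $E$ uses at least $|\mathcal L(E)|$ dives. Second, \gls{acr:dive} uses exactly that many. Throughout, $E$ is the set of nodes actually processed (expanded or resolved as conflict-free or infeasible). Nodes that are popped and immediately discarded by the incumbent cutoff are not counted in $E$, matching \cref{alg:B&B}, where children that fail the cutoff never enter $\mathrm{OPEN}$ in \cref{alg:CBS}.

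\emph{Lower bound.} This step adapts the argument of \cref{lem:DFS_NumDives_AnyTree} to the induced tree. Every dive is a chain $n_i,\ldots,n_j$ of nodes of $E$ in which each node is a child of the previous one, so it moves strictly downward in the induced tree. Take $n\in\mathcal L(E)$. No child of $n$ lies in $E$, so the node processed right after $n$, if there is one, is not a child of $n$. Hence $n$ is the last node of the dive containing it. Distinct dives have distinct last nodes, so the number of dives is at least $|\mathcal L(E)|$. This holds for any order of processing $E$.

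\emph{Upper bound for \gls{acr:dive}.} Here I show the converse: the last node of every \gls{acr:dive} dive lies in $\mathcal L(E)$. Suppose a dive ends at $n_t$. Then either $t=T$, or the policy in \cref{sec:TechnicalPolicyAnalysis_DIVE} fell back to $\pi_{\mathrm{BFS}}$ because $C^{\mathrm P}_{t+1}(n_t)=\emptyset$. In the first case $n_t$ is the final processed node, so no child of it can lie in $E$. In the second case I split by why the set is empty.

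\begin{enumerate}
\item $n_t$ had no children, because it was conflict-free or infeasible.
\item Its children were discarded at generation because $\ell(c)\ge z^{\mathrm P}$, so they never entered $\mathrm{OPEN}$.
\item Its open children all satisfy $\ell(c)\ge z^{\mathrm P}_{t+1}$.
\end{enumerate}

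The last case is the key one. The primal bound $z^{\mathrm P}$ is nonincreasing over time, so any such child that is later selected still has $\ell(c)\ge z^{\mathrm P}$ and is discarded by the cutoff rather than processed (\cref{prop:DIVE_incumbent_cutoff}). It also cannot have been processed before time $t$, because it was generated only when $n_t$ was expanded. So in all cases $C(n_t)\cap E=\emptyset$, which gives $n_t\in\mathcal L(E)$. Combined with the distinct-last-node observation, the map from dives to their last nodes is an injection into $\mathcal L(E)$. Therefore $B_T\le|\mathcal L(E)|$, and with the lower bound, $B_T=|\mathcal L(E)|$.

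The main obstacle is this third case, together with the accounting convention it relies on. The argument needs that nodes removed from $\mathrm{OPEN}$ and then cut off are not part of $E$. Otherwise a popped-and-discarded child would create an extra dive whose root is not an induced leaf of the expansion tree. I would state this convention explicitly, and I would rely on monotonicity of $z^{\mathrm P}_t$ in $t$, which holds because incumbents are replaced only by strictly better ones. Ties in the argmin and the fixed tie-breaking rule play no role in either direction of the argument.
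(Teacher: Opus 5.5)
Your proof is correct and follows the paper's own two-part argument. A parent-child chain can contain at most one induced leaf, which gives the lower bound for any ordering of $E$, and every \gls{acr:dive} dive terminates at a node with no processed child, so dives map injectively into $\mathcal L(E)$. Your case analysis on why $C^{\mathrm P}_{t+1}(n_t)$ is empty, the monotonicity of $z^{\mathrm P}$, and the explicit convention that popped-and-cut-off nodes are not counted in $E$ spell out what the paper compresses into the phrase that \gls{acr:dive} continues whenever a child ``can still be processed by the policy.''
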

\begin{proof}
Any parent-child chain contains at most one induced leaf, so any traversal of the same processed tree requires at least $|\mathcal L(E_T^{\mathrm{\gls{acr:dive}}})|$ dive breaks. 
\gls{acr:dive} continues from a node to one of its open children whenever such a child can still be processed by the policy. 
Therefore, a \gls{acr:dive} dive terminates only at a node with no processed child in the induced tree. 
Each dive ends at a distinct induced leaf, so \gls{acr:dive} attains the lower bound.

\end{proof}

The more important exact-search statement concerns the part of the tree that must be resolved to certify optimality.

\begin{proposition}[\gls{acr:dive} number of dive breaks on search tree]
\label{lem:DIVE_NumDives_SearchTree}
Assume $|N^O|=1$ and no external warm start. 
Let $\mathcal T_R:=N^D\cup N^O$. 
Among exact policies on the same induced search tree, no policy completes the solve with fewer dive breaks than \gls{acr:dive}.
\end{proposition}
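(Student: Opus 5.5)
The plan is a sandwich: a lower bound on dive breaks that holds for every exact policy, and an upper bound showing that \gls{acr:dive} meets it.

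\emph{Lower bound.} Every exact policy must process every node of $N^D$ by \eqref{eq:dual_crit}. Under $|N^O|=1$, it must also process the unique optimal-zone node $n^O$, by \eqref{eq:primal_crit}. So $E_T\supseteq\mathcal T_R$ for every exact policy.

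By monotonicity \eqref{eq:tree_monotone_bound}, every ancestor of a node in $\mathcal T_R$ has key at most $z^*$, so it also lies in $\mathcal T_R$. Hence $\mathcal T_R$ is a rooted subtree.

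I would then argue, as in \cref{lem:DFS_NumDives_AnyTree}, that any expansion sequence covering $\mathcal T_R$ needs at least $|\mathcal L(\mathcal T_R)|$ dive breaks. A dive is a parent-child chain, i.e.\ a path in the tree. Each leaf $u$ of the subtree $\mathcal T_R$ lies on some dive, and that dive cannot continue inside $\mathcal T_R$ below $u$. The delicate case is a dive that passes through a leaf of $\mathcal T_R$ and continues into $N^P$. It still contains at most one leaf of $\mathcal T_R$: two $\mathcal T_R$-leaves on one root-ward path would make one an ancestor of the other, contradicting leafness. So the number of dives is at least $|\mathcal L(\mathcal T_R)|$, which gives $B_T\ge|\mathcal L(\mathcal T_R)|$ for every exact policy.

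\emph{Upper bound for \gls{acr:dive}.} By \cref{lem:DIVE_NumDives_AnyTree}, \gls{acr:dive} uses exactly $|\mathcal L(E_T^{\mathrm{DIVE}})|$ dive breaks. It therefore suffices to show $|\mathcal L(E_T^{\mathrm{DIVE}})|\le|\mathcal L(\mathcal T_R)|$.

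I would prove the stronger claim that, with no warm start and $|N^O|=1$, \gls{acr:dive} never processes a node of $N^P$ before termination, so that $E_T^{\mathrm{DIVE}}=\mathcal T_R$. The argument has three parts.
\begin{itemize}
\item Before any incumbent exists, $z^{\mathrm P}=+\infty$. The only feasible node with key at most $z^*$ is $n^O$, so the first incumbent found in the zones $N^D\cup N^O$ is $n^O$.
\item Once $n^O$ is found, $z^{\mathrm P}=z^*$. The cutoff $\ell(c)<z^{\mathrm P}$ then blocks every primal-zone child, and the remaining $N^D$ nodes are processed by best-bound restarts.
\item The hard case is a dive that enters $N^P$ \emph{before} $n^O$ is found. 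While $z^{\mathrm P}=+\infty$, the dive follows the best child, whose key may exceed $z^*$.
\end{itemize}

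The last case is the main obstacle. I would first try to show such excursions are impossible or harmless, but I expect the honest route is different. A primal-zone excursion contributes no new induced leaf in $\mathcal T_R$: it hangs off a node of $\mathcal T_R$ and terminates at a conflict-free or cut-off node. The dive that makes the excursion also ends at that primal node, so the excursion does not create an extra dive break. For this I would count breaks only as restarts from the best-bound frontier. Each restart begins at a minimum-key open node, and I would argue that restarts are in bijection with $\mathcal T_R$-leaves. Because every dive proceeds from an open $\mathcal T_R$ node down its min-key child chain, the chain stays in $\mathcal T_R$ until it reaches a $\mathcal T_R$-leaf, and only afterwards can it leave. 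So each dive ends after exactly one new $\mathcal T_R$-leaf, matching the lower bound.

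If the excursion argument fails in general, I would state the result under the hypothesis that incumbents discovered in $N^P$ do not occur. Alternatively, I would fall back to comparing against the induced tree, via \cref{lem:DIVE_NumDives_AnyTree}.
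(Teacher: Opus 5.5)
Your final ``honest route'' is the paper's proof. Each restart takes a minimum-key open node, and this node lies in $\mathcal T_R$ while the solve is incomplete: $\mathcal T_R$ is ancestor-closed, and its nodes are never cut off, because $z^{\mathrm P}=z^*$ only once $n^O$ itself has been processed. The min-key child chain then stays in $\mathcal T_R$ until it reaches a fresh $\mathcal T_R$-leaf. Any later excursion into $N^P$ only lengthens that same dive and can never re-enter $\mathcal T_R$, so \gls{acr:dive} uses exactly $|\mathcal L(\mathcal T_R)|$ dives without any extra hypothesis.

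Discard the intermediate claim $E_T^{\mathrm{DIVE}}=\mathcal T_R$, because it is false in general: with $z^{\mathrm P}=+\infty$ a dive does continue from a $\mathcal T_R$-leaf into $N^P$, which is how \gls{acr:dive} finds early incumbents. Discard the fallback hypothesis as well, since it is not needed. The inequality $|\mathcal L(E_T^{\mathrm{DIVE}})|\le|\mathcal L(\mathcal T_R)|$ still holds, but only as a consequence of the restart argument, since each excursion is a single path hanging off a distinct $\mathcal T_R$-leaf.
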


\begin{proof}
By monotonicity of $\ell$, $\mathcal T_R$ is a rooted subtree: every ancestor of a node with key at most $z^*$ also has key at most $z^*$. Under $|N^O|=1$, every exact policy must process every node in $\mathcal T_R$. A parent-child chain can contain at most one leaf of $\mathcal T_R$, so every exact policy needs at least $|\mathcal L(\mathcal T_R)|$ dive breaks.

Before the solve is complete, every new \gls{acr:dive} dive starts from the best-bound open node. 
If any node in $\mathcal T_R$ remains open, this start node lies in $\mathcal T_R$. 
While the current node has a child in $\mathcal T_R$, that child has lower key than any child in $N^P$ and is therefore selected by \gls{acr:dive}. 
Thus each \gls{acr:dive} dive reaches exactly one leaf of $\mathcal T_R$ before either terminating or continuing into $N^P$. 
\gls{acr:dive} covers all leaves of $\mathcal T_R$ and uses exactly $|\mathcal L(\mathcal T_R)|$ dive breaks, matching the lower bound.
\end{proof}

\begin{remark}[Pareto extreme]
\Cref{lem:BFS_NumNodes_SearchTree,lem:ID_MaxQueueSize_BinaryTree,lem:DIVE_NumDives_SearchTree} identify three different structural extremes. 
\gls{acr:bfs} minimizes unavoidable expanded nodes under a singleton optimal zone, \gls{acr:id} preserves the linear-depth queue profile of \gls{acr:dfs}, and \gls{acr:dive} minimizes the dive breaks needed for exact search. 
This is the core technical reason \gls{acr:dive} should be viewed as a distinct Pareto point rather than as a replacement for either baseline.
This formal claim on minimizing dive breaks is not just new to the \gls{acr:cbs} context but to the broader literature of \gls{acr:bnb} to the best of our knowledge.
\end{remark}

\gls{acr:dive} does not match \gls{acr:bfs} on pointwise frontier-bound progress; \cref{lem:BFS_DualBound_SearchTree} says no non-best-first policy can. 
Its advantage over plain \gls{acr:dfs} is that every dive is followed by a best-bound restart. 
Consequently, primal-zone excursions are localized, as a poor branch choice can consume at most one dive before the policy reanchors at the global frontier.

This reanchoring is also visible at the level of bound trajectories.
To make this certificate behavior precise, consider the unit-increment perfect tree in which every node at depth~$k$ has key~$k$ and the optimal value is the leaf-layer value~$z^*=d$.
For a policy~$\pi$, let $z^{\mathrm D,\pi}_t$ denote the active dual bound after the $t$-th high-level expansion and let $T_\pi$ be the first iteration at which the incumbent is certified.
We measure certificate progress by the discrete bound-area metric
\begin{equation}
\label{eq:bound_auc}
    \mathcal A_\pi(d) := \sum_{t=0}^{T_\pi-1} \bigl(z^*-z^{\mathrm D,\pi}_t\bigr).
\end{equation}
This is the area under the remaining duality gap (smaller values are better).
The metric rewards both a high dual bound and reaching certification in few expansions, so it is more informative than comparing only the final certificate time or only the pointwise bound.

\begin{figure}[tb]
    \centering
    \subfloat{%
        \includegraphics[width=0.5\linewidth]{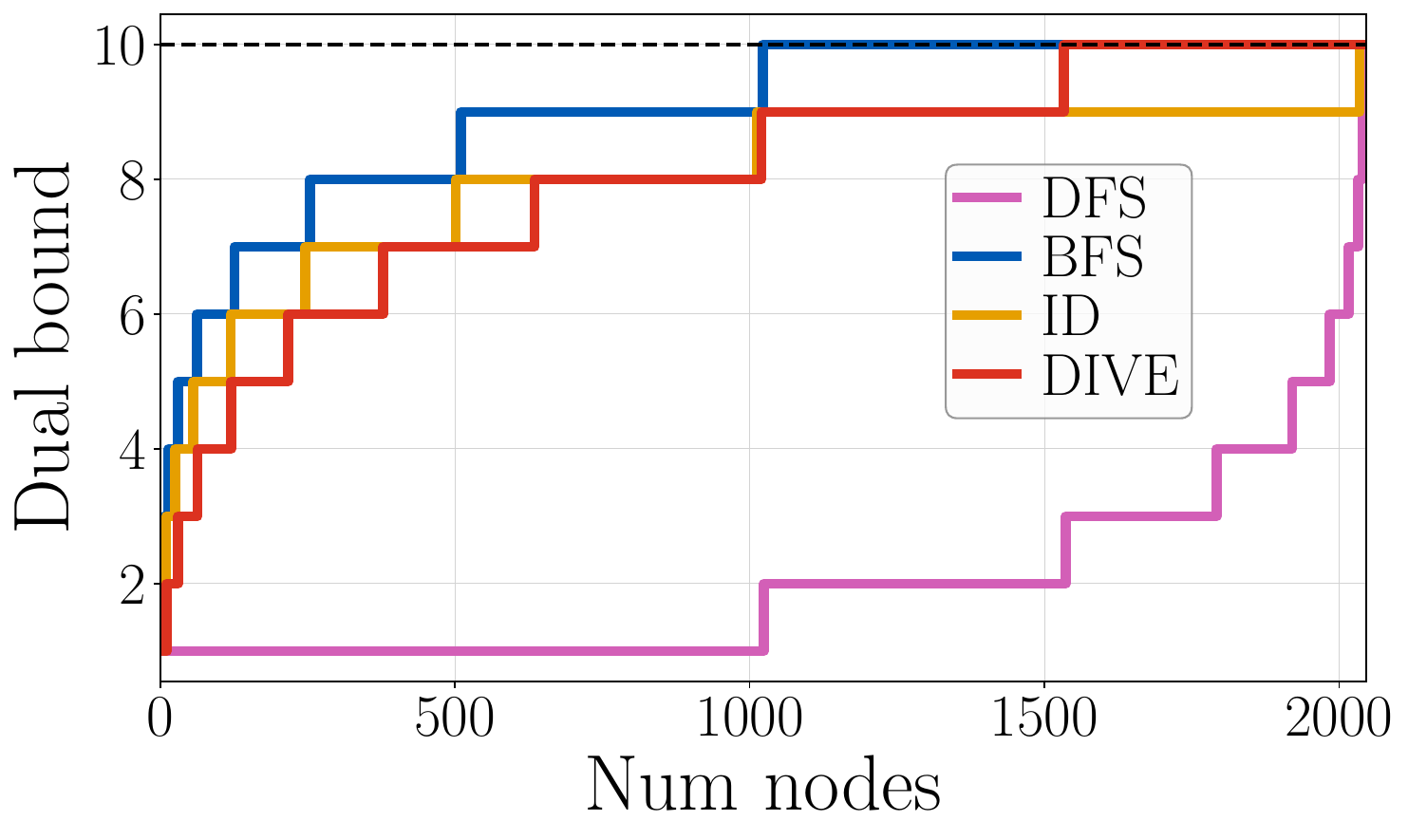}%
    }%
    \subfloat{%
        \includegraphics[width=0.5\linewidth]{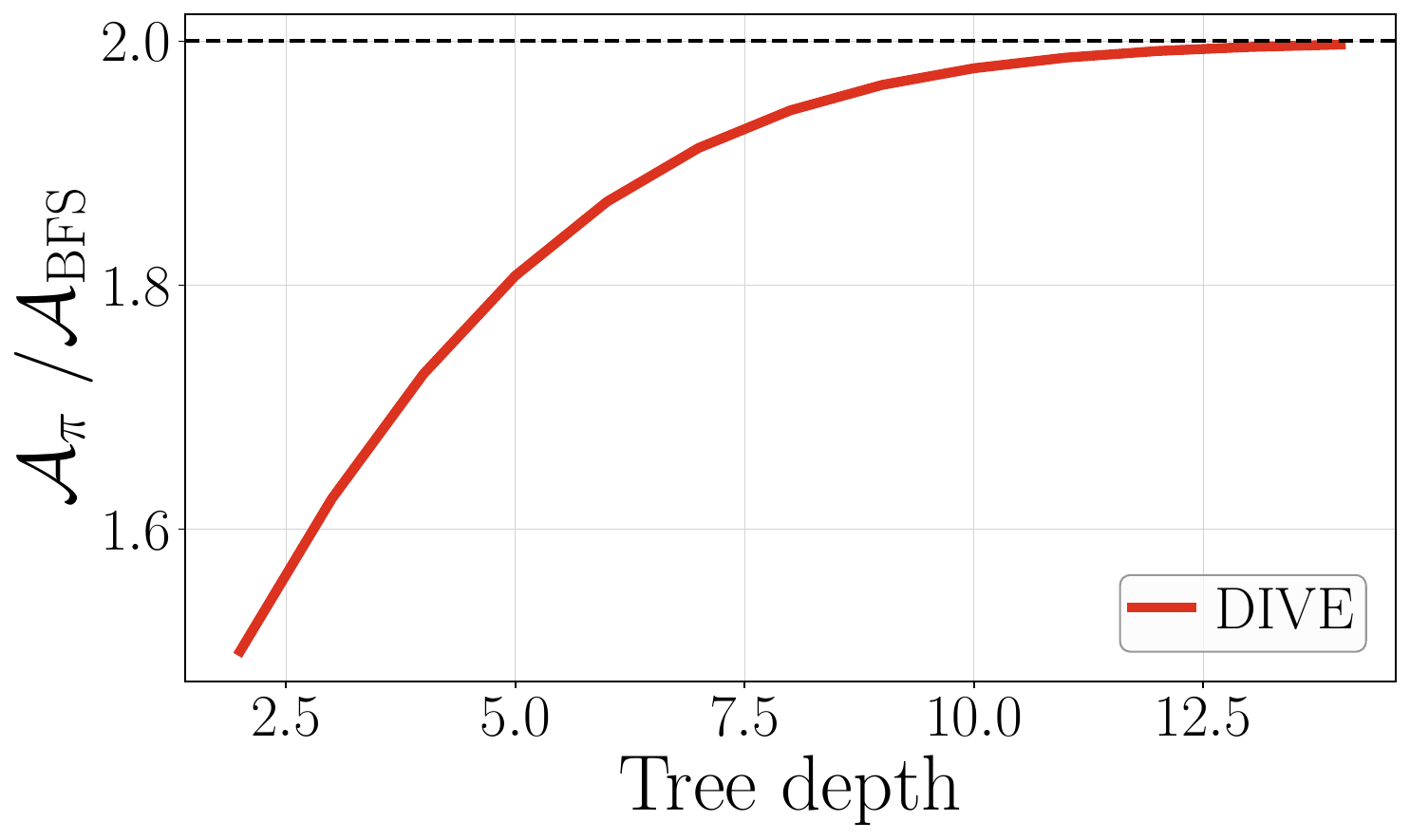}%
    }%
    \caption{Left: Illustrative frontier-bound progression on a unit-increment perfect tree of depth $10$. 
    \gls{acr:bfs} is pointwise best, but \gls{acr:dive} repeatedly returns to the best-bound frontier and therefore avoids the prolonged certificate stagnation of plain \gls{acr:dfs}. Right: Illustrative bound-progress of \gls{acr:dive} relative to \gls{acr:bfs} on unit-increment perfect trees. 
    The curve approaches two as the depth grows, reflecting the cost of \gls{acr:dive}'s depth probes in this stylized setting.}
    \label{fig:DualBound_PerfectTree}
    \label{fig:AUC}
\end{figure}

\Cref{fig:DualBound_PerfectTree,fig:AUC} summarize the resulting stylized calculation.
As expected from \cref{lem:BFS_DualBound_SearchTree}, \gls{acr:bfs} is pointwise dominant in dual-bound progress.
The more useful observation is that \gls{acr:dive} tracks the certificate frontier much more closely than plain \gls{acr:dfs}: after each depth probe, it returns to the best-bound layer instead of continuing indefinitely in a single deep region.
In the unit-increment perfect tree, a direct enumeration of the traversal gives
\begin{equation*}
    \mathcal A_{\mathrm{\gls{acr:dive}}}(d)=\mathcal A_{\mathrm{ID}}(d),
    \qquad
    \frac{\mathcal A_{\mathrm{\gls{acr:dive}}}(d)}{\mathcal A_{\mathrm{BFS}}(d)} \longrightarrow 2 .
\end{equation*}
Thus \gls{acr:dive} and \gls{acr:id} pay the same aggregate dual-bound price in this stylized model, but in different ways: \gls{acr:dive} pays it through temporary depth probes, whereas \gls{acr:id} pays it through repeated restarts.
This equality is not used as a general theorem for arbitrary \gls{acr:cbs} trees.
Its role is to expose a useful structural point, as \gls{acr:dive}'s dive continuity does not come from abandoning the certificate frontier, but from repeatedly making short, controlled departures from it.

In general search trees, \gls{acr:dive}'s initial dual-bound lag can be larger when a dive enters the primal zone.
The same excursions, however, are exactly what can produce early incumbents.
Once an incumbent exists, \gls{acr:dive} can report a primal--dual gap throughout the remainder of the solve, and if the incumbent is already optimal then the solve terminates as soon as the dual bound catches up.
By contrast, \gls{acr:bfs} and \gls{acr:id} usually provide no primal bound before they reach an optimal feasible node, even when their dual bounds are strong.

\gls{acr:dive}'s memory behavior lies between \gls{acr:bfs} and \gls{acr:id}. 
It leaves one unchosen sibling behind at each internal node of a dive, but it does not maintain the entire best-bound layer as \gls{acr:bfs} does.

\begin{proposition}[\gls{acr:dive} queue size on a perfect tree]
\label{lem:DIVE_MaxQueueSize_PerfectTree}
On a perfect tree of depth $d\geq 1$, \gls{acr:dive} reaches maximum queue size $2^{d-1}+1$ under the queue convention of \cref{sec:FormalizedNodeSelection_NodeSelectionObjectives}.
\end{proposition}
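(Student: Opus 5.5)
The plan is to prove the upper bound $|\mathrm{OPEN}_t|\le 2^{d-1}+1$ by a counting argument over the depth-$(d-1)$ layer, and then to exhibit one iteration where this value is attained. Throughout, there is no incumbent before the final layer is resolved (full-resolution convention of \cref{def:perfectTree}), so $z^{\mathrm P}_t=+\infty$. Hence the cutoff in \cref{alg:DIVE} never stops a dive. DIVE therefore continues from the last processed node to its best child whenever that node has children, and otherwise restarts at $\pi_{\mathrm{BFS}}(\mathrm{OPEN})$. By strict depth-consistency, this restart is a shallowest open node.

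\emph{Step 1 (structural facts).} I will record three observations. First, the open nodes are roots of pairwise disjoint unprocessed subtrees, since each open node was generated as a child of a processed node and has not been expanded. Second, every open leaf has a processed parent at depth $d-1$. Third, whenever a depth-$(d-1)$ node $u$ is processed, the very next iteration processes one of its two leaf children, because the dive continues. Consequently, at any iteration at most one processed depth-$(d-1)$ node still has both leaf children open, namely the one processed in the immediately preceding iteration. Every other processed depth-$(d-1)$ node has at most one open leaf child.

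\emph{Step 2 (upper bound).} Let $P_t$ be the number of depth-$(d-1)$ nodes processed before iteration $t$. By Step~1, the number of open leaves is at most $P_t+1$. Each open internal node, at some depth $k\le d-1$, is the root of an unprocessed subtree containing $2^{d-1-k}\ge 1$ depth-$(d-1)$ nodes, all of them unprocessed. By disjointness, the number of open internal nodes is at most $2^{d-1}-P_t$. Summing the two counts gives $|\mathrm{OPEN}_t|\le 2^{d-1}+1$ for every $t$. I expect the care to lie here: the count must be stated for the exact moment $\mathrm{OPEN}_t$ of \cref{sec:FormalizedNodeSelection_NodeSelectionObjectives}, that is, after the children of $n_{t-1}$ have been added. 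The ``$+1$'' must also be correctly charged only to the most recently processed depth-$(d-1)$ node. Leaves removed by dive ends or by best-bound restarts only decrease the leaf count, so they do not threaten the bound.

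\emph{Step 3 (attainment).} Consider the iteration immediately after the last depth-$(d-1)$ node $u$ is processed, so that $P_t=2^{d-1}$. I claim the following three things hold at that moment. First, no internal node is open. Every node of depth $\le d-1$ has been processed: a shallower open node would have been an ancestor of some unprocessed depth-$(d-1)$ node, contradicting $P_t=2^{d-1}$. Second, each of the other $2^{d-1}-1$ depth-$(d-1)$ nodes still has exactly one open leaf child. Its dive consumed exactly one child immediately, by Step~1. Any further leaf processing would require a best-bound restart landing on a leaf, which cannot occur while an internal node is open, by depth-consistency; and some internal node (at least an ancestor of $u$, or $u$ itself) stayed open or was being processed until $u$'s expansion. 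Third, $u$ contributes both of its children. Hence $|\mathrm{OPEN}_t|=2^{d-1}-1+2=2^{d-1}+1$, which matches the upper bound. As a sanity check, this reproduces the values $2$ and $3$ obtained by direct simulation for $d=1$ and $d=2$.
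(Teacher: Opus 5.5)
Your proof is correct, and it takes a genuinely different route from the paper.

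\textbf{The paper's route.} The paper argues dynamically, phase by phase. It groups dives by the depth $k$ of their start node and counts $2^{k-1}$ start nodes at each depth $k\geq 1$. It uses that a dive from depth $k$ has net queue change $d-k-1$ and peak increase $d-k$. It also uses that best-bound restarts finish all depth-$k$ starts before any depth-$(k+1)$ start. Summing gives the phase peaks $M_k=d+1+\sum_{j=1}^{k}2^{j-1}(d-j-1)$. The paper then locates the maximum at $k\in\{d-2,d-1\}$ and evaluates the sum in closed form.

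\textbf{Your route.} You prove a static invariant that holds at every iteration. Open internal nodes are charged injectively to unprocessed depth-$(d-1)$ nodes, through the disjoint unprocessed subtrees they root. Open leaves are charged to processed depth-$(d-1)$ nodes, with a single surplus from the node expanded in the previous iteration. The $P_t$ terms then cancel.

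\textbf{What each approach buys.} Your upper bound needs no phase ordering and no summation. It uses only one fact: expanding a penultimate-layer node is immediately followed by expanding one of its children. So it holds for any tie-breaking, and indeed for any policy with that property. Best-bound restarts, via strict depth-consistency, enter only in the attainment step. Your attainment moment, right after the last depth-$(d-1)$ node is expanded, is exactly where the paper's final phase peak $M_{d-1}$ occurs. In Step 3, the cleanest way to state your justification is this: before $u$ is selected, the first unprocessed node on the root-to-$u$ path is open and internal, so no best-bound restart can land on a leaf. In return, the paper's accounting gives the full phase-by-phase queue profile, including the growth and the plateau $M_{d-2}=M_{d-1}$. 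It also reuses the start-node structure of the dive analysis. Your argument certifies only the maximum value.
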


\begin{proof}
Call the first node of a dive its start node.
The root is the unique start node at depth $0$.
For each $k\geq1$, there are $2^{k-1}$ start nodes at depth $k$: each internal node at depth $k-1$ contributes exactly one child that is not followed in its parent's dive and is therefore saved as a future start node.

A dive starting at depth $k$ increases the queue by one for each internal node on the dive and then removes the terminal leaf.
Hence its net change after completion is $d-k-1$, while its maximum temporary increase during the dive is $d-k$.
Let $M_k$ be the largest queue size reached while processing start nodes of depth $k$.
After the root dive and all completed start-node dives of depths $1,\ldots,k-1$, direct summation gives
\begin{equation*}
    M_k=d+1+\sum_{j=1}^{k}2^{j-1}(d-j-1), \qquad k=0,\ldots,d-1,
\end{equation*}
where the empty sum gives $M_0=d+1$.
The increment from $M_{k-1}$ to $M_k$ is $2^{k-1}(d-k-1)$, which is positive for $k\leq d-2$ and zero for $k=d-1$.
Therefore the maximum is attained for $k=d-2$ or $k=d-1$.
Evaluating the sum yields
\begin{equation*}
    Q^{\max}_{\mathrm{\gls{acr:dive}}}=d+1+\sum_{j=1}^{d-2}2^{j-1}(d-j-1)=2^{d-1}+1 .
\end{equation*}
\end{proof}

Thus, on the perfect tree, \gls{acr:dive} uses essentially half the frontier of \gls{acr:bfs}, while \gls{acr:id} remains exponentially smaller.
This is precisely the intended compromise: \gls{acr:dive} gives up the linear-depth memory guarantee of \gls{acr:id} in order to avoid repeated restarts and preserve best-bound diversification between dives.

\begin{table}[t]
\setlength{\abovecaptionskip}{\TableCaptionSkip}
\caption{On a perfect tree of depth $d$, \gls{acr:id} sacrifices dive breaks while DIVE sacrifices maximum queue size while in order to control primal zone exploration.}
\label{tab:perfect_tree_stats}
\centering
\renewcommand{\arraystretch}{\TableVerticalStretch}
\begin{NiceTabular}{l|cc}
\toprule
Policy & Max queue size & Number of dive breaks\\
\midrule

\gls{acr:bfs}  & $2^d$       & $[2^{d+1}-d-1,\,2^{d+1}-2]$ \\
DFS     & $\mathbf{d+1}$   & $\mathbf{2^d}$     \\
ID      & $\mathbf{d+1}$   & $2^{d+1}-1$        \\
DIVE    & $2^{d-1}+1$         & $\mathbf{2^d}$    \\

\bottomrule
\end{NiceTabular}

\end{table}

\Cref{tab:perfect_tree_stats} summarizes the stylized counts.
\gls{acr:dfs} is optimal on dive breaks and queue size when the whole tree must be traversed, but it lacks a mechanism to control superfluous primal-zone exploration.
\gls{acr:id} keeps the \gls{acr:dfs} memory profile but pays for it through repeated threshold passes.
\gls{acr:dive} keeps the \gls{acr:dfs} dive-break profile on the required subtree while restoring best-bound reanchoring between dives.
This combination is the structural tradeoff evaluated empirically in \cref{sec:Experiments}.

\section{Experiments}
\label{sec:Experiments}
We evaluate \gls{acr:dive} against \gls{acr:bfs} and \gls{acr:id} in a shared \gls{acr:cbs} implementation to test whether the structural trade-offs developed in previous sections persist in practice.
The central prediction is that \gls{acr:dive} should not uniformly dominate the existing policies.
Rather, \gls{acr:bfs} should be strongest in expanded-node count and dual-bound progress, \gls{acr:id} should be strongest in explicit queue size, and \gls{acr:dive} should occupy a distinct Pareto point by minimizing dive breaks while retaining best-bound reanchoring and producing quality intermediate solutions.

The experiments therefore serve five roles. 
First, they locate \gls{acr:dive} within the multi-objective design space induced by expanded nodes, dive breaks, queue size, and bound progression. 
Second, they explain \gls{acr:dive}'s node overhead through zone accounting, showing whether the extra work comes from controlled primal-zone exploration rather than uncontrolled search. 
Third, they test whether that primal-zone exploration translates into useful anytime behavior, namely early incumbents together with certified primal-dual gaps before the exact solve is complete. 
Fourth, they examine whether warm starts stabilize \gls{acr:dive} in dense regimes by activating incumbent pruning early. 
Finally, they isolate the two mechanisms that make \gls{acr:dive} different from plain depth-first search: diversified search through best-bound reanchoring, and feedback through a simple memory-responsive variant.

\subsection{Experimental Protocol}

\subsubsection{Implementation}
All policies are implemented in the same code base and share the same low-level search, conflict selection, admissible high-level heuristic, bypassing logic, pruning rules, and node data structures. 
The main comparisons involve \gls{acr:bfs}, \gls{acr:id}, and \gls{acr:dive}.
Additional policies are used only for ablations in \cref{sec:Experiments_Ablation,sec:Experiments_Responsive}.
The only algorithmic difference in the main comparison is the high-level node-selection rule.
All experiments were run single-threaded on a Dell Latitude 5530 with an Intel Core i7-1265U processor and 16 GB RAM under Windows 11 Pro. 
Wall-clock time is used only as a truncation criterion and, for \gls{acr:dive}, as an operational measure of how quickly the first incumbent is found.

\subsubsection{Tie-breaking and shared CBS features}
When two nodes have the same lower bound, \gls{acr:bfs} and \gls{acr:dive} both prefer the deeper node.
If depth is also tied, we prefer the node whose selected conflict has higher cardinality. 
For \gls{acr:id}, depth is the primary key within a threshold pass and lower bound is used as the secondary key. 
This tie-breaking gives \gls{acr:bfs} and \gls{acr:id} substantial opportunity to exploit parent-child locality whenever the lower bound is indifferent, making the comparison conservative with respect to \gls{acr:dive}'s dive continuity advantage.

To avoid evaluating node selection inside an unrealistically weak solver, all policies use the same standard \gls{acr:cbs} enhancements: cardinal-conflict prioritization, bypassing, and an admissible high-level heuristic based on a greedy matching of independent cardinal conflicts. 
We also give all policies the same priority for conflict-free nodes once they appear in the queue. 
This implementation detail can only help the baseline policies in incumbent-related comparisons, because it allows them to process an available feasible node earlier than a strict best-bound-only implementation would.

\subsubsection{Benchmarks and instance generation}
We evaluate on four standard map families: \texttt{empty-32-32}, \texttt{random-32-32-20}, \texttt{room-32-32-4}, and \texttt{warehouse-small} \cite{MAPFBenchmarks_stern2019}. 
Each map is tested at 3\%, 5\%, 10\%, and 15\% robot occupancy. 
We also use \texttt{test-5-5} as a dense stress test in \cref{sec:Experiments_WarmStart}; its small low-level graph makes it possible to evaluate 33\% occupancy without making the low-level search itself dominate the experiment.

For \texttt{empty-32-32}, \texttt{random-32-32-20}, and \texttt{room-32-32-4}, the first 25 instances per robot count are generated from the even scenarios of the standard \gls{acr:mapf} benchmark~\cite{MAPFBenchmarks_stern2019}. 
Additional instances, and all instances for the other maps, are generated by sampling start and goal vertices uniformly without replacement from the free cells and discarding invalid or duplicate draws. 
The same paired instance set is used for every policy.

\subsubsection{Filtering and reported metrics}
For complete-solve comparisons, we exclude instances on which at least one policy solves the problem in at most 100 high-level node expansions. 
These shallow cases are useful as tests, but they are too small for the structural metrics of \cref{sec:FormalizedNodeSelection_NodeSelectionObjectives} to be informative. 
We also truncate each run after five minutes of wall-clock time. 
An instance is called \emph{complete} if all three main policies solve it within the cutoff and the instance is not excluded by the 100-node filter. 
\Cref{tab:instance_validity_table} reports the full accounting.\footnote{The full experimental campaign required five days and fourteen hours of cumulative compute time.}{} 
For each map, at least one robot count has at least 30 complete instances, highlighted in bold.

The primary reported quantities are expanded nodes, dive breaks, and maximum queue size.
These are the implementation-independent quantities that most directly correspond to the objectives in \cref{sec:FormalizedNodeSelection_NodeSelectionObjectives}: expanded nodes approximate high-level search effort, dive breaks approximate the loss of parent-child locality, and maximum queue size approximates explicit frontier memory. 
We do not claim that these metrics fully determine wall-clock time for every \gls{acr:cbs} implementation. 
Rather, they isolate the algorithmic effect of node selection from lower-level engineering choices.

\begin{table}[tb]
\setlength{\abovecaptionskip}{\TableCaptionSkip}
\caption{Generated-instance accounting. 
Complete (C): all main policies solved the instance within the cutoff and the instance passed the 100-node filter.
Timeout (T/O): at least one policy hit the 5-minute cutoff.
Insufficient nodes (IN): at least one policy solved the instance in at most 100 high-level expansions.}
\label{tab:instance_validity_table}
\centering
\renewcommand{\arraystretch}{\TableVerticalStretch}   
\begin{NiceTabular}{lcc|cccc}
    \toprule
    Map & & \# robots & C & T/O & IN & Tot \\
    \midrule
    \multirow[c]{4}{*}{empty-32-32}
     & \multirow[c]{4}{*}{\includegraphics[width=1cm]{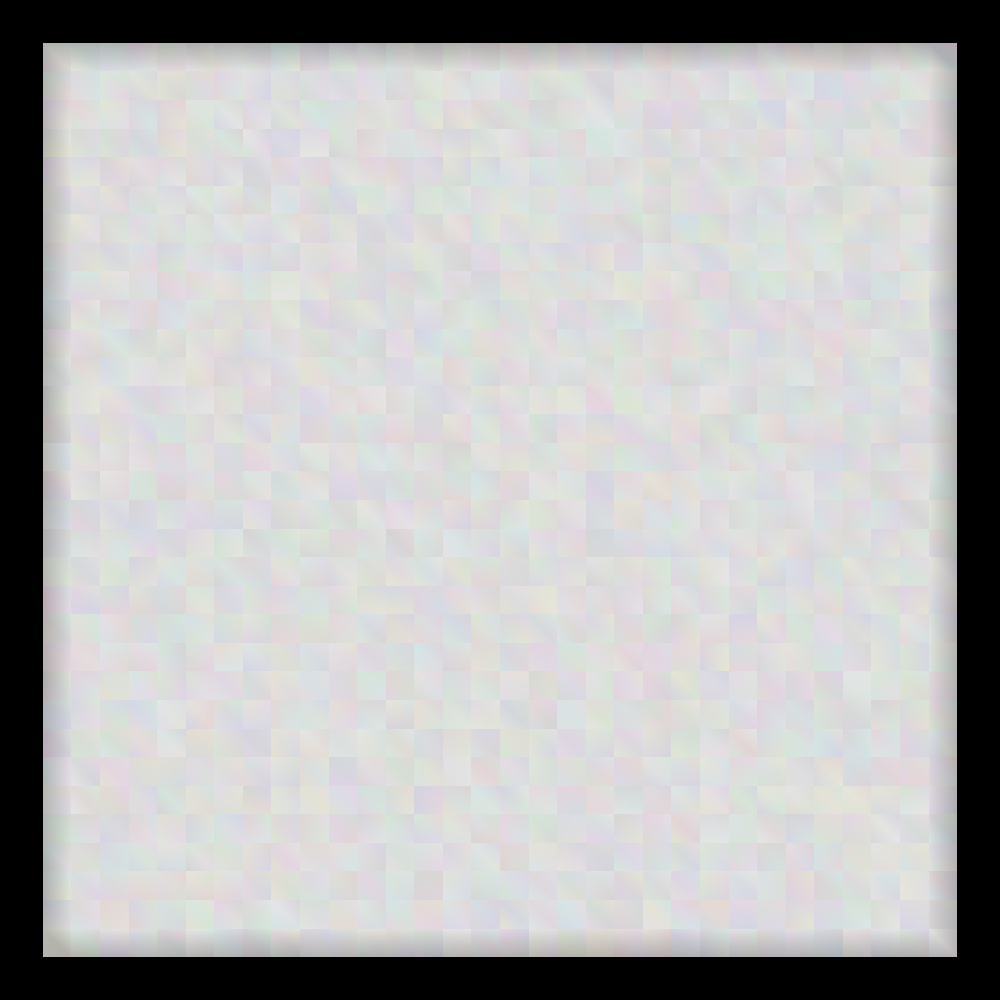}}
     & 31 (3\%)  & 0  & 1  & 24 & 25 \\
     & & \textbf{51 (5\%)}  & \textbf{31} & \textbf{37} & \textbf{33} & \textbf{101} \\
     & & 102 (10\%) & 0  & 25 & 0  & 25 \\
     & & 154 (15\%) & 0  & 10 & 0  & 10 \\
    \midrule

    \multirow[c]{4}{*}{random-32-32-20}
     & \multirow[c]{4}{*}{\includegraphics[width=1cm]{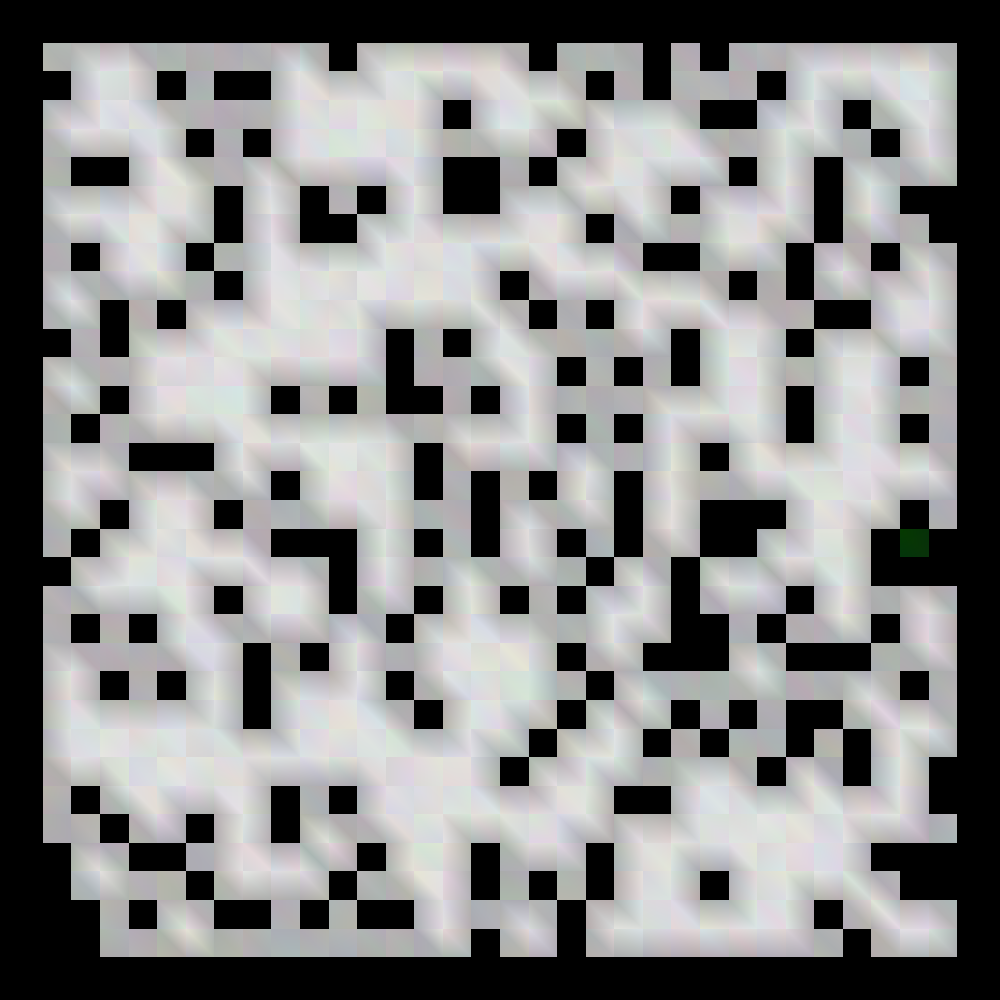}}
     & 25 (3\%)  & 3  & 1  & 21 & 25 \\
     & & \textbf{41 (5\%)}  & \textbf{36} & \textbf{20} & \textbf{4} & \textbf{60} \\
     & & 82 (10\%)  & 0  & 25 & 0  & 25 \\
     & & 123 (15\%) & 0  & 10 & 0  & 10 \\
    \midrule

    \multirow[c]{4}{*}{room-32-32-4}
     & \multirow[c]{4}{*}{\includegraphics[width=1cm]{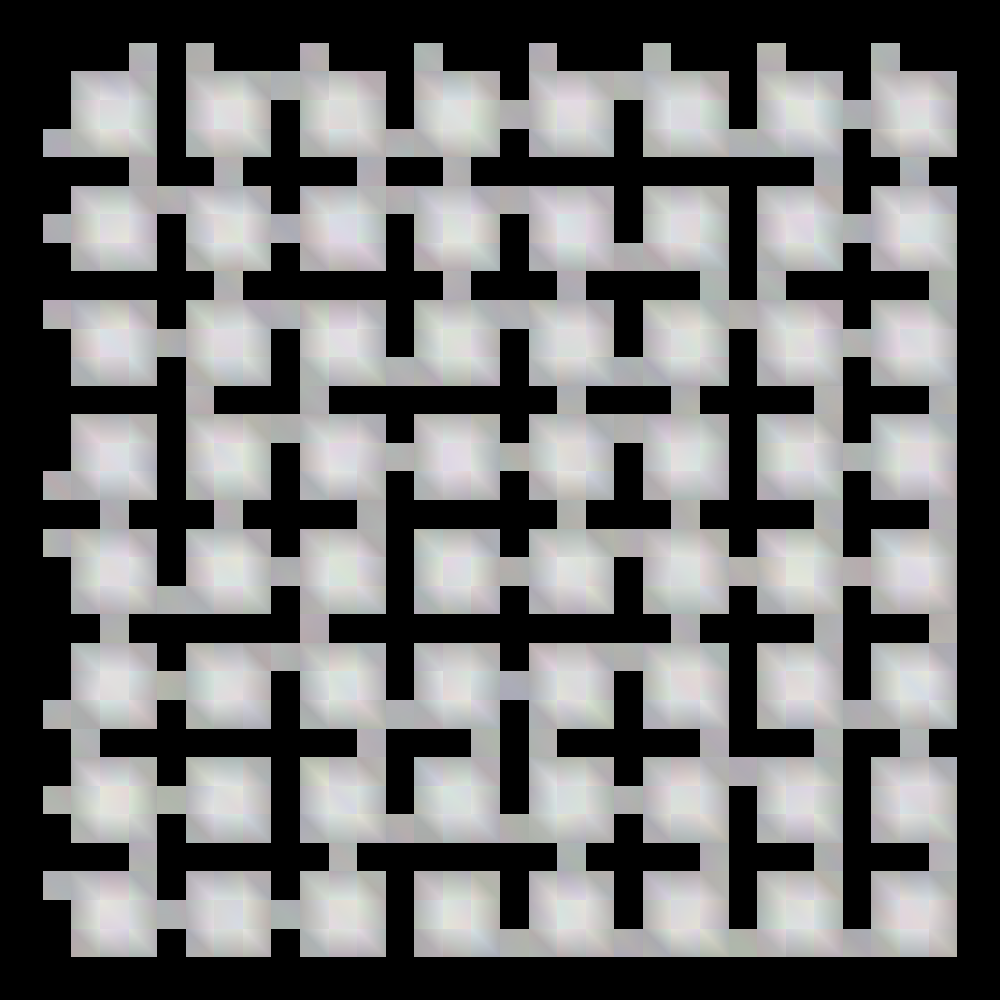}}
     & \textbf{20 (3\%)}  & \textbf{32} & \textbf{13}  & \textbf{15} & \textbf{60} \\
     & & 34 (5\%)  & 0  & 25 & 0  & 25 \\
     & & 68 (10\%)  & 0  & 25 & 0  & 25 \\
     & & 102 (15\%) & 0  & 10 & 0  & 10 \\
    \midrule

    \multirow[c]{4}{*}{warehouse-small}
     & \multirow[c]{4}{*}{\includegraphics[width=1.4cm]{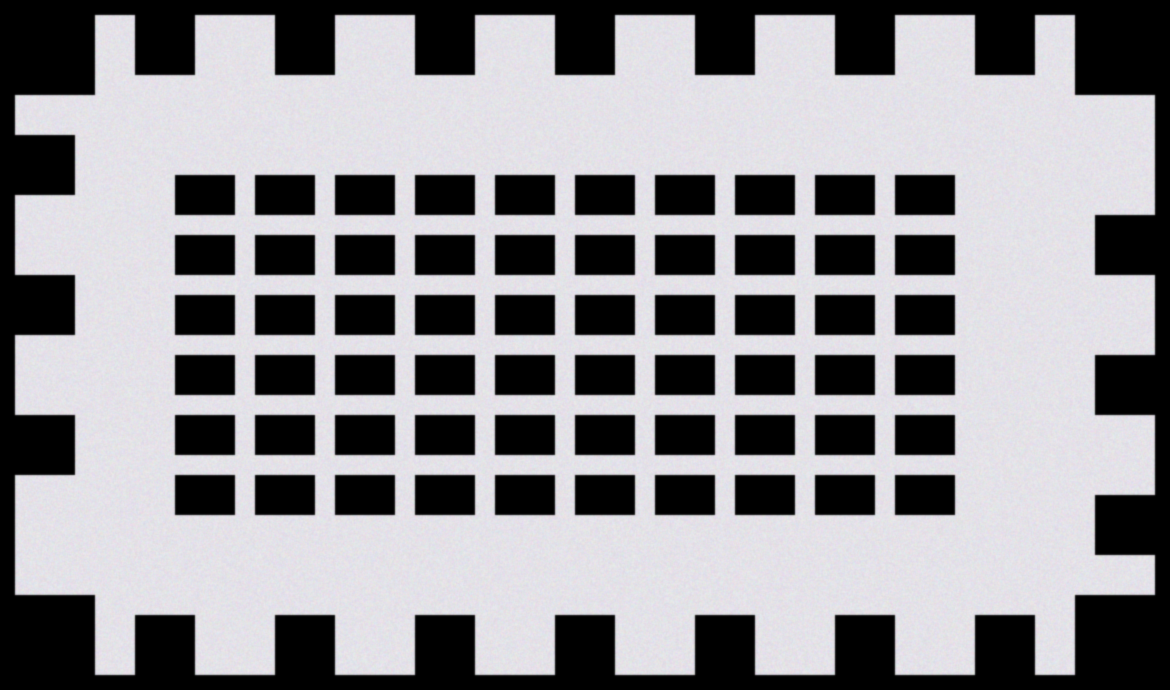}}
     & \textbf{38 (3\%)}  & \textbf{50} & \textbf{9}  & \textbf{40} & \textbf{99} \\
     & & 64 (5\%)  & 0  & 25 & 0  & 25 \\
     & & 128 (10\%) & 0  & 25 & 0  & 25 \\
     & & 192 (15\%) & 0  & 10 & 0  & 10 \\
    \bottomrule
\end{NiceTabular}
\end{table}

\subsection{Complete-Solve Trade-Offs}
\label{sec:Experiments_CompleteSolve}

\begin{table}[t]
\setlength{\abovecaptionskip}{\TableCaptionSkip}
\caption{Complete-solve structural metrics. Values are mean percentage differences relative to \gls{acr:bfs}. Lower is better for all three metrics.}
\label{tab:summaryTable_completeSeeds}
\centering
\renewcommand{\arraystretch}{\TableVerticalStretch}
\setlength{\tabcolsep}{2.5pt}
\begin{NiceTabular}{l|rrr|rrr|rrr}
    \toprule
     \Block{2-1}{Map\! (\# robots)} & \multicolumn{3}{c}{\# nodes} & \multicolumn{3}{c}{\# dive breaks} & \multicolumn{3}{c}{Max queue size} \\
     & BFS & DIVE & ID & BFS & DIVE & ID & BFS & DIVE & ID \\
    \midrule
    empty\! (51) & \textbf{0\%} & 0\% & 13\% & 0\% & \textbf{-6\%} & 3\% & 0\% & -86\% & \textbf{-89\%} \\
    random\! (41) & \textbf{0\%} & 57\% & 118\% & 0\% & \textbf{-40\%} & 47\% & 0\% & -52\% & \textbf{-95\%} \\
    room\! (20) & \textbf{0\%} & 55\% & 134\% & 0\% & \textbf{-51\%} & 26\% & 0\% & -44\% & \textbf{-97\%} \\
    warehouse\! (38) & \textbf{0\%} & 18\% & 89\% & 0\% & \textbf{-23\%} & 54\% & 0\% & -66\% & \textbf{-92\%} \\
    \bottomrule
\end{NiceTabular}
\end{table}

\cref{tab:summaryTable_completeSeeds} gives the main complete-solve comparison.
The punchline is that the three policies separate almost exactly as predicted universally across each map.
\Gls{acr:bfs} remains the best policy for minimizing expanded nodes.
\Gls{acr:id} gives the smallest explicit queue, reducing the maximum queue by 89--97\% relative to \gls{acr:bfs}.
\gls{acr:dive} is the only policy that consistently reduces dive breaks relative to \gls{acr:bfs}, with reductions from 6\% to 51\%, while also reducing the maximum queue by 44--86\%. 

The cost of \gls{acr:dive} is controlled node overhead, as it matches \gls{acr:bfs} on \texttt{empty-32-32} and expands 18--57\% more nodes on the other complete benchmark groups. 
This is substantially less than the node overhead of \gls{acr:id}, which ranges from 13\% to 134\%. 
Thus, \gls{acr:dive} should not be interpreted as a universal replacement for \gls{acr:bfs} or \gls{acr:id}. 
It is a third operating point, which sacrifices the strict node-minimality of \gls{acr:bfs} and the near-minimal queue of \gls{acr:id} in order to obtain far better parent-child continuity without repeated restarts.

\begin{figure}[tb]
    \centering
    \subfloat{%
        \includegraphics[width=0.5\linewidth]{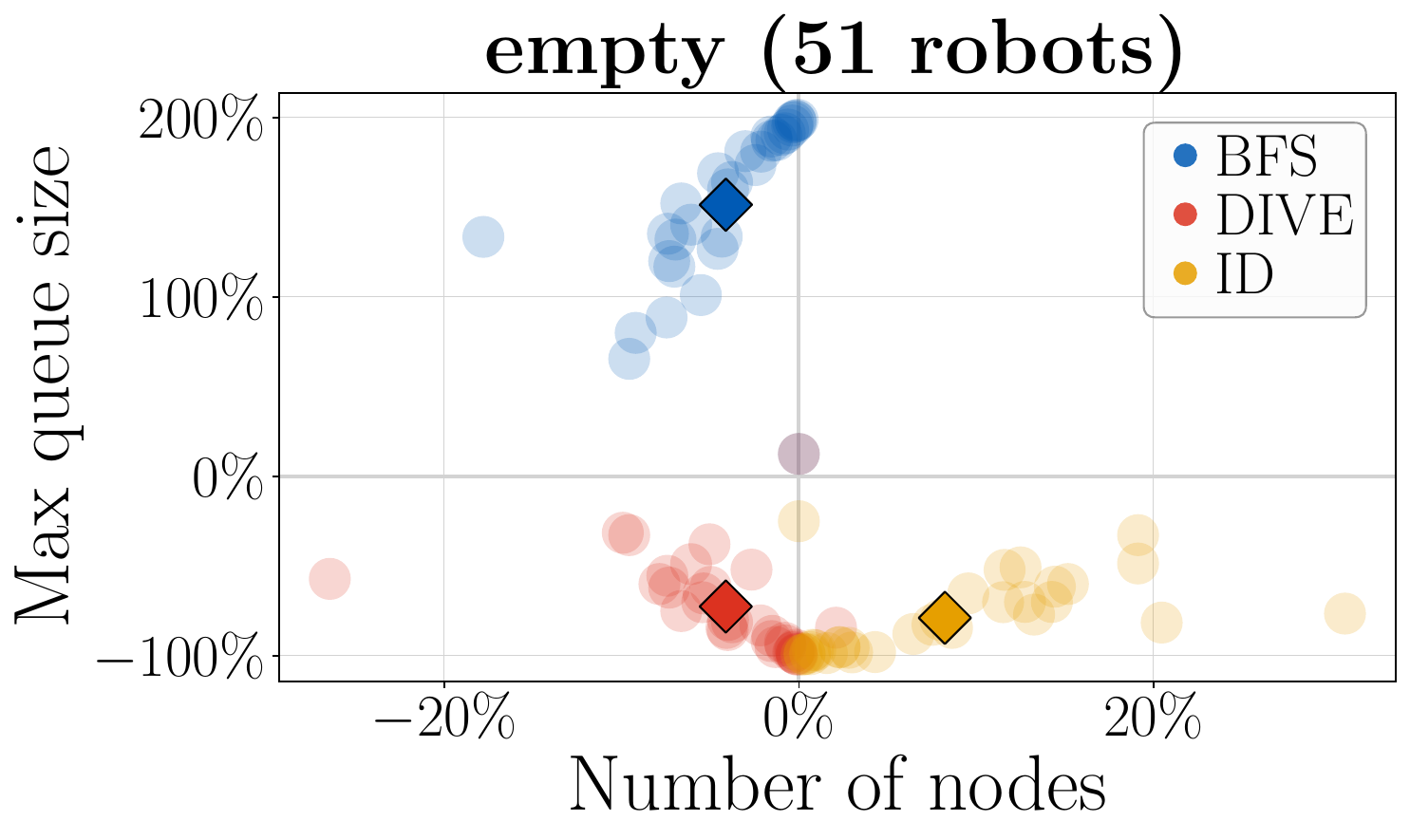}%
    }%
    \subfloat{%
        \includegraphics[width=0.5\linewidth]{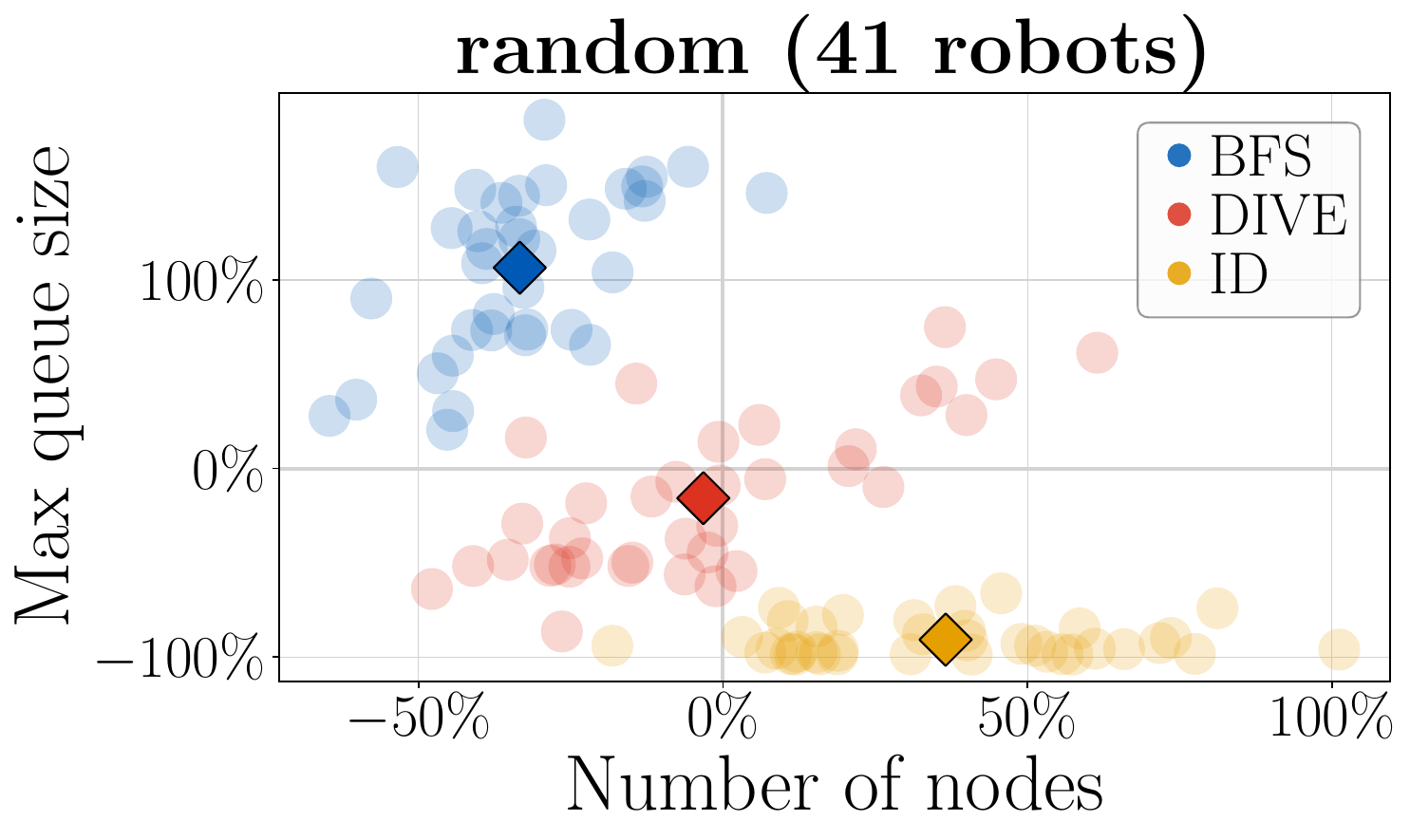}%
    }\\
    \subfloat{%
        \includegraphics[width=0.5\linewidth]{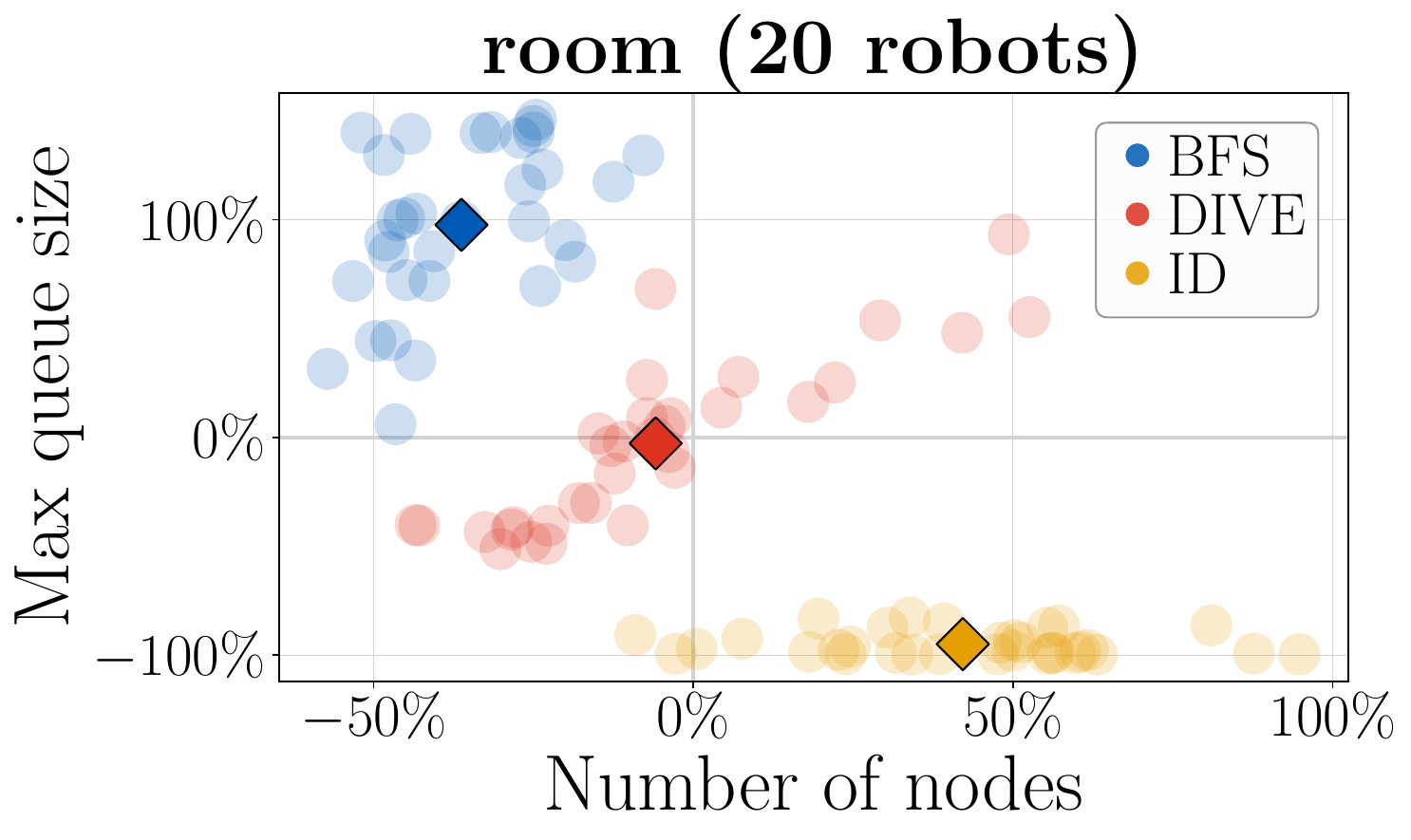}%
    }%
    \subfloat{%
        \includegraphics[width=0.5\linewidth]{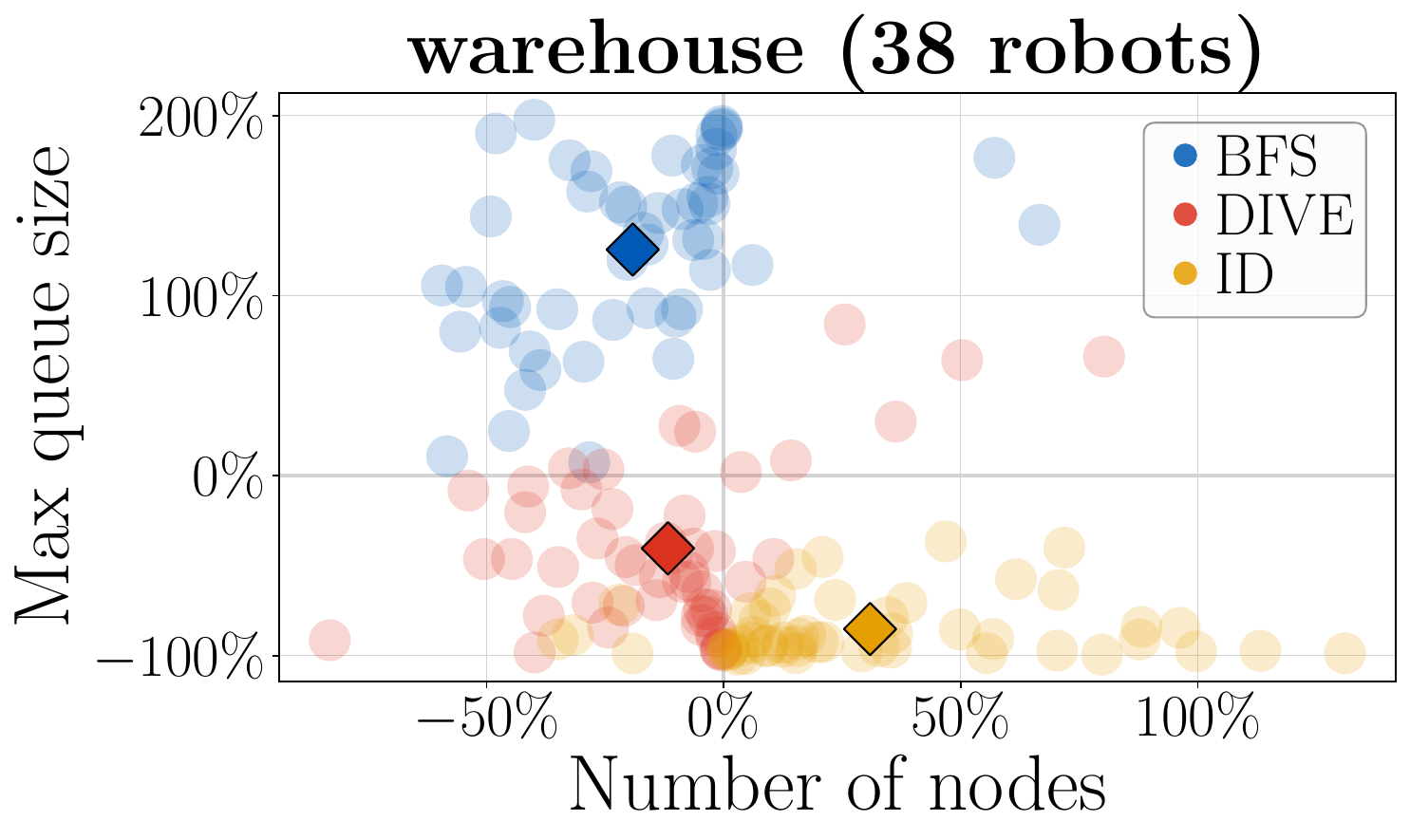}%
    }%
    \caption{Instance-level trade-offs between expanded nodes and maximum queue size on complete instances. 
    Each point is one policy-instance pair, shown as a percentage difference from the per-instance mean across policies (lower is better on both axes). Diamonds mark policy centroids.}
    \label{fig:relative_scatter_plots}
\end{figure}

\Cref{fig:relative_scatter_plots} provides the corresponding instance-level view.
Circles represents individual experiments and stars mark the resulting centroid by policy.
For the metric indicated in its label, each axis measures the percentage difference from the mean across policies on the same instance.
The scatter confirms that the table averages are not caused by a small number of outliers: \gls{acr:bfs} clusters toward lower node count and larger queue, \gls{acr:id} clusters toward smaller queue and larger node count, and \gls{acr:dive} lies between them while moving toward lower queue than \gls{acr:bfs} and lower node count than \gls{acr:id}.

\begin{table}[tb]
\setlength{\abovecaptionskip}{\TableCaptionSkip}
\caption{Number of nodes expanded in each zone as defined in \cref{sec:FormalizedNodeSelection_SolveCompletionCriteria}, normalized by the size of the dual zone\protect\footnotemark.}
\label{tab:zones}
\centering
\renewcommand{\arraystretch}{\TableVerticalStretch}
\setlength{\tabcolsep}{2.5pt}
\begin{NiceTabular}{l|rrr|rrr|rrr}
    \toprule
     \Block{2-1}{Map\! (\# robots)} & \multicolumn{3}{c}{BFS} & \multicolumn{3}{c}{DIVE} & \multicolumn{3}{c}{ID} \\
     & Dual & Opt & Prim & Dual & Opt & Prim & Dual & Opt & Prim \\
    \midrule
    empty\! (51) & 1.00 & 0.09 & 0.00 & 1.00 & 0.08 & 0.01 & 1.14 & 0.12 & 0.00 \\
    random\! (41) & 1.00 & 0.34 & 0.00 & 1.00 & 0.52 & 0.66 & 1.73 & 1.00 & 0.00 \\
    room\! (20) & 1.00 & 0.25 & 0.00 & 1.00 & 0.36 & 0.60 & 2.24 & 0.55 & 0.00 \\
    warehouse\! (38) & 1.00 & 0.63 & 0.00 & 1.00 & 0.58 & 0.40 & 1.48 & 1.29 & 0.00 \\
    \bottomrule
\end{NiceTabular}
\end{table}
\footnotetext{We skip edge cases with a dual zone size of zero. This happens whenever the root value is optimal.}

\Cref{tab:zones} explains where the node overhead comes from. 
By construction, \gls{acr:bfs} resolves the dual zone once and avoids measurable primal-zone work in these complete runs. 
\Gls{acr:id} also avoids the primal zone, but its restarts lead to repeated work in the dual zone and additional work in the optimal zone.
\gls{acr:dive} resolves the dual zone once, like \gls{acr:bfs}, and spends its extra effort in the optimal and primal zones. 
This is precisely the behavior intended in \cref{sec:Dive}, as \gls{acr:dive} pays for dive continuity and incumbents through controlled primal-side exploration, not through repeated restarts or loss of the certificate frontier.

The amount of optimal-zone work varies substantially across maps, which is expected because practical \gls{acr:mapf} instances often have many equal-cost nodes and multiple ways to realize the same optimal cost. 
Nevertheless, the qualitative pattern matches the search-tree analysis in \cref{sec:TechnicalPolicyAnalysis}, showing that \gls{acr:bfs} is node-efficient, \gls{acr:id} is memory-efficient, and \gls{acr:dive} is dive-efficient.

\subsection{Incumbents and Anytime Behavior}
\label{sec:Experiments_Anytime}

Unlike \gls{acr:bfs} or \gls{acr:id}, \gls{acr:dive} finds intermediate feasible solutions before the completion of the solve.
Thus, \gls{acr:dive} turns \gls{acr:cbs} into an anytime algorithm.
We analyze incumbents provided by \gls{acr:dive} at two stages of the solve: the first incumbent and the incumbent at the time of BFS completion.
These points in the solve show that \gls{acr:dive} finds incumbents \emph{quickly} and of \emph{high quality}.

\begin{table}[tb]
\setlength{\abovecaptionskip}{\TableCaptionSkip}
\caption{DIVE's first incumbent on nontrivial runs. 
Truncated runs are included.
Incumbent found is the percentage of runs in which DIVE found a feasible solution before truncation.}
\label{tab:incumbents_first}
\centering
\renewcommand{\arraystretch}{\TableVerticalStretch}
\setlength{\tabcolsep}{2.75pt}
\begin{NiceTabular}{lc|ccccc}
    \toprule
    Map & \# robots & Inc found & Nodes & Seconds & Primal$-$Dual & Gap \\
    \midrule
    \multirow[c]{4}{*}{empty} & 31 & 100\% & 51 & 0.2 & 1.0 & 0.2\% \\
     & 51 & 100\% & 92 & 1.4 & 2.0 & 0.2\% \\
     & 102 & 100\% & 483 & 29.4 & 9.3 & 0.4\% \\
     & 154 & 100\% & 1465 & 191.9 & 35.6 & 1.1\% \\
    \midrule

    \multirow[c]{4}{*}{random} & 25 & 100\% & 44 & 0.5 & 5.2 & 0.9\% \\
     & 41 & 100\% & 74 & 1.1 & 14.3 & 1.5\% \\
     & 82 & 76\% & 542 & 58.1 & 84.6 & 4.2\% \\
     & 123 & 20\% & 1636 & 178.8 & 257.5 & 8.7\% \\
    \midrule

    \multirow[c]{4}{*}{room} & 20 & 91\% & 39 & 0.2 & 19.3 & 3.6\% \\
     & 34 & 64\% & 178 & 5.7 & 72.2 & 7.2\% \\
     & 68 & 4\% & 1264 & 216.9 & 333.0 & 16.0\% \\
     & 102 & 0\% & - & - & - & - \\
    \midrule

    \multirow[c]{4}{*}{warehouse} & 38 & 100\% & 59 & 0.9 & 5.3 & 0.5\% \\
     & 64 & 100\% & 185 & 9.5 & 21.4 & 1.1\% \\
     & 128 & 32\% & 1274 & 218.3 & 137.8 & 3.6\% \\
     & 192 & 0\% & - & - & - & - \\
    \bottomrule
\end{NiceTabular}
\end{table}

\Cref{tab:incumbents_first} reports \gls{acr:dive}'s first incumbent on all nontrivial runs, including runs that are later truncated. 
When an incumbent is found, it is almost always found in the first dive: the mean number of dive breaks is 1.0 in every row with successful incumbents. 
For the complete benchmark densities, this first solution appears after only tens to hundreds of high-level nodes.
The denser 10\% and 15\% cases are harder, especially on \texttt{room-32-32-4} and \texttt{warehouse-small}, but the table shows the expected degradation rather than a hidden failure mode, i.e., as feasible nodes become rarer near the top of the primal region, \gls{acr:dive} needs more nodes and sometimes does not find an incumbent before the five-minute cutoff.

The quality of the first incumbent is summarized by the absolute primal-dual difference $z^{\mathrm P}-z^{\mathrm D}$ and the relative gap. 
These are worst-case certificates, not estimates. 
Even when the first incumbent is not optimal, it comes with a bound on how much improvement remains possible.

\begin{figure}[tb]
    \centering
    \subfloat{%
        \includegraphics[width=0.5\linewidth]{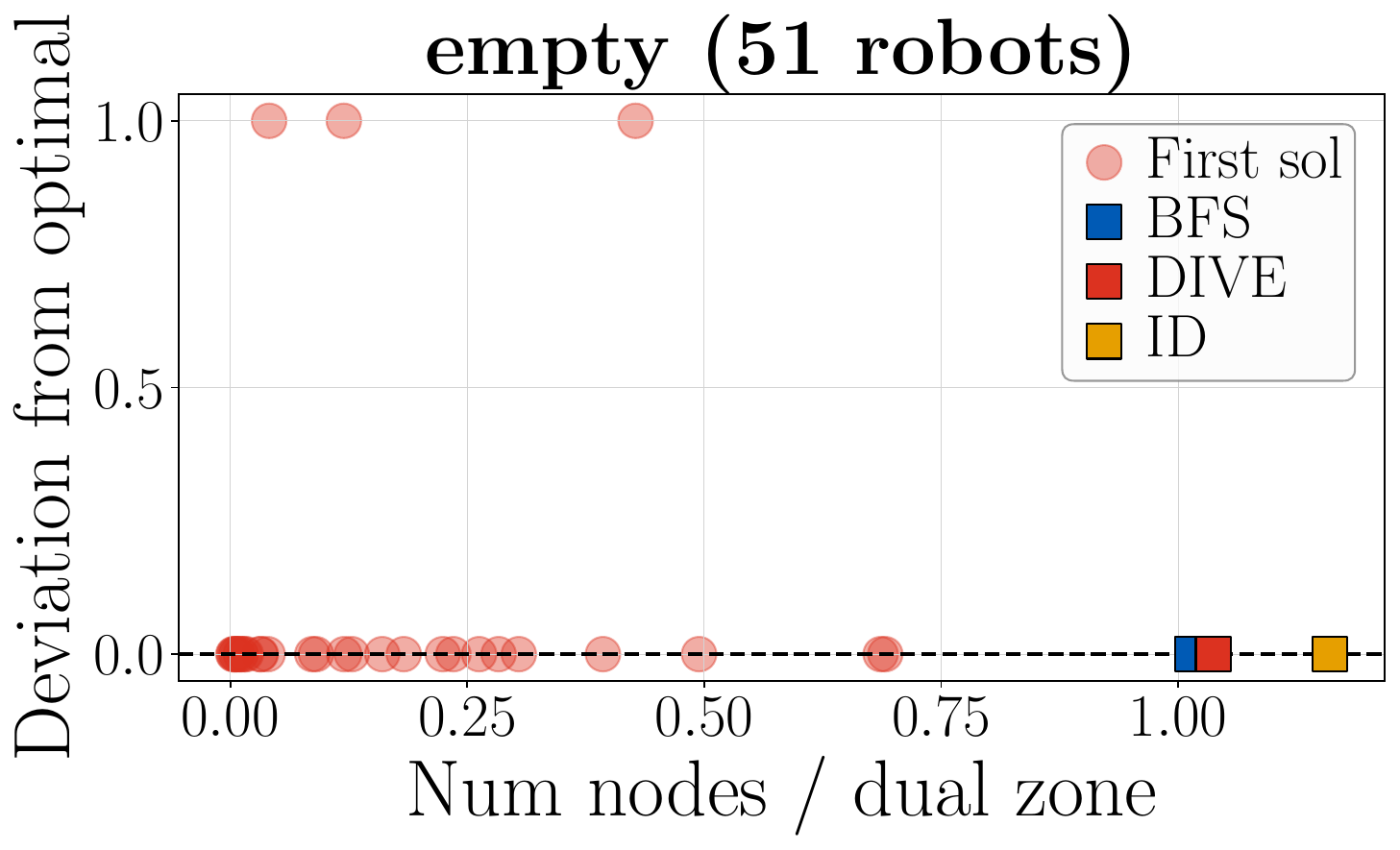}%
    }%
    \subfloat{%
        \includegraphics[width=0.5\linewidth]{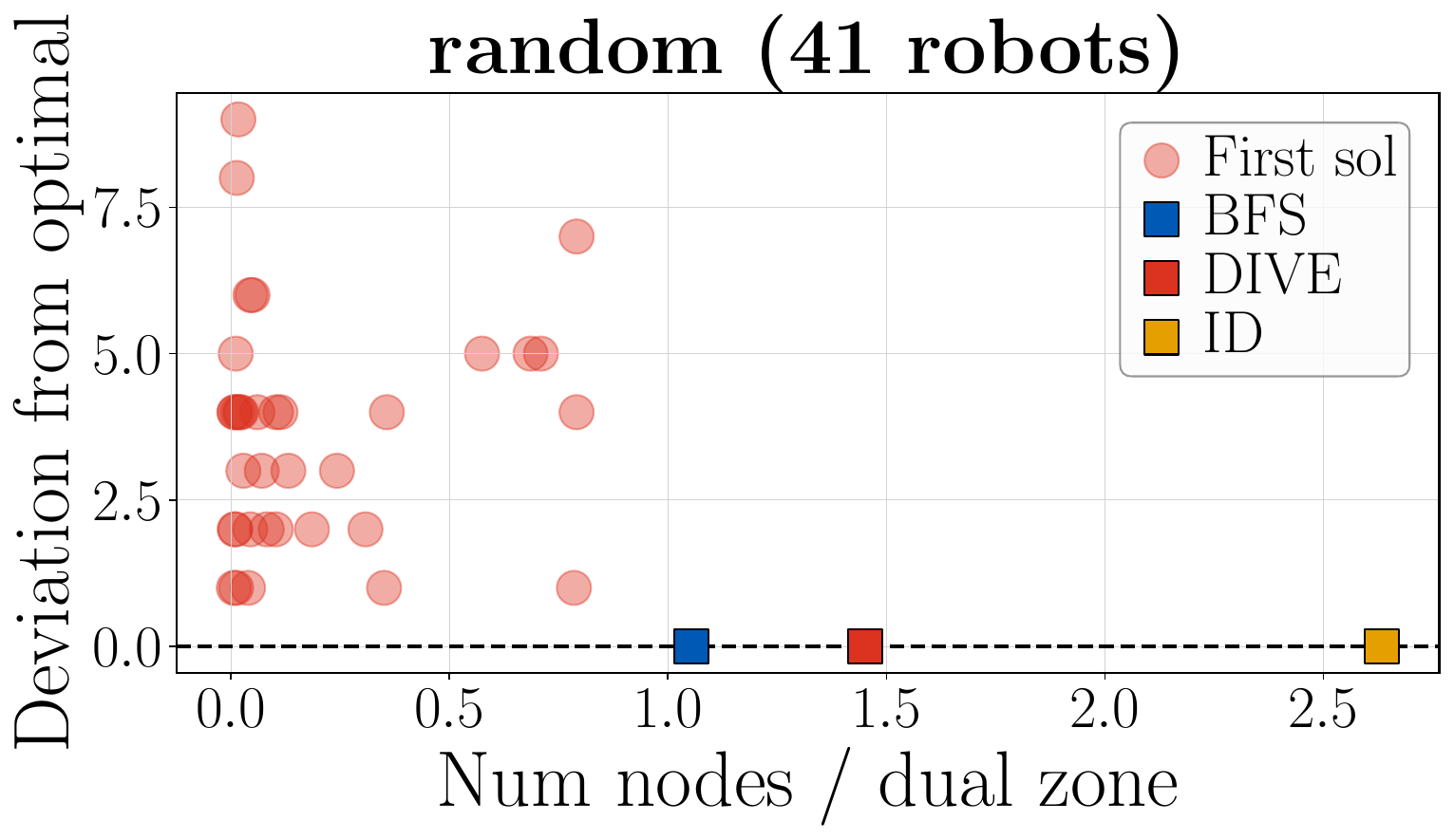}%
    }\\
    \subfloat{%
        \includegraphics[width=0.5\linewidth]{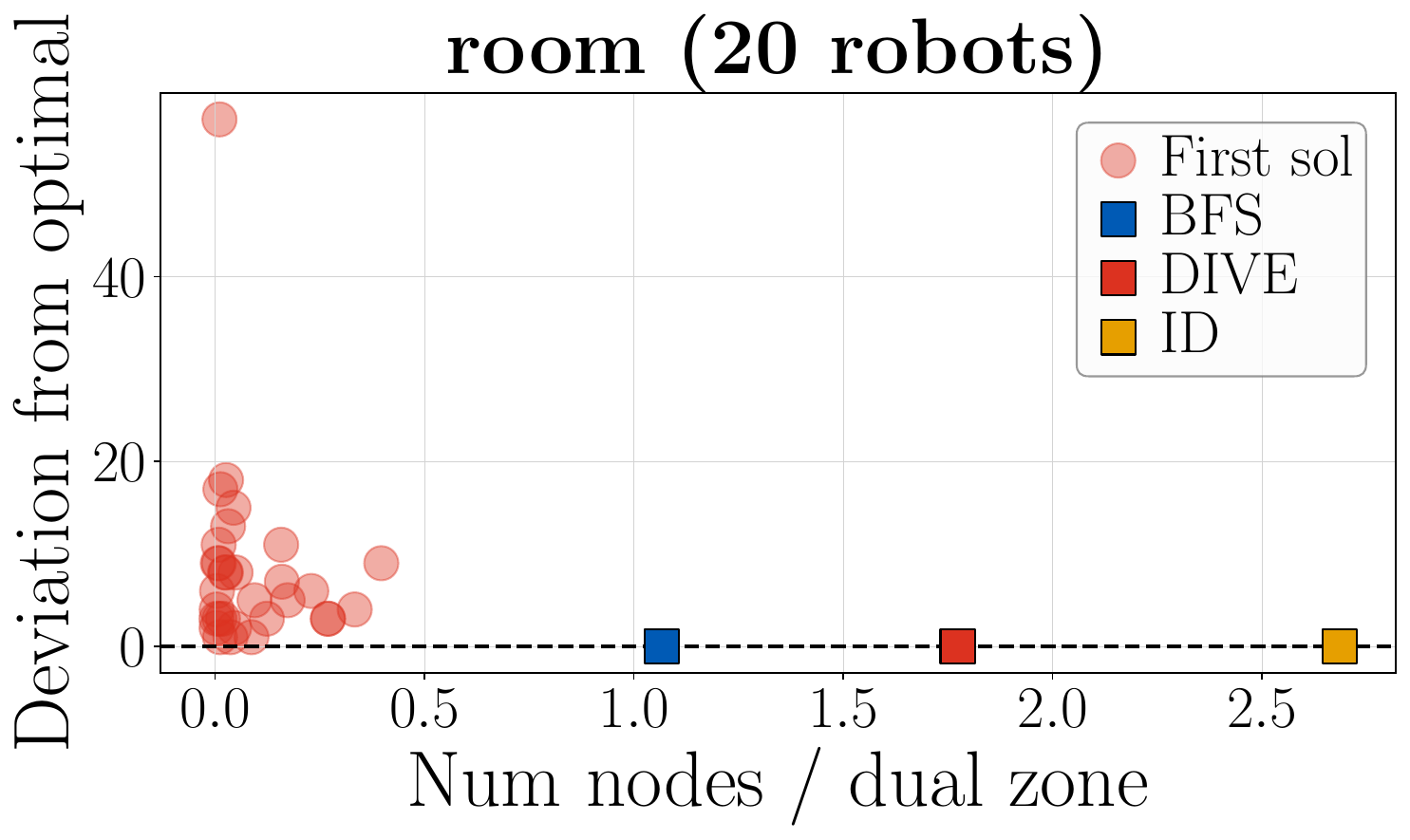}%
    }%
    \subfloat{%
        \includegraphics[width=0.5\linewidth]{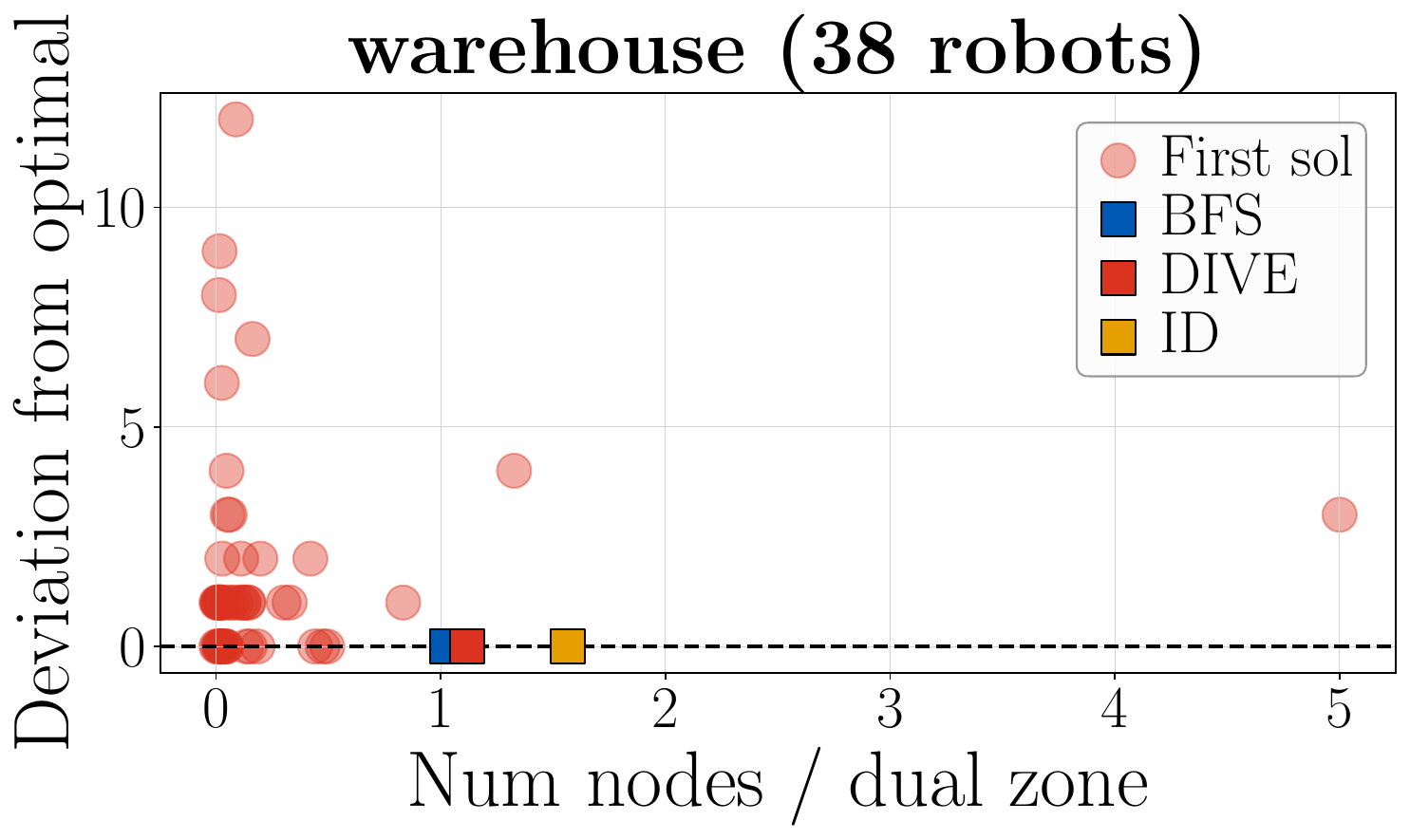}%
    }%
    \caption{First incumbent found by \gls{acr:dive} on complete instances. 
    The horizontal axis is normalized by the dual-zone size. 
    Square markers show the median node budget required by each policy to finish.}
    \label{fig:first_incumbents}
\end{figure}

\Cref{fig:first_incumbents} makes the timing visible. 
On the complete instances, \gls{acr:dive}'s first incumbents are typically found far before any policy reaches its final certificate. 
On \texttt{empty-32-32}, the symmetry of the map creates many equal-cost optimal solutions, so the first dive frequently already reaches an optimal solution.

\begin{table}[tb]
\setlength{\abovecaptionskip}{\TableCaptionSkip}
\caption{Best incumbent found by DIVE after the same number of high-level expansions that \gls{acr:bfs} required to complete the corresponding solve.}
\label{tab:incumbents_bfs_completion}
\centering
\renewcommand{\arraystretch}{\TableVerticalStretch}
\setlength{\tabcolsep}{2.5pt}
\begin{NiceTabular}{l|ccccc}
\toprule
Map\! (\# robots) & Inc found & Primal$-$Dual & Gap & Primal$-$Opt & Inc is Opt \\
\midrule
empty\! (51) & 100\% & 0.5 & 0.0\% & 0.0 & 100\% \\
random\! (41) & 100\% & 2.5 & 0.3\% & 0.9 & 58\% \\
room\! (20) & 100\% & 2.9 & 0.6\% & 1.4 & 47\% \\
warehouse\! (38) & 100\% & 1.1 & 0.1\% & 0.4 & 80\% \\
\bottomrule
\end{NiceTabular}
\end{table}

\Cref{tab:incumbents_bfs_completion} evaluates \gls{acr:dive} at a second reference point, i.e., the number of node expansions required by \gls{acr:bfs} to finish the same instance. 
At this budget, \gls{acr:dive} has found an incumbent in every complete instance. 
The average certified gap is at most 0.6\%, and \gls{acr:dive} has already found an optimal solution in 46--100\% of instances, depending on the map. 
Thus, even when \gls{acr:dive} has not yet completed the dual certificate, it often already has the final solution and only lacks proof.

This is the practical difference between \gls{acr:dive} and the two baselines. 
\Gls{acr:bfs} and \gls{acr:id} concentrate on the certificate side and provide little pre-termination incumbent information. 
\gls{acr:dive} instead exposes a meaningful anytime mode for optimal \gls{acr:cbs}: after the first incumbent, the solver can be interrupted and return both a feasible solution and a certified gap.

\subsection{Primal-Dual Bound Progression}
\label{sec:Experiments_Bounds}

\begin{figure}[tb]
    \centering
    \subfloat{%
        \includegraphics[width=0.5\linewidth]{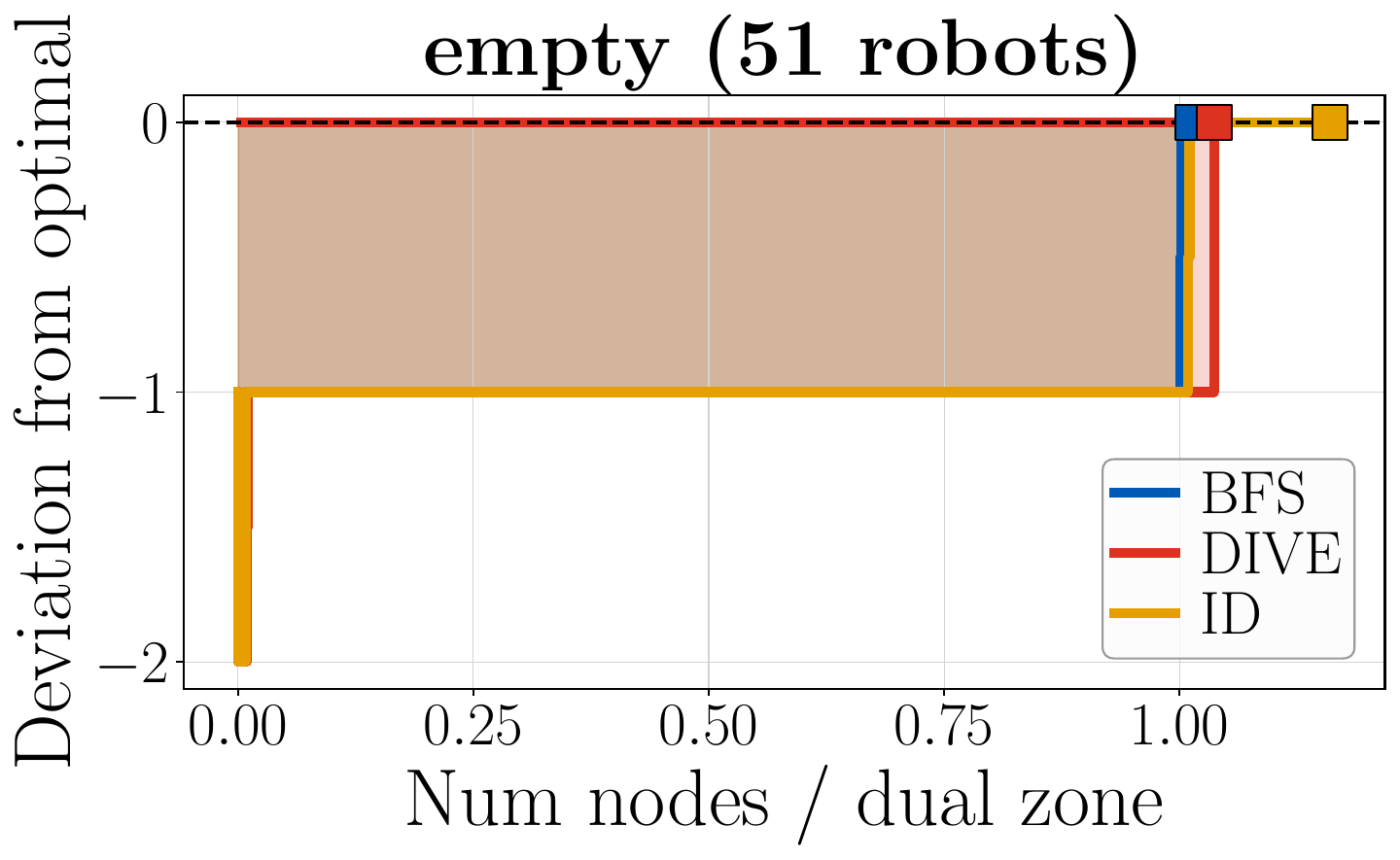}%
    }%
    \subfloat{%
        \includegraphics[width=0.5\linewidth]{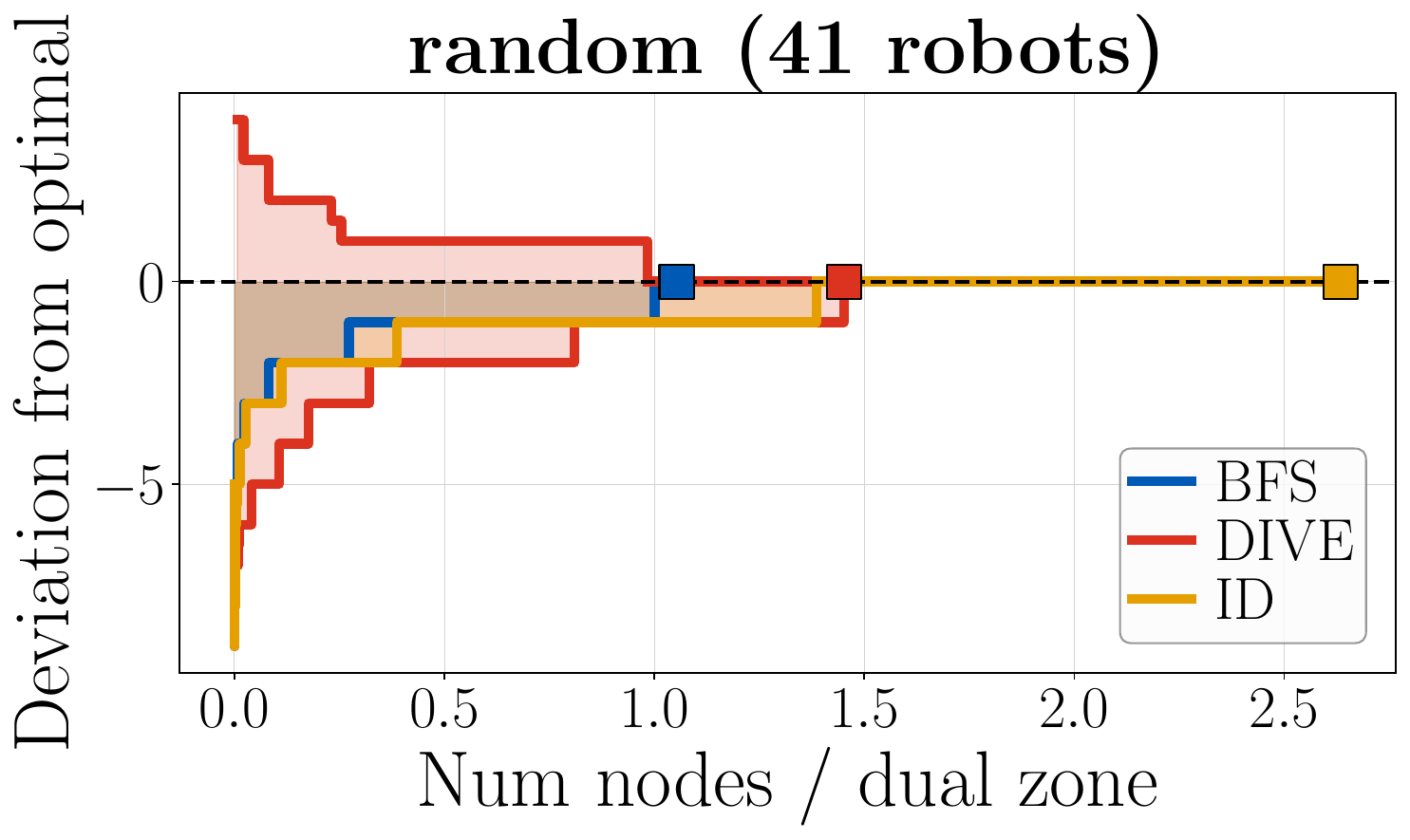}%
    }\\
    \subfloat{%
        \includegraphics[width=0.5\linewidth]{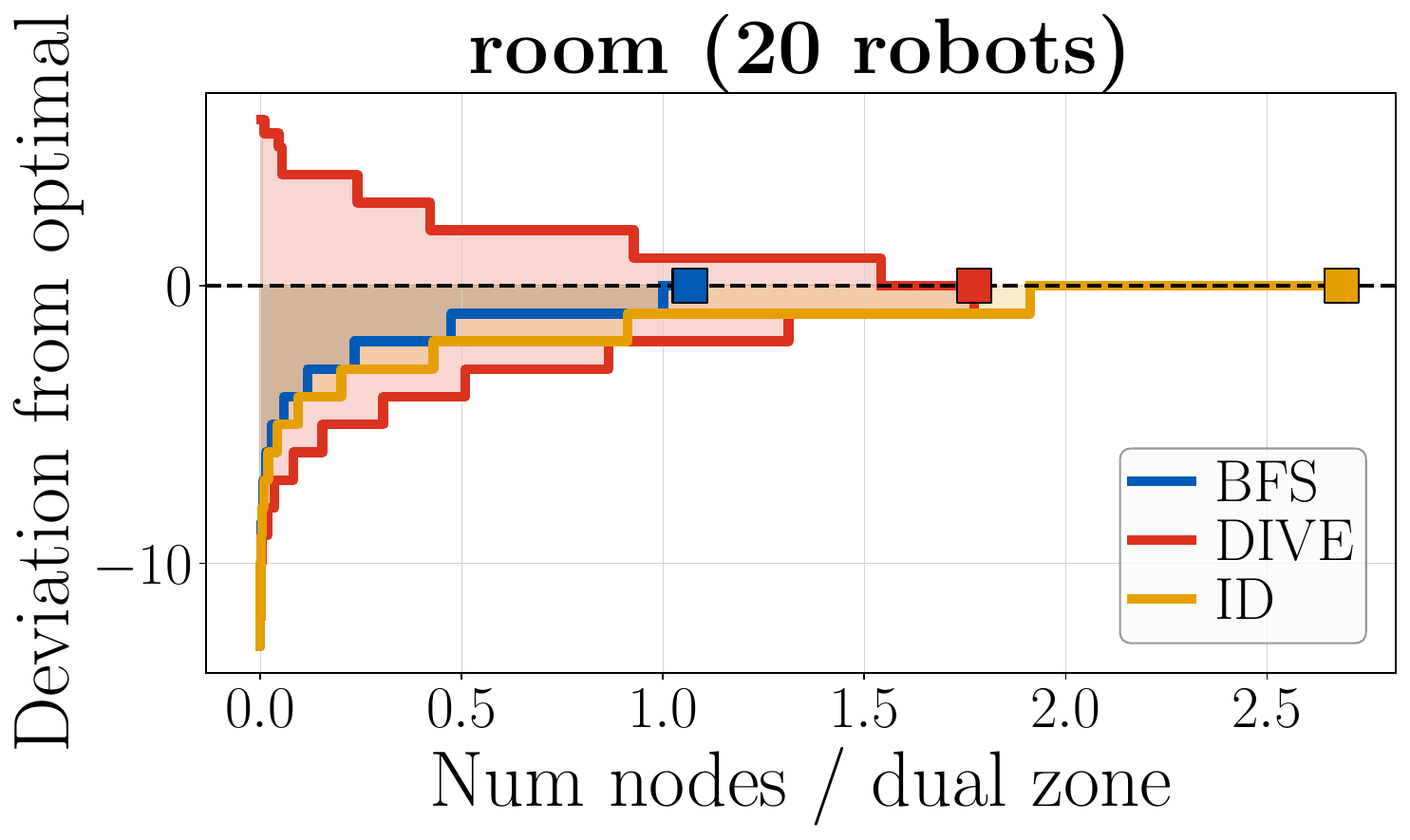}%
    }%
    \subfloat{%
        \includegraphics[width=0.5\linewidth]{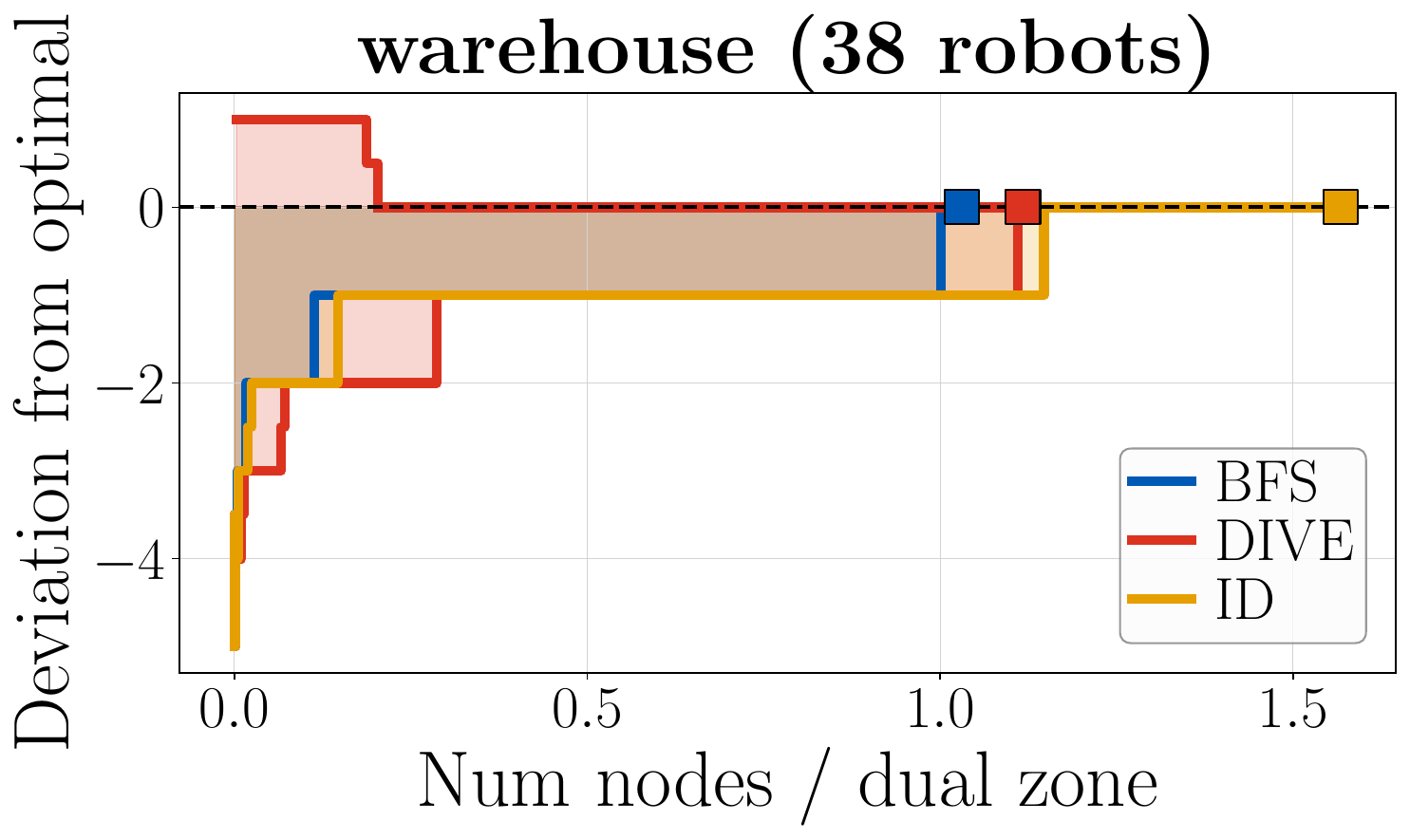}%
    }%
    \caption{Median primal and dual bound progression on complete instances. 
    The horizontal axis is normalized by the dual-zone size. 
    Lower curves indicate smaller remaining deviation from the optimal value.}
    \label{fig:bounds}
\end{figure}

\Cref{fig:bounds} shows the median primal and dual trajectories\footnote{The median primal curve is computed over runs that have already found an incumbent at the given node budget. 
Runs with no incumbent are added to the median only after their first feasible solution appears. 
Because late first incumbents can be worse than earlier incumbents from other runs, a raw median primal curve can increase. 
We therefore display its monotone envelope, which matches the fact that each individual run's incumbent value is nonincreasing over time. 
All tables use ordinary arithmetic means over the relevant instance set.}{}.
The dual curves confirm the expected ordering from \cref{sec:TechnicalPolicyAnalysis}, i.e., \gls{acr:bfs} closes the dual bound most directly, \gls{acr:id} initially follows best-first progress but slows as restarts accumulate, and \gls{acr:dive} remains close to the best-bound frontier because every dive reanchors after termination. 
The primal curves show the missing dimension, as \gls{acr:dive} establishes a feasible incumbent early and then alternates between improving the incumbent and closing the certificate gap.

The plots also clarify why \gls{acr:dive} can be useful even when it expands more nodes than \gls{acr:bfs}. 
A tight dual bound alone is not a usable plan; an incumbent alone is not a proof. 
\gls{acr:dive} intentionally works on both sides of the gap. 
In many runs, it finds the optimal solution before the dual bound reaches $z^*$, so the remaining work is purely certification.

\subsection{Warm-Starts}
\label{sec:Experiments_WarmStart}
Warm-starts can further reduce \gls{acr:dive}'s node count and queue size by by pruning additional primal zone nodes.
\Cref{fig:warm_start} highlights \gls{acr:dive}'s responsiveness to tighter warm start values on \texttt{room-32-32-4}, the map that proved most challenging for \gls{acr:dive} to find its own first incumbents.
Warm-starts may cut off dives earlier but maintain the same starting nodes.
Thus, the number of dive breaks remains consistent, and optimal zone expansions only reduce if the warm-start value is also optimal.

In general, \cref{sec:Experiments_Anytime} shows that \gls{acr:dive} is adept at quickly finding quality incumbents by itself, but warm-starts provide an effective refinement.
In dense instances where feasible solutions are rare near the surface of the primal region, warm-starts become a vital safeguard that allow \gls{acr:dive} to scale to hyper-dense instances. We add \texttt{test-5-5} at 33\% occupancy as a boundary-case stress test rather than as part of the main benchmark suite.

All policies are evaluated with and without the same \gls{acr:pibt} warm start. 
A warm start can help every exact policy, because any open node whose lower bound is no better than the warm-start value can be pruned. 
However, the effect is especially important for \gls{acr:dive}, which intentionally enters deeper regions and therefore benefits strongly from having incumbent pruning active before the first dive.

Without a warm start, \gls{acr:dive} solves none of the 40 dense instances within the five-minute cutoff. 
With the \gls{acr:pibt} incumbent, \gls{acr:dive} solves 35 out of 40 instances, matching \gls{acr:bfs} and solving one more instance than \gls{acr:id}. 
The structural trade-offs of the warm-started complete runs are shown in \cref{fig:warm_start}.

\begin{table*}[tb]
\setlength{\abovecaptionskip}{\TableCaptionSkip}
\caption{Warm-started dense-instance performance on \texttt{test-5-5}. 
Values are mean \% differences relative to \gls{acr:bfs} (lower is better).}
\label{tab:dense}
\centering
\renewcommand{\arraystretch}{\TableVerticalStretch}
\begin{NiceTabular}{lcc|rrr|rrr|rrr}
    \toprule
     Map & & \# robots & \multicolumn{3}{c}{Num nodes} & \multicolumn{3}{c}{Num dive breaks} & \multicolumn{3}{c}{Max queue size} \\
     & &  & BFS & DIVE & ID & BFS & DIVE & ID & BFS & DIVE & ID \\
    \midrule
    test-5-5
        & \parbox[c]{0.5cm}{\centering\includegraphics[width=0.5cm]{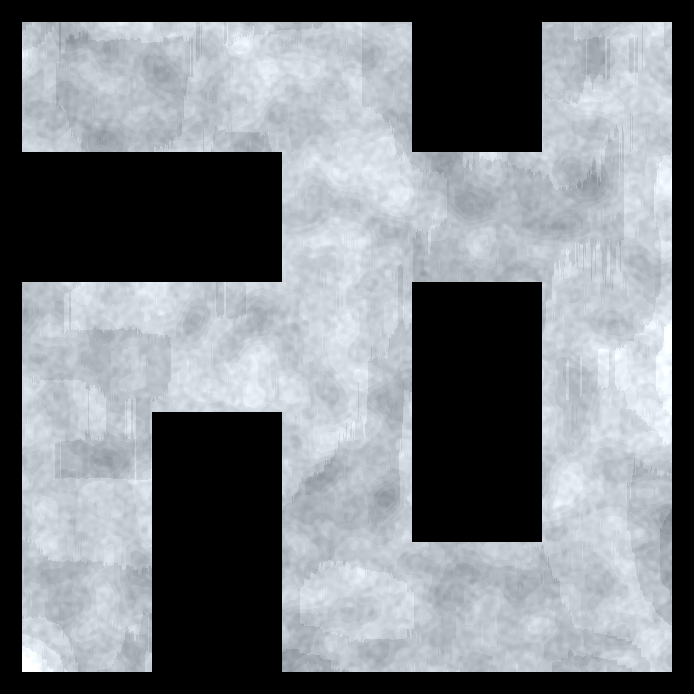}}
        & 6 (33\%) & 0\% & 104\% & 127\% & 0\% & -46\% & 35\% & 0\% & -10\% & -99\% \\
    \bottomrule
\end{NiceTabular}
\end{table*}

\begin{figure}[tb]
    \centering
    \subfloat{%
        \includegraphics[width=0.5\linewidth]{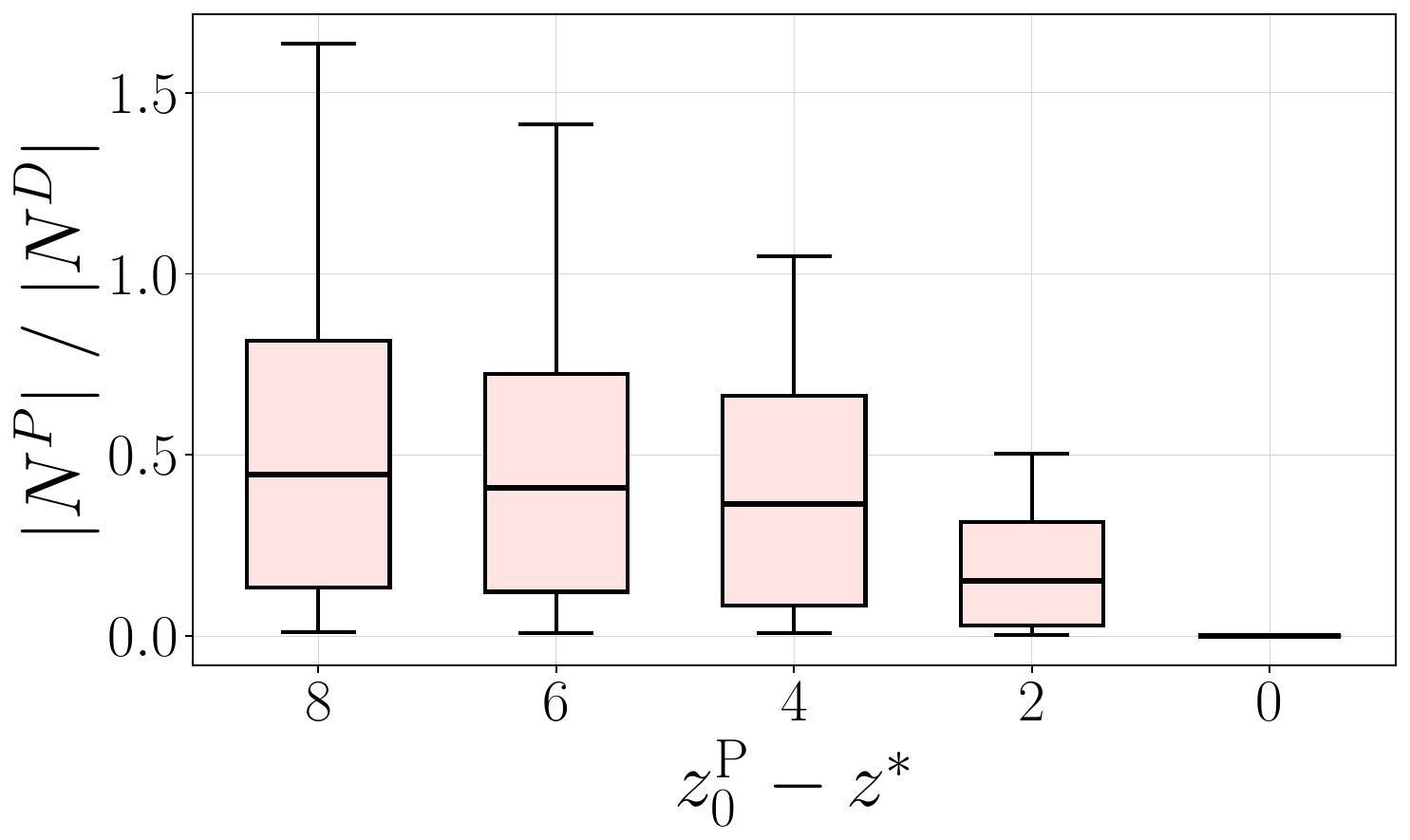}%
    }%
    \subfloat{%
        \includegraphics[width=0.5\linewidth]{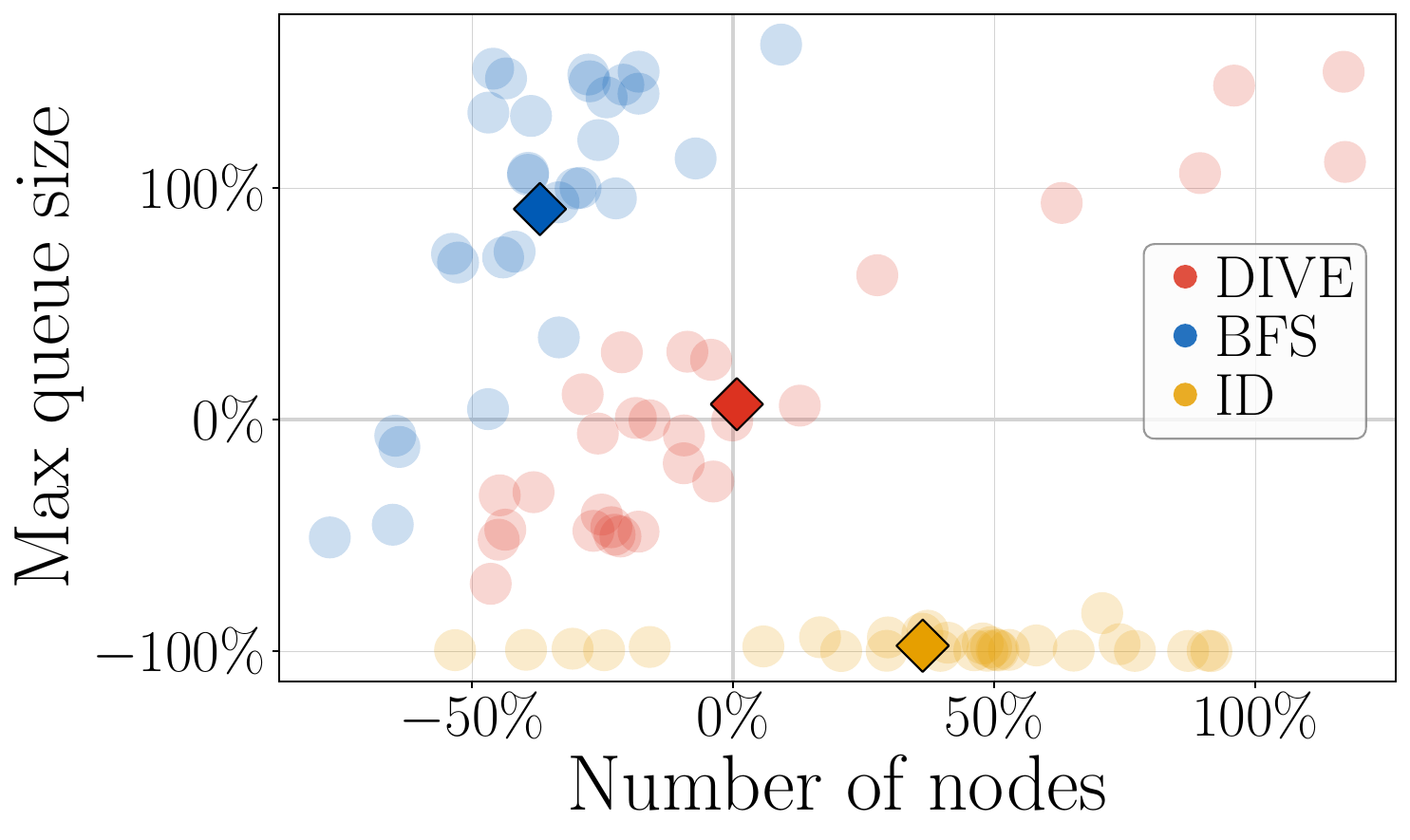}%
    }%
    \caption{Left: Primal zone expansions of eight instances on \texttt{room-32-32-4} with 20 robots for progressively tighter warm-start values. Right: Relative performance on the dense \texttt{test-5-5} warm-start experiment with 33\% robot occupancy.}
    \label{fig:warm_start}
\end{figure}

The warm-started dense experiment preserves the same qualitative ordering as the main benchmarks, as shown in~\cref{tab:dense}. 
\gls{acr:dive} requires more node expansions than \gls{acr:bfs}, but it cuts dive breaks by 46\% and reduces maximum queue size. 
\Gls{acr:id} remains the memory extreme, reducing maximum queue by 99\% from \gls{acr:bfs}, but it pays in both nodes and dive breaks. 
Thus, even in the regime where \gls{acr:dive} needs help from an external incumbent, it returns to the same trade-off profile once the incumbent is available.

On lower-density maps, warm starts are less important because \gls{acr:dive} usually finds a better incumbent on its own during the first dive. 
Outside the dense \texttt{test-5-5} stress test, only a handfulof instances that timed out without a warm start completed with the \gls{acr:pibt} warm start. 
This supports the intended interpretation that warm starts are not needed for \gls{acr:dive}'s main benchmark behavior, but they are a useful safety mechanism for dense regimes.

\subsection{Ablation: Diversified Search}
\label{sec:Experiments_Ablation}

The next experiment isolates diversified search from incumbent pruning. 
We compare \gls{acr:dive} with an incumbent-pruned depth-first policy that uses the same incumbent cutoff from \cref{prop:DIVE_incumbent_cutoff}, but does not restart each dive from the global best-bound frontier. 
In \Cref{fig:dfs,tab:dfs}, this ablated policy is labeled \gls{acr:dfs}. 

\begin{figure}[tb]
    \centering
    \subfloat{%
        \includegraphics[width=0.5\linewidth]{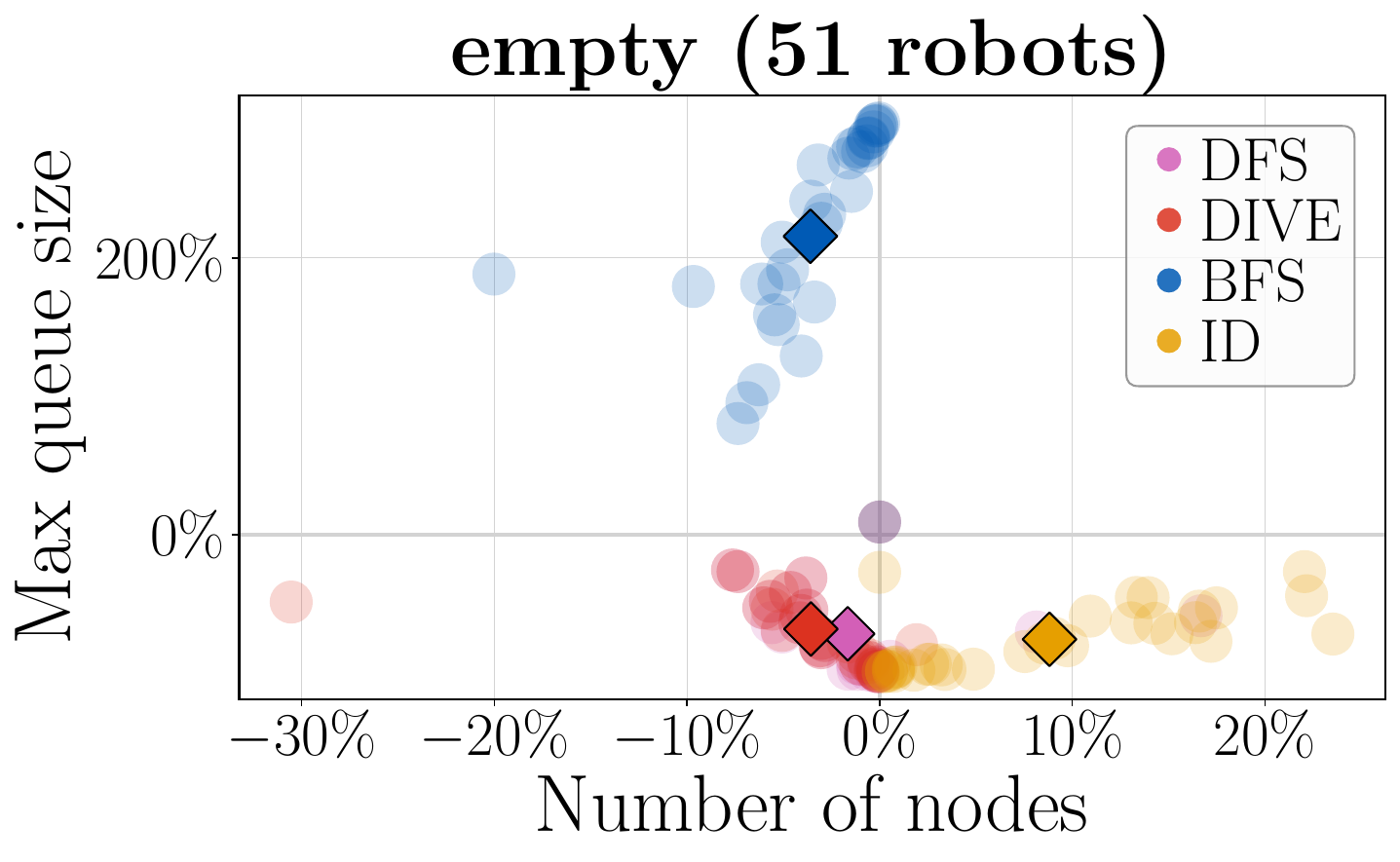}%
    }%
    \subfloat{%
        \includegraphics[width=0.5\linewidth]{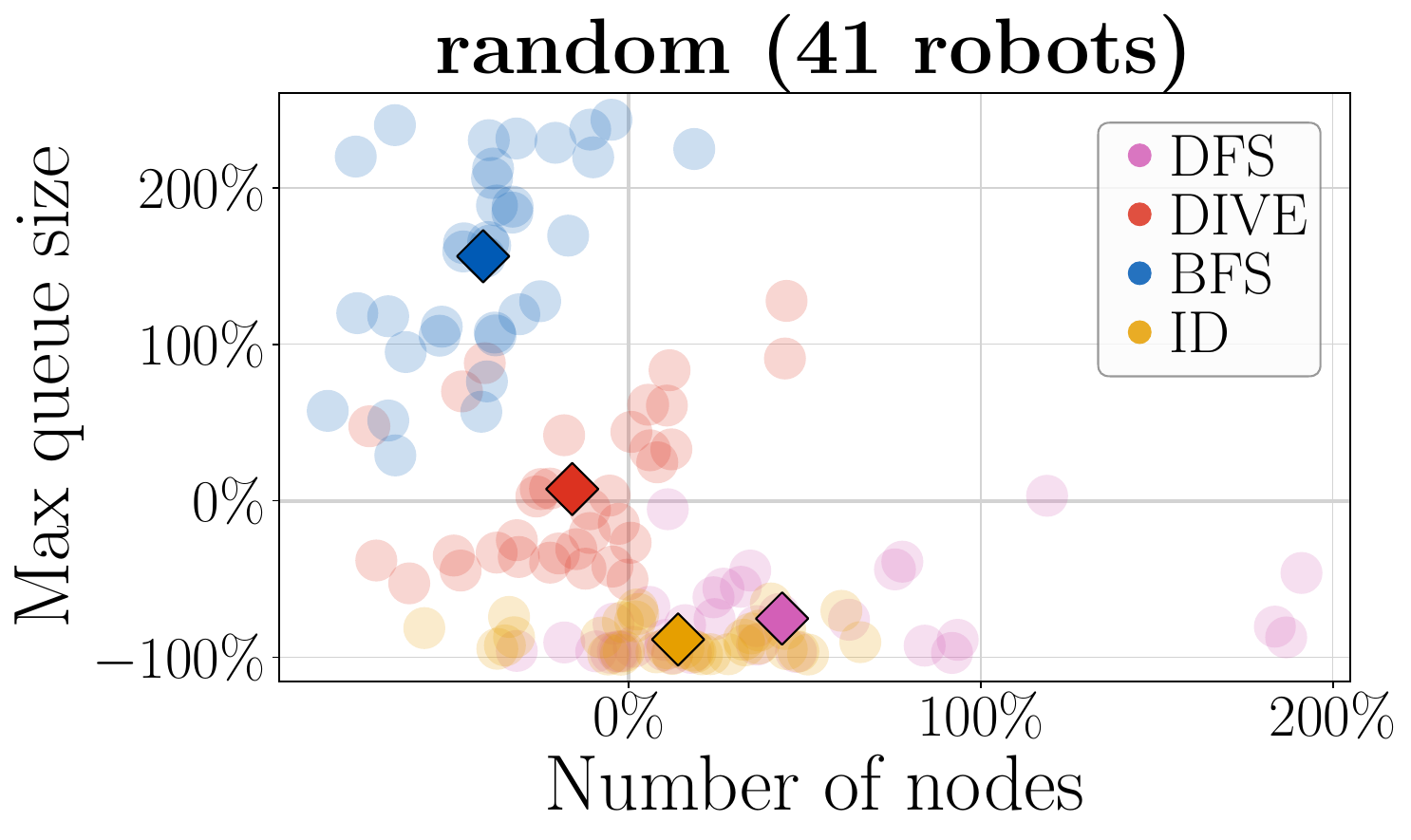}%
    }\\
        \subfloat{%
        \includegraphics[width=0.5\linewidth]{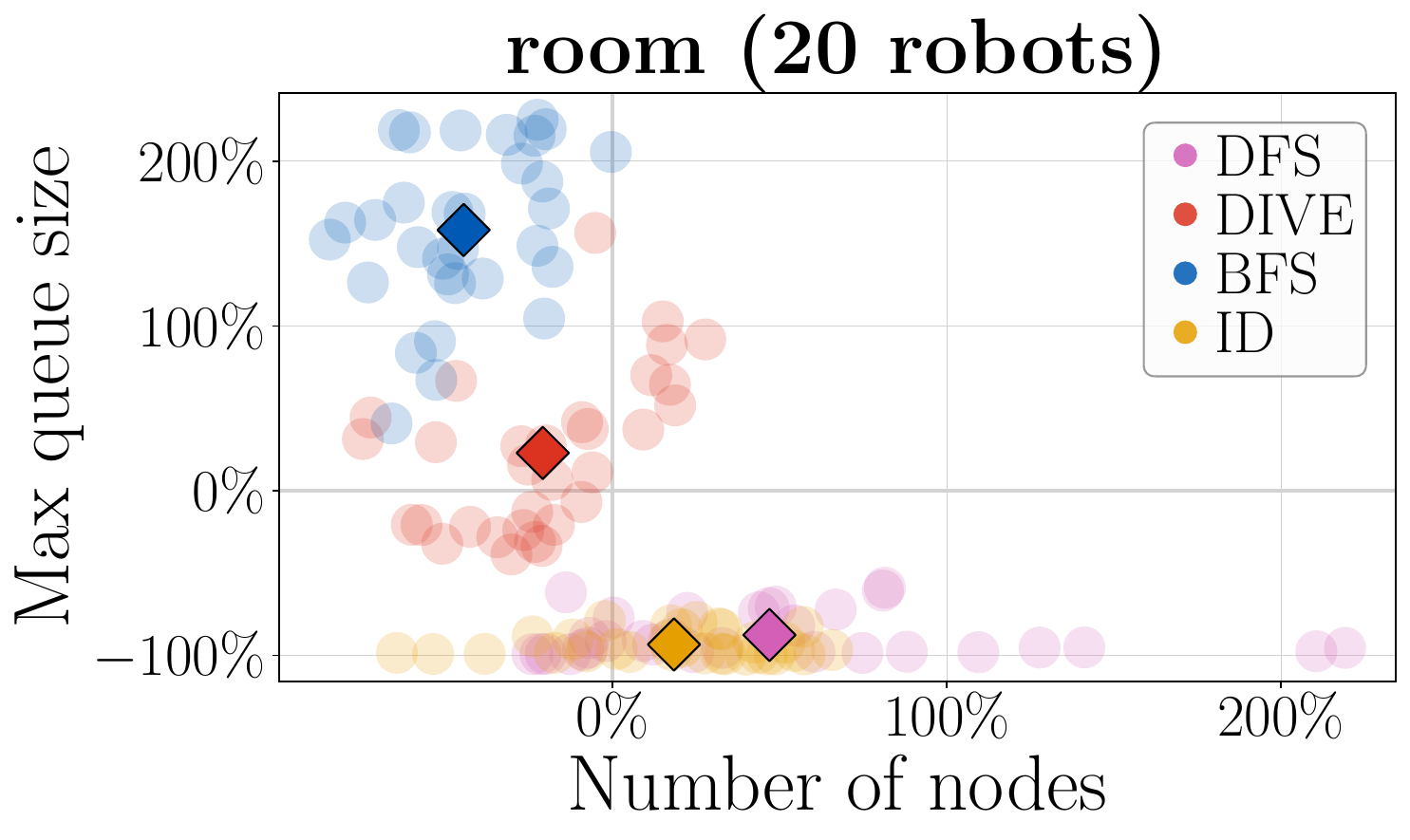}%
    }%
    \subfloat{%
        \includegraphics[width=0.5\linewidth]{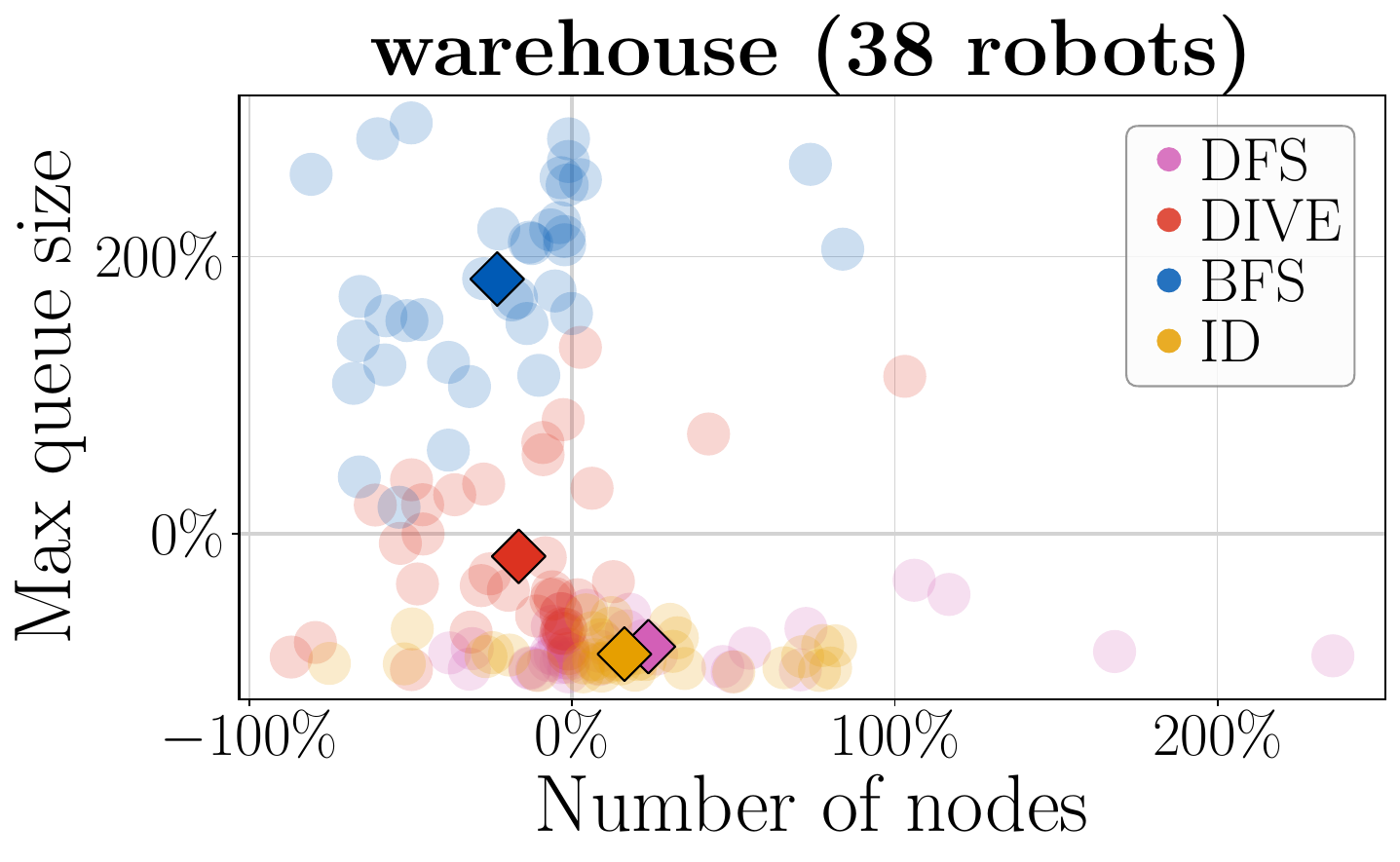}%
    }%
    \caption{Diversified search ablation: The \gls{acr:dfs} label denotes an incumbent-pruned depth-first policy without diversified best-bound restarts.}
    \label{fig:dfs}
\end{figure}

\begin{table}[tb]
\setlength{\abovecaptionskip}{\TableCaptionSkip}
\caption{Diversified search ablation: The \gls{acr:dfs} label denotes an incumbent-pruned depth-first policy without diversified best-bound restarts. Values are mean percentage differences relative to \gls{acr:dive} (lower is better).}
\label{tab:dfs}
\centering
\renewcommand{\arraystretch}{\TableVerticalStretch}
\setlength{\tabcolsep}{6pt}
\begin{NiceTabular}{l|rr|rr|rr}
    \toprule
     \Block{2-1}{Map\! (\# robots)} & \multicolumn{2}{c}{\# nodes} & \multicolumn{2}{c}{\# dive breaks} & \multicolumn{2}{c}{Max queue size} \\
     & DIVE & DFS & DIVE & DFS & DIVE & DFS \\
    \midrule
    empty\! (51) & 0\% & 3\% & 0\% & 6\% & 0\% & -16\%  \\
    random\! (41) & 0\% & 126\% & 0\% & 209\% & 0\% & -75\%  \\
    room\! (20) & 0\% & 157\% & 0\% & 244\% & 0\% & -88\%  \\
    warehouse\! (38) & 0\% & 132\% & 0\% & 187\% & 0\% & -69\%  \\
    \bottomrule
\end{NiceTabular}
\end{table}

\begin{table}[tb]
\setlength{\abovecaptionskip}{\TableCaptionSkip}
\caption{Number of nodes expanded in each zone as defined in \cref{sec:FormalizedNodeSelection_SolveCompletionCriteria}, normalized by the size of the dual zone. \gls{acr:dfs} denotes the incumbent-pruned depth-first policy without diversified best-bound restarts.}
\label{tab:dfs_zones}
\centering
\renewcommand{\arraystretch}{\TableVerticalStretch}
\setlength{\tabcolsep}{5pt}
\begin{NiceTabular}{l|rr|rr|rr}
    \toprule
     \Block{2-1}{Map\! (\# robots)} & \multicolumn{2}{c}{Dual} & \multicolumn{2}{c}{Optimal} & \multicolumn{2}{c}{Primal} \\
     &  DIVE & DFS & DIVE & DFS & DIVE & DFS \\
    \midrule
    empty\! (51) &       1.00 & 1.00 & 0.08 & 0.12   & 0.01 & 0.01  \\
    random\! (41) &   1.00 & 1.00 & 0.54 & 0.96 & 0.68 & 2.70  \\
    room\! (20) &      1.00 & 1.00 & 0.37 & 0.54 & 0.59 & 3.45  \\
    warehouse\! (38) &   1.00 & 1.00 & 0.75 & 1.15 & 0.53 & 2.77  \\
    \bottomrule
\end{NiceTabular}
\end{table}

The ablation shows that incumbent pruning alone is not enough. 
Without diversified search, the depth-first policy requires substantially more expanded nodes and dive breaks than \gls{acr:dive} on average across each of our studied maps.
The reason is visible in the zone accounting summarized in \cref{tab:dfs_zones}.
Work in the optimal zone and in particular the primal zone increases considerably  under the depth-first ablation.

Thus, \gls{acr:dive}'s robustness is not merely a consequence of cutting off dives once an incumbent exists. 
The best-bound restart after each dive is the mechanism that prevents repeated commitment to an unproductive primal-region branch while preserving regular progress on the dual certificate.

\subsection{Responsive enhancements}
\label{sec:Experiments_Responsive}

Finally, we evaluate \gls{acr:mcd}, introduced in \cref{sec:FormalizedNodeSelection_Feedback}, as a minimal example of responsive node selection. 
\Gls{acr:mcd} behaves like \gls{acr:dive} while the queue is below a soft threshold and switches to depth-first behavior once the queue reaches that threshold. 
The goal is not to tune a final solver variant, but to demonstrate that the node-selection trade-off can be closed-loop rather than fixed before the solve begins.

We test \gls{acr:mcd} on \texttt{room-32-32-4} and \texttt{warehouse-small}  with a soft queue threshold of 50 nodes\footnote{\gls{acr:mcd} and \gls{acr:dfs} produce nearly identical statistics for simple instances, motivating our focus on our two most difficult maps.}{}.
Respectively, the average \gls{acr:dive} maximum queue size was 1025 and 171 among the 29 and 33 runs where \gls{acr:dive} exceeded this threshold.
With \gls{acr:mcd}, all of those runs stayed within queue size 60, with the exception of one outlier across the two maps.

\begin{figure}[tb]
    \centering
    \subfloat{%
        \includegraphics[width=0.5\linewidth]{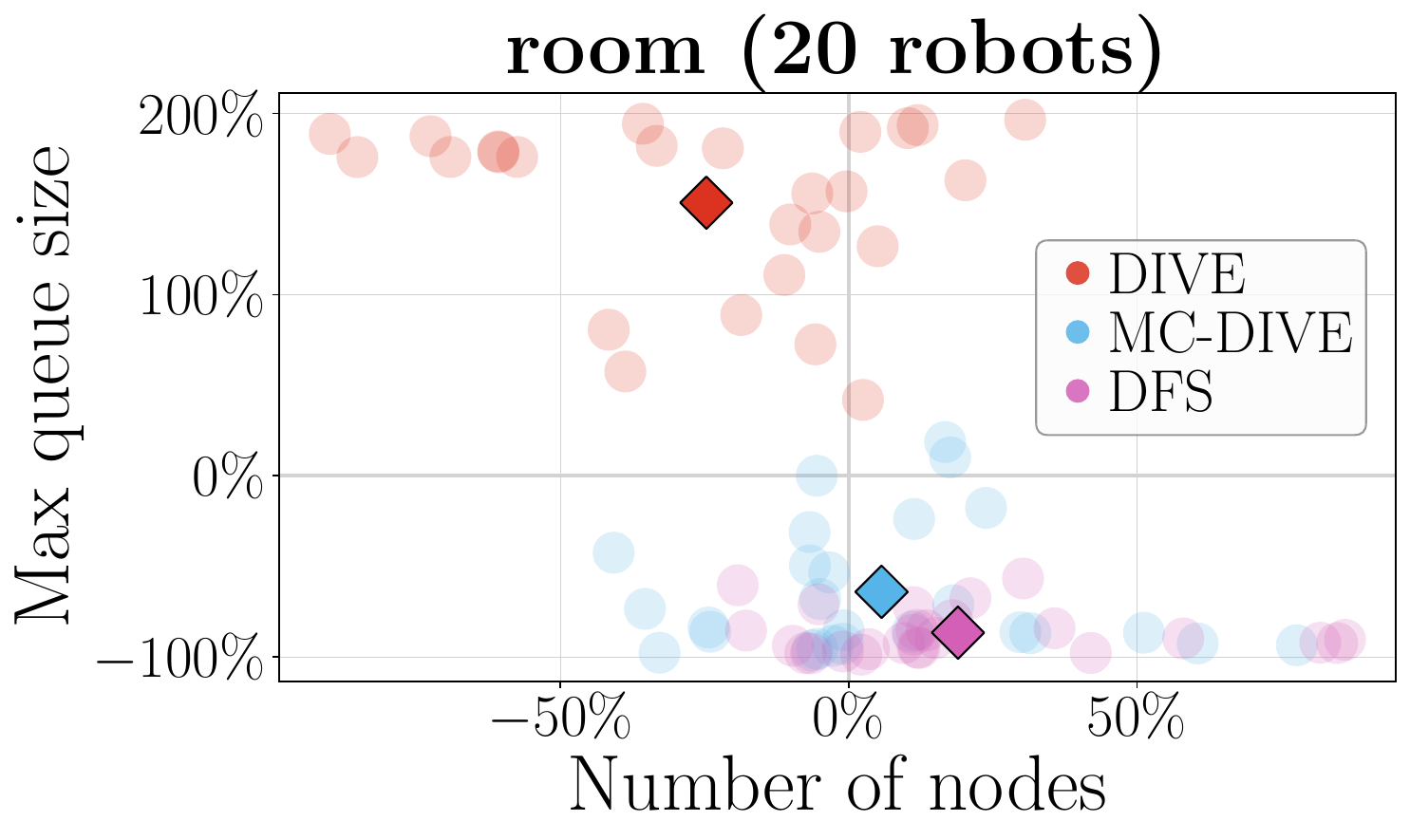}%
    }%
    \subfloat{%
        \includegraphics[width=0.5\linewidth]{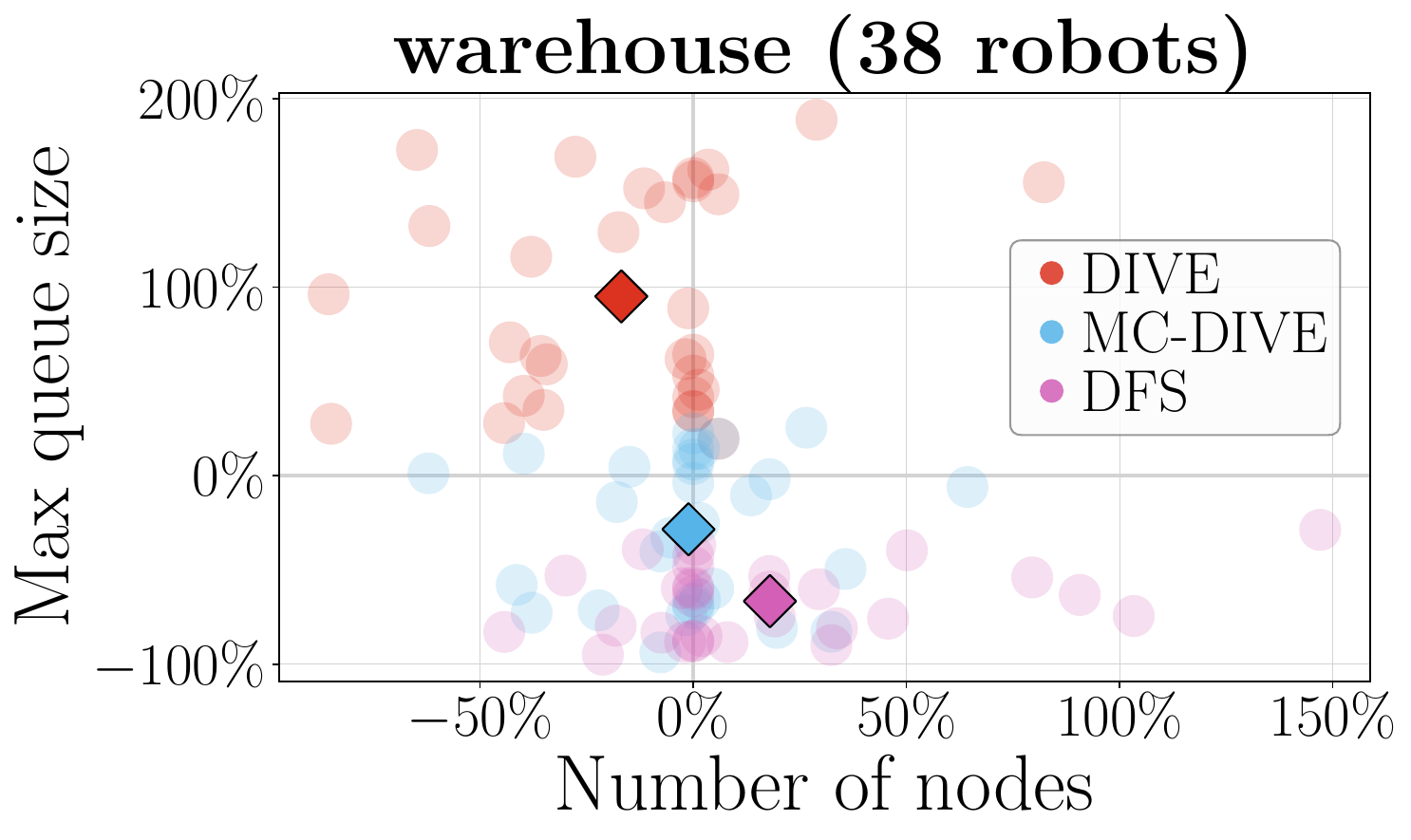}%
    }%
    \caption{Responsive queue control on \texttt{warehouse-small} and \texttt{room-32-32-4}. 
    \Gls{acr:mcd} trades some of \gls{acr:dive}'s dive continuity for substantially lower queue size, while remaining less conservative than always using \gls{acr:dfs}.}
    \label{fig:mcdive}
\end{figure}

\begin{table*}[tb]
\setlength{\abovecaptionskip}{\TableCaptionSkip}
\caption{Structural metrics of \gls{acr:mcd} and \gls{acr:dfs} as mean \% differences relative to \gls{acr:dive}. Lower is better for all three metrics.}
\label{tab:mcdive}
\centering
\renewcommand{\arraystretch}{\TableVerticalStretch}
\begin{NiceTabular}{lc|rrr|rrr|rrr}
    \toprule
     Map & \# robots & \multicolumn{3}{c}{\# nodes} & \multicolumn{3}{c}{\# dive breaks} & \multicolumn{3}{c}{Max queue size}\\
     &  & DIVE & MC-DIVE & DFS & DIVE & MC-DIVE & DFS & DIVE & MC-DIVE & DFS \\
    \midrule
    room-32-32-4 & 20 & 0\% & 145\% & 163\% & 0\%  & 235\% & 259\% & 0\% & -82\% & -93\% \\
    warehouse-small & 38 & 0\% & 53\% & 142\% & 0\%  & 100\% & 195\% & 0\% & -55\% & -79\% \\
    \bottomrule
\end{NiceTabular}
\end{table*}

\Cref{fig:mcdive,tab:mcdive} show the expected trade-off. 
Relative to \gls{acr:dive}, \gls{acr:mcd}'s reduced maximum queue size comes at the cost of more nodes and more dive breaks. 
Always using \gls{acr:dfs} reduces the queue further, but it is over-conservative, as it requires even more nodes and dive breaks than \gls{acr:dive}.
We naturally observe the greatest gains of \Gls{acr:mcd} over \gls{acr:dfs} on \texttt{warehouse-small} where the 50 node queue size limit is less restrictive than \texttt{room-32-32-4}.
\Gls{acr:mcd} behaves as intended, using \gls{acr:dive} when memory allows and becoming more depth-oriented only when the queue demands it.

\subsection{Experimental takeaways}
\label{sec:Experiments_Takeaways}

The experiments support the paper's central node-selection message. 
\gls{acr:dive} is not a uniformly better \gls{acr:bfs}, but a different point in the design space. 
Its extra nodes are the price of controlled primal-side exploration, and that price buys three things that \gls{acr:bfs} and \gls{acr:id} do not provide together, i.e., far fewer dive breaks, substantially lower queue size than \gls{acr:bfs}, and early incumbents with certified gaps. 
The ablation and \gls{acr:mcd} experiments further show that the two mechanisms emphasized in \cref{sec:Dive,sec:FormalizedNodeSelection}, diversified best-bound restarts and feedback, are active contributors to the observed behavior.

\section{Conclusions}
\label{sec:Conclusions}

This paper studied high-level node selection as a central design choice in exact \gls{acr:cbs}. 
The main message is that node selection is not only a device for reducing expanded nodes. 
It also determines memory pressure, parent-child continuity, incumbent discovery, and certificate progress. 
Separating node selection from the correctness-critical parts of \gls{acr:cbs} makes this trade-off explicit while preserving exactness.

\gls{acr:dive} demonstrates the value of this perspective. 
It alternates between global best-bound reanchoring and local depth-oriented diving, using incumbents to prune dives that can no longer improve the best solution found so far. 
The formal analysis shows why this creates a distinct operating point. 
Best-first search remains the natural policy for minimizing expanded nodes and advancing the frontier bound, iterative deepening remains the explicit-memory extreme, and \gls{acr:dive} targets dive continuity while preserving regular returns to the certificate frontier.

The experiments support this trade-off map on practical \gls{acr:mapf} instances. 
\gls{acr:dive} is not a universal replacement for best-first search or iterative deepening, and it is not intended to be one. 
Its value is that it provides capabilities that the standard exact-\gls{acr:cbs} policies do not provide together. 
It substantially reduces dive breaks, keeps a smaller queue than best-first search, exposes early incumbents with certified gaps, and can be stabilized by warm starts in dense regimes. 
These are precisely the capabilities a deployed robotic planner needs when it operates under an onboard memory budget, must react within a bounded replanning window, and may have to act on a feasible plan before a proof of optimality is available.
The ablation experiments further show that diversified best-bound restarts are essential, and the \gls{acr:mcd} experiment illustrates how node selection can respond to memory pressure without changing the exactness argument.

Several directions follow naturally. 
The first is to design responsive hybrids that move continuously between best-bound, depth-oriented, and \gls{acr:dive}-like behavior based on queue growth, incumbent quality, and bound progress.
The second is to learn such responses from families of related \gls{acr:mapf} instances while keeping the underlying pruning rules admissible. 
The third is to extend the analysis beyond perfect trees by using stochastic tree models or measured \gls{acr:cbs} search trees from real benchmarks. 
A final direction is to combine \gls{acr:dive} with stronger warm-start generators and state-of-the-art \gls{acr:cbs} enhancements, where the same node-selection principles can be used without altering the solver's low-level components.

\bibliographystyle{IEEEtran}  %
\bibliography{references}

\appendix
\label{Appendix}
\begin{proof}[Proof of \cref{prop:bnb_nodesel}]
Let $z^{\mathrm P}_t$ be the incumbent value after iteration $t$, and let
\begin{equation*}
    F_t := \{q\in Q : g(q)< z^{\mathrm P}_t\}
\end{equation*}
be the feasible solutions that would strictly improve the current incumbent. 
Let
\begin{equation*}
    U_t := \bigcup_{P\in \mathrm{OPEN}_t} (P\cap Q)
\end{equation*}
be the part of the original feasible set still covered by open nodes. 
We prove the invariant
\begin{equation*}
    F_t \subseteq U_t .
\end{equation*}
At initialization, $\mathrm{OPEN}_0=\{O\}$ and $Q\subseteq O$, so the invariant holds.

Consider an iteration that removes node $P$ from $\mathrm{OPEN}$. 
If $P$ is infeasible, then $P\cap Q=\emptyset$, and removing it cannot remove any member of $F_t$. 
If $\ell(P)\ge z^{\mathrm P}_t$, then every $q\in P\cap Q$ satisfies $g(q)\ge\ell(P)\ge z^{\mathrm P}_t$ by \eqref{eq:valid_lower_bound}. 
Thus, no point in $P\cap Q$ can belong to $F_t$, and incumbent pruning is safe.

Now suppose the relaxed optimizer is feasible, $s(P)\in Q$. 
Since $s(P)$ is the optimum of the relaxation over $P$, every feasible point in $P\cap Q$ has objective at least $g(s(P))$. 
If $s(P)$ improves the incumbent, the incumbent is updated to $g(s(P))$, and no other feasible point in $P$ can improve the new incumbent. 
If it does not improve the incumbent, no feasible point in $P$ can improve the old incumbent either. 
In both cases, the whole node $P$ can be fathomed without violating the invariant.

It remains to consider the branching case. 
The incumbent is unchanged. 
Any improving feasible solution outside $P$ remains covered by the other open nodes. 
Any improving feasible solution inside $P$ lies in $P\cap Q$ and is therefore contained in $L\cup R$ by \eqref{eq:bb_cover}. 
After replacing $P$ by its children, all improving feasible solutions remain covered.

The invariant therefore holds after every iteration, independently of the order in which nodes are selected. 
Because the induced search tree is finite and the policy always selects a node from $\mathrm{OPEN}$, the algorithm eventually terminates with $\mathrm{OPEN}=\emptyset$. 
At termination the invariant gives $F_T=\emptyset$. 
If an incumbent exists, no feasible solution has smaller objective value, so the incumbent is optimal. 
If no incumbent exists, then $z^{\mathrm P}_T=+\infty$ and $F_T=Q$, which implies $Q=\emptyset$ and the original problem is infeasible.
\end{proof}

\end{document}